\newcommand{\black}[1]{\textcolor{black}{#1}}
\pgfplotsset{compat=1.4}
\numberwithin{equation}{section}
\newcommand{\Sc}[1]{{\mathcal{#1}}}
\newcommand{\R}[1]{{\rm #1}}
\title{\black{Fast methods for denoising matrix completion formulations, with applications to robust seismic data interpolation. }}
\author{Aleksandr Aravkin \thanks{IBM T.J. Watson Research Center,
         Yorktown Heights, NY 10598, USA ({\tt saravkin@us.ibm.com}).}
        \and Rajiv Kumar \thanks{Department of Earth and Ocean Sciences, UBC,
        Vancouver, BC, Canada ({\tt rakumar@eos.ubc.ca} )}
        \and Hassan Mansour \thanks{Mitsubishi Electric Research Laboratories, Cambridge, MA, USA ({\tt mansour@merl.com})}
        \and Ben Recht \thanks{Electrical Engineering and Computer Sciences, UC Berkeley,
        Berkeley, CA, USA ({\tt brecht@berkeley.edu})}
        \and Felix J. Herrmann \thanks{Department of Earth and Ocean Sciences, UBC,   Vancouver, BC, Canada ({\tt fherrmann@eos.ubc.ca})}}
\begin{document} 

\maketitle

\begin{abstract} 
Recent SVD-free matrix factorization formulations have enabled 
rank minimization 
for systems with millions of rows and columns,  
paving  the way for matrix completion
in  extremely large-scale applications, such as seismic data interpolation. 

In this paper, we consider matrix completion formulations
designed to hit a target data-fitting error level provided by the user,
and propose an algorithm \black{called LR-BPDN} 
that is able to exploit factorized formulations
to solve the corresponding optimization problem. 
Since practitioners 
typically have strong prior knowledge about target error level, this 
innovation makes it easy to apply the algorithm in practice, 
leaving only the factor rank to be determined.  

Within the established framework, we propose two extensions that are highly relevant to 
solving practical challenges of data interpolation. 
First, we propose a weighted extension that allows 
known subspace information to improve 
the results of matrix completion formulations.
We show how this weighting can be used in the context of frequency 
continuation, an essential aspect to seismic data interpolation.  
{Second, we propose matrix completion formulations that are robust 
to large measurement errors in the available data.} 

We illustrate the advantages of \black{LR-BPDN} 
on collaborative filtering problem using the \black{ MovieLens 1M, 10M, and Netflix 100M datasets}. 
Then, we use the new method, along with its robust and subspace re-weighted 
 extensions, to obtain high-quality
reconstructions for large scale  seismic 
interpolation problems with real data, 
even in the presence of data contamination.

\end{abstract} 

\section{Introduction}

Sparsity- and rank-regularization have had significant impact in many areas
over the last several decades. 
Sparsity in certain transform domains has been exploited to 
solve underdetermined linear systems with applications to compressed sensing 
\cite{Donoho2006_CS,candes2006nos}, 
natural image denoising/inpainting 
\cite{MCA3-2005,Elad:08,Mansour:ICASSP10},
and seismic image processing \cite{Herrmann2008,neelamani2010esf,Herrmann11TRfcd,Mansour11TRssma}. 
Analogously, low-rank structure has been used to efficiently solve matrix completion problems, 
such as the Netflix Prize problem, along with many other applications, including control, 
system identification, signal processing, and combinatorial
optimization~\cite{Fazel:2002, RechtFazelParrilo2010,Candes2011-JACM}, 
and seismic data interpolation and denoising~\cite{oropeza:V25}.

Regularization formulations for both types of problems 
introduce a regularization functional of the decision 
variable, either by adding an explicit penalty  to the data-fitting 
term
\begin{equation}\tag{QP$_\lambda$}\label{QP}
\min_{x}  \rho(\mathcal{A} (x) - b) + \lambda ||x|| ,
\end{equation}
or by imposing constraints
\begin{equation}\tag{LASSO$_\tau$}\label{LASSO}
\min_{x} \rho(\mathcal{A} (x) - b) \quad \text{s.t. } ||x|| \leq \tau\;.
\end{equation}
In these formulations, $x$ may be either a matrix or a vector, $\|\cdot\|$ 
may be a sparsity or low-rank promoting penalty such as the $\ell_1$ norm $\|\cdot\|_1$ or the matrix nuclear norm $\|\cdot\|_*$, $\mathcal{A}$ may be any 
linear operator that predicts the observed data vector $b$ of size $p\times 1$, 
and $\rho(\cdot)$ is typically taken to be the 2-norm.
 
These approaches require the user to provide
regularization parameters whose values
are typically not known ahead of time, and otherwise may require
fitting or cross-validation procedures. 

The alternate formulation 
\begin{equation}\tag{BPDN$_\eta$}\label{BPDN}
\min_{x} ||x||  \quad \text{s.t. } \rho(\mathcal{A} (x) - b) \leq \eta.
\end{equation}
has been successfully
used for the sparse regularization of large scale systems~\cite{BergFriedlander:2008},  
and proposed for nuclear norm regularization~\cite{BergFriedlander:2011}.
This formulation requires the user to provide an acceptable error bound in
the data fitting domain~\eqref{BPDN}, and is preferable 
for many applications, especially when practitioners
know (or are able to estimate) an approximate data error level. 
\black{We refer to ~\eqref{BPDN},~\eqref{QP} and~\eqref{LASSO} as {\it regularization
formulations}, since all three limit the space of feasible solutions by considering the nuclear norm of the 
decision variable.}

A practical implementation of~\eqref{BPDN}
for large scale matrix completion problems is difficult because of 
the large size of the systems
of interest, which makes SVD-based approaches intractable. 
\black{For example, seismic inverse problems work with 4D data volumes, and
matricization of such data creates structures whose size is a bottleneck for standard low-rank 
interpolation approaches. }
Fortunately, a growing literature on factorization-based
rank optimization approaches has enabled
matrix completion formulations for~\eqref{QP} and~\eqref{LASSO}
approaches for extremely large-scale 
systems that avoids costly SVD computations
~\cite{Srebro2005,MaxNorm:NIPS2010,RechtRe:2011}.  
\black{These formulations are non-convex, and therefore do not have the same
convergence guarantees as convex formulations for low-rank factorization. In addition,
they require an {\it a priori} rank specification, adding a rank constraint to the original problem.   
Nonetheless, factorized formulations 
can be shown to avoid spurious local minima, so that if a local minimum is found, it will correspond 
to the global minimum in the convex formulation, provided the chosen factor rank was high enough.  
In addition, computational methods for factorized formulations are more efficient, mainly because
they can completely avoid SVD (or partial SVD) computations. }
In this paper, we extend the framework
of~\cite{BergFriedlander:2011} to incorporate matrix factorization 
ideas, enabling the~\eqref{BPDN} formulation for 
rank regularization of large scale problems, 
such as seismic data interpolation.

While formulations in~\cite{BergFriedlander:2008,BergFriedlander:2011} choose $\rho$ in~\eqref{BPDN} to be the quadratic penalty,  
recent extensions~\cite{AravkinBurkeFriedlander:2013}
allow more general penalties to be used. In particular, robust convex 
(see e.g.~\cite{Hub}) and nonconvex penalties (see e.g.~\cite{Lange1989, AravkinFHV:2012})
can be used to measure misfit error in the~\eqref{BPDN} formulation. 
\black{We incorporate these extensions into our framework, allowing matrix completion formulations that are robust to data contamination. }

Finally, subspace information can be used to inform the matrix completion problem,
analogously to how partial support information can be used to improve the sparse recovery problem~\cite{Mansour:TIT12}.
This idea is especially important for seismic interpolation, where {\it frequency continuation} is used. 
We show that subspace information can be incorporated into the proposed framework using reweighting, 
\black{and that the resulting approach can improve recovery SNR in a frequency continuation setting. Specifically, 
subspace information obtained at lower frequencies can be incorporated into reweighted formulations 
for recovering data at higher frequencies.} 

\textcolor{black}
{
To summarize, we design factorization-based formulations and algorithms for matrix completion that 
\begin{enumerate}
\item Achieve a specified target misfit level provided by the user (i.e. solve~\eqref{BPDN}).
\item Achieve recovery in spite of severe data contamination using robust cost functions $\rho$ in~\eqref{BPDN}
\item Incorporate subspace information into the inversion using re-weighting. 
\end{enumerate}
}

The paper proceeds as follows. In section~\ref{sec:MathematicalFormulations},
we briefly discuss and compare the formulations~\eqref{QP}, \eqref{LASSO}, and~\eqref{BPDN}. 
We also review the SPG$\ell_1$ algorithm~\cite{BergFriedlander:2008}
to solve~\eqref{BPDN}, along with recent extensions for~\eqref{BPDN} formulations 
developed in~\cite{AravkinBurkeFriedlander:2013}.
In section~\ref{sec:RankOptimization},
we formulate the convex relaxation for the rank optimization problem,
and review SVD-free factorization methods. 
In section~\ref{sec:BurMont}, we extend analysis from~\cite{Burer03localminima}
to characterize the relationship between local minima of rank-optimization problems
and their factorized counterparts \black{in a general setting  that captures all formulations of interest here}.
In section~\ref{sec:NewAlgorithm}, we propose an algorithm that combines matrix factorization
with the approach developed by~\cite{BergFriedlander:2008,BergFriedlander:2011,AravkinBurkeFriedlander:2013}. 
We develop the robust extensions in section~\ref{sec:Robust}, 
and reweighting extensions in section~\ref{sec:reweighting}.
Numerical results for both the Netflix Prize problem and for 
seismic trace interpolation of real data are presented in section~\ref{sec:NumericalResults}.

\section{Regularization formulations}
\label{sec:MathematicalFormulations}

Each of the three formulations~\eqref{QP},~\eqref{LASSO}, and~\eqref{BPDN} 
controls the tradeoff between data fitting and a regularization
functional using a regularization parameter. 
However, there are important differences between them. 

From an optimization perspective, most algorithms solve~\eqref{QP} or~\eqref{LASSO},
together with a continuation strategy to modify $\tau$ or $\lambda$, see e.g., \cite{GPSR:2007,BergFriedlander:2008}.
\black{There are also a variety of methods to determine optimal values of the parameters; 
see e.g.~\cite{Giryes2010} and the references within.} 
However, from a modeling perspective~\eqref{BPDN} has a significant advantage, 
since the $\eta$ parameter 
can be directly interpreted as a {\it noise floor},
or a threshold beyond which noise is commensurate with the data. 
In many applications, such as seismic data interpolation,
scientists have good prior knowledge of the noise floor.
\black{In the absence of such knowledge, one still wants an algorithm that 
returns a reasonable solution given a fixed computational budget, 
and some formulations for solving~\eqref{BPDN} satisfy this requirement. }
 
van den Berg and Friedlander~\cite{BergFriedlander:2008} proposed the SPG$\ell_1$ algorithm
for  optimizing~\eqref{BPDN} that captures the features discussed above. 
Their approach solves~\eqref{BPDN} using a series of inexact solutions to~\eqref{LASSO}. 
The bridge between these problems is provided by the {\it value function}  $v : \mathbb{R}\rightarrow \mathbb{R}$
\begin{equation}\label{value}
v(\tau) = \min_x \rho(\mathcal{A}(x) - b) \quad \text{s.t. } \|x\| \leq \tau\;,
\end{equation}
where the particular choice of $\rho(\cdot) = \|\cdot\|^2$ was made in~\cite{BergFriedlander:2008,BergFriedlander:2011}.
The graph of $v(\tau)$ is often called the {\it Pareto curve}.
The~\eqref{BPDN} problem can be solved by finding the root of $v(\tau) = \eta$ using Newton's method:  
\begin{equation}\label{valueNewton}
\tau^{k+1} = \tau^k - \frac{v(\tau) - \eta}{v'(\tau)}\;,
\end{equation}
and the quantities $v(\tau)$ and $v'(\tau)$ can be approximated by solving~\eqref{LASSO} problems. 
In the context of sparsity optimization,~\eqref{BPDN} and~\eqref{LASSO} 
are known to be equivalent for certain values of parameters $\tau$ and $\eta$.
Recently, these results were extended to a much  
broader class of formulations (see~\cite[Theorem 2.1]{AravkinBurkeFriedlander:2013}). 
\black{Indeed, convexity of $\rho$ is not required for this theorem to hold, and instead {\it activity} of the constraint 
at the solution plays a key role.
The main hypothesis requires that solutions $\overline x$ only exist where the constraint is active, i.e.
$\rho(b - \mathcal A(\overline x)) = \sigma$, and $\|\overline x\| = \tau$.} 

For any $\rho$, $v(\tau)$ is non-increasing, since larger $\tau$ 
allow a bigger feasible set.   
For any convex $\rho$ in~\eqref{value}, 
$v(\tau)$ is convex by inf-projection~\cite[Proposition 2.22]{RTRW}. 
When $\rho$ is also differentiable, it follows from~\cite[Theorem 5.2]{AravkinBurkeFriedlander:2013}
that $v(\tau)$ is differentiable, with derivative given in closed form by  
\begin{equation}\label{value_fcn_der}
v'(\tau) = -\|\mathcal{A}^* \nabla \rho(b - \mathcal{A}\bar x)\|_d\;,
\end{equation}
where $\mathcal{A}^*$ is the adjoint to the operator $\mathcal{A}$, $\|\cdot\|_d$
is the dual norm to $\|\cdot\|$, and $\bar x$ solves~\ref{LASSO}.
For example, when the norm $\|\cdot\|$ in~\eqref{value} is the 1-norm, the dual norm 
is the infinity norm, and~\eqref{value_fcn_der} evaluates to the maximum absolute 
entry of the gradient. In the matrix case, $\|\cdot\|$ is typically taken to be the nuclear norm,
and then $\|\cdot\|_d$ is the spectral norm, so~\eqref{value_fcn_der} evaluates to the maximum singular
value of $\mathcal{A}^* \nabla \rho(r)$.

To design effective optimization methods, one has to be able to 
evaluate $v(\tau)$, and to compute the dual norm $\|\cdot\|_d$.
Evaluating $v(\tau)$ requires solving a sequence of 
optimization problems~\eqref{value}, 
for the sequence of $\tau$ given by~\eqref{valueNewton}. 

\textcolor{black}
{
A key idea that makes the approach of~\cite{BergFriedlander:2008} very useful in practice 
is solving LASSO problems inexactly, with increasing precision as the overarching 
Newton's method proceeds. The net computation is therefore much smaller than what would be 
required if one solved a set of LASSO problems to a pre-specified tolerance. 
}
For large scale systems, the method of choice is typically a first-order method, such 
as spectral projected gradient \textcolor{black}{where after taking a step along the negative gradient of the mismatch function $\rho(\mathcal{A}(x) - b)$, the iterate is projected onto the norm ball $\|\cdot\| \leq \tau$}. Fast projection is therefore a necessary 
requirement for tractable implementation, since it is used 
in every iteration of every subproblem. 

With the inexact strategy, the convergence rate of the Newton iteration~\eqref{valueNewton} 
may depend on the conditioning of the linear operator $\Sc{A}$~\cite[Theorem 3.1]{BergFriedlander:2008}.  
For well-conditioned problems, in practice one can often observe only a few (6-10)~\eqref{LASSO} problems
to find the solution for~\eqref{BPDN} for a given $\eta$.
As the optimization proceeds,~\eqref{LASSO} problems for 
larger $\tau$ warm-start from the solution corresponding to 
the previous $\tau$.

\section{Factorization approach to rank optimization}
\label{sec:RankOptimization}

We now consider~\eqref{BPDN} in the specific context of rank minimization. In this setting, 
$\|\cdot\|$ is taken to be the nuclear norm, where for a matrix $X \in \mathbb{R}^{n\times m}$, $\left\| X\right\|_* = \|\sigma\|_1$, 
where $\sigma$ is the vector of singular values. 
The dual norm in this case is $\|\sigma\|_\infty$, which is relatively easy to find
for very large systems. 

\black{Unfortunately, solving the optimization problem in~\eqref{value} is much more difficult. 
For the large system case, this requires repeatedly projecting onto the set 
$\left\| X\right\|_* \leq \tau$, which which means repeated SVD or partial SVD computations. 
This is not feasible for large systems.}


Factorization-based approaches allow 
matrix completion for extremely large-scale 
systems by avoiding costly SVD computations~\cite{Srebro2005,Lee2010,RechtRe:2011}.  
The main idea is to parametrize the matrix $X$ as a product, 
\begin{equation}\label{product}
X = LR^T\;,
\end{equation}
and to optimize over the factors $L,R$. If $X \in \mathbb{R}^{n\times m}$, then $L \in \mathbb{R}^{n\times k}$, 
and $R \in \mathbb{R}^{m \times k}$. The decision variable therefore has dimension $k(n +m)$, rather than $nm$; giving tremendous savings when $k \ll m,n$.
\textcolor{black}{
The asymptotic computational complexity of factorization approaches is the same as that of partial SVDs, as both methods are dominated by an O(nmk) cost; 
the former having to form $X = LR^T$, and the latter computing partial SVDs, at every iteration. However, in practice the former operation is much simpler 
than the latter, and factorization methods outperform methods based on partial SVDs.  In addition, factorization methods keep an explicit bound on the 
rank of all iterates, which might otherwise oscillate, increasing the computational burden.  }

Aside from the considerable savings in the size of the decision variable, the factorization 
technique gives an additional advantage: it allows the use of factorization identities to 
make the projection problem in~\eqref{LASSO} trivial, entirely avoiding the SVD. 

For the nuclear norm, we have~\cite{Srebro2005}
\begin{equation}\label{nucFact}
\|X\|_* = \inf_{X = LR^T} \frac{1}{2}\left\|\begin{bmatrix}L \\ R \end{bmatrix}\right\|_F^2\;.
\end{equation}

Working with a particular representation $X = LR^T$, therefore, guarantees 
that 
\begin{equation}\label{nucInequality}
\|X\|_* = \|LR^T\|_* \leq  \frac{1}{2}\left\|\begin{bmatrix}L \\ R \end{bmatrix}\right\|_F^2\;.
\end{equation}

The nuclear norm is not the only formulation that can be factorized. \cite{MaxNorm:NIPS2010} have recently introduced the 
max norm, which is closely related to the nuclear norm and has been successfully used for matrix completion. 
%
%
%
%

\section{\black{Local minima correspondence between factorized and convex formulations}}
\label{sec:BurMont}

All of the algorithms we propose for matrix completion are based on the factorization approach 
described above.  Even though the change of variables $X = LR^T$ makes the 
problem nonconvex, \black{it turns out that for a surprisingly general class of problems, this change of 
variables does not introduce any extraneous local minima, and in particular any local minimum
of the factorized (non-convex) problem corresponds to a local (and hence global) minimum of the corresponding un-factorized convex problem. }
This result appeared in~\cite[Proposition 2.3]{Burer03localminima} in the context of \black{semidefinite programming (SDP)}; 
however, it holds in general, as the authors 
point out~\cite[p. 431]{Burer03localminima}. 

Here, we state the result for a broad class of problems, which is
general enough to capture all of our formulations of interest. \black{In particular, 
the continuity of the objective function is the main hypothesis required for this correspondence.
It is worthwhile to emphasize this, since in Section~\ref{sec:Robust}, we consider smooth non-convex
robust misfit penalties for matrix completion, which give impressive results (see figure~\ref{fig:9}). }

For completeness, we provide a proof in the appendix. 

\begin{theorem}[General Factorization Theorem]
\label{thm:genFact}
Consider an optimization problem of the form 
\begin{equation}
\label{generalX}
\begin{aligned}
\min_{Z \succeq 0} &\quad f(Z) \\
\text{s.t.} & \quad g_i(Z) \leq 0 \quad i = 1, \dots, n \\
 &\quad h_j(Z) = 0 \quad j = 1, \dots, m \\
 & \quad \mathrm{rank}(Z) \leq r,
\end{aligned}
\end{equation}
where $Z \in \mathbb{R}^{n\times n}$ is positive semidefinite, and $f, g_i, h_i$ are continuous. 
Using the change of variable $Z = SS^T$, take $S \in \mathbb{R}^{n\times r}$, and consider the problem   
\begin{equation}
\label{generalXfact}
\begin{aligned}
\min_{S} &\quad f(SS^T) \\
\text{s.t.} & \quad g_i(SS^T) \leq 0 \quad i = 1, \dots, n \\
 &\quad h_j(SS^T) = 0 \quad j = 1, \dots, m 
\end{aligned}
\end{equation}
Let $\bar Z = \bar S \bar S^T$, where $\bar Z$ is feasible for~\eqref{generalX}. Then 
$\bar Z$ is a local minimum of~\eqref{generalX} if and only if $\bar S$ is a local minimum of~\eqref{generalXfact}. 
\end{theorem}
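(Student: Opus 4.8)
The plan is to prove the two implications separately, and in each direction to exploit the fact that the map $S \mapsto SS^T$ is continuous and, crucially, \emph{locally} surjective onto a neighborhood (within the feasible set) of any $\bar Z$ of rank at most $r$. The key structural observation is that near a fixed feasible point $\bar Z = \bar S\bar S^T$, the rank constraint $\mathrm{rank}(Z)\le r$ in~\eqref{generalX} is, in effect, automatically satisfied by matrices $Z$ that are close to $\bar Z$ and are of the form $SS^T$ for $S$ close to $\bar S$ — since $SS^T$ has rank at most $r$ by construction. So the image of a small ball around $\bar S$ under $S\mapsto SS^T$ lands inside the feasible set of~\eqref{generalX}, and the objective values agree: $f(SS^T)$ matches $f(Z)$ along this image.

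First I would prove the ``only if'' direction. Suppose $\bar Z$ is a local minimum of~\eqref{generalX}. Let $S$ be any point feasible for~\eqref{generalXfact} and close to $\bar S$; set $Z = SS^T$. Then $Z$ is feasible for~\eqref{generalX} — the constraints $g_i(SS^T)\le 0$, $h_j(SS^T)=0$ hold by feasibility of $S$, $SS^T\succeq 0$ trivially, and $\mathrm{rank}(SS^T)\le r$ since $S$ has $r$ columns. Moreover $Z$ is close to $\bar Z$ because $S\mapsto SS^T$ is continuous. Hence, by local optimality of $\bar Z$, $f(SS^T)=f(Z)\ge f(\bar Z)=f(\bar S\bar S^T)$, which is exactly local optimality of $\bar S$ for~\eqref{generalXfact}.

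The ``if'' direction is the one requiring real work, and I expect it to be the main obstacle. Suppose $\bar S$ is a local minimum of~\eqref{generalXfact} and, for contradiction, that $\bar Z$ is not a local minimum of~\eqref{generalX}: there is a sequence $Z_j\to\bar Z$ of feasible points with $f(Z_j)<f(\bar Z)$. Each $Z_j$ satisfies $Z_j\succeq 0$ and $\mathrm{rank}(Z_j)\le r$, so it admits a factorization $Z_j = S_j S_j^T$ with $S_j\in\mathbb{R}^{n\times r}$. The difficulty is that such a square-root factorization is not continuous in general, so it is not immediate that $S_j\to\bar S$ (or even that $\{S_j\}$ stays bounded). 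The remedy is a careful choice of the factor: using the polar decomposition / orthogonal freedom $S_j \mapsto S_j Q_j$ with $Q_j$ orthogonal, one can select $S_j$ to be, say, the factor obtained from the eigendecomposition of $Z_j$ with columns ordered by eigenvalue and phases aligned to $\bar S$, and then invoke continuity of eigenvalues (and of eigenvectors on the non-degenerate part) together with a compactness argument on the orthogonal group to pass to a subsequence along which $S_j \to \bar S Q$ for some orthogonal $Q$; replacing $\bar S$ by $\bar S Q$ (which is again a local minimizer with the same objective, since $(\bar SQ)(\bar SQ)^T = \bar Z$) we may assume $S_j\to\bar S$. Care is needed when $\bar Z$ has repeated or zero eigenvalues, where eigenvectors are not unique; there one works within the relevant eigenspaces, or appeals to the explicit construction in~\cite{Burer03localminima}. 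Once $S_j\to\bar S$ is secured, each $S_j$ is feasible for~\eqref{generalXfact} and $f(S_jS_j^T)=f(Z_j)<f(\bar Z)=f(\bar S\bar S^T)$, contradicting local optimality of $\bar S$. This completes the argument; the delicate point, and the crux of the whole proof, is establishing the existence of a sequence of factors $S_j$ of the perturbed matrices $Z_j$ that converges to (an orthogonal image of) $\bar S$.
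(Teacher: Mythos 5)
Your proposal is correct and follows essentially the same route as the paper: the easy direction uses continuity of $S\mapsto SS^T$ plus the fact that $SS^T$ automatically satisfies the rank and positivity constraints, and the hard direction is a contrapositive argument that factors the approximating sequence $Z_j=S_jS_j^T$, extracts a convergent subsequence of factors, and uses the orthogonal equivalence $\tilde S=\bar S Q$ together with invariance of local minimality under $S\mapsto SQ$. The only remark worth making is that your eigendecomposition/phase-alignment construction is unnecessary: since $\|S_j\|_F^2=\mathrm{Tr}(Z_j)$ is bounded, \emph{any} choice of factors $S_j$ admits a convergent subsequence, whose limit $\tilde S$ automatically satisfies $\tilde S\tilde S^T=\bar Z$ and is therefore an orthogonal image of $\bar S$ by the Burer--Monteiro lemma, so no eigenvector bookkeeping is needed.
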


\black{At first glance, Theorem~\ref{thm:genFact} seems restrictive to apply to a recovery problem for a generic $X$, 
since it is formulated in terms of a PSD variable $Z$. 
However, we show that all of the formulations of interest can be expressed this way, due to the SDP characterization of the nuclear norm. 
}

It was shown in \cite[Sec. 2]{RechtFazelParrilo2010} that the nuclear norm admits a semi-definite programming (SDP) formulation. 
Given a matrix $X \in \mathbb{R}^{n \times m}$, we can \black{characterize the nuclear norm $\|X\|_*$ in terms of an auxiliary matrix positive 
semidefinite marix} $Z \in\mathbb{R}^{(n+m)\times(n+m)}$
\begin{equation}\label{eq:SDP}
	\begin{array}{l}
	\|X\|_*= \min\limits_{Z \succeq 0 } \frac{1}{2}\mathrm{Tr}(Z)\\
		\textrm{subject to } Z_{1,2} = Z_{2,1}^T = X\;,
		\end{array}
\end{equation}
where $Z_{1,2}$ is the upper right $n \times m$ block of $Z$, and $Z_{2,1}$ is the lower left $m \times n$ block. More precisely, the matrix $Z$ is a symmetric positive semidefinite matrix having the structure
\begin{equation}\label{eq:Z}
	Z = 
	\begin{bmatrix} L\\R \end{bmatrix}\begin{bmatrix} L^T & R^T \end{bmatrix}
	=	
	\left[
	\begin{array}{cc}
		LL^T & X \\
		X^T & RR^T
	\end{array}
	\right],
\end{equation}
where $L$ and $R$ have the same rank as $X$, and $\mathrm{Tr}(Z) = \|L\|_F^2 + \|R\|_F^2$.

\black{Using characterization~\eqref{eq:SDP}-\eqref{eq:Z}, }
we can show that a broad class of formulations of interest in this paper 
are in fact problems in the class characterized by Theorem~\ref{thm:genFact}. 

\begin{corollary}[General Matrix Lasso]
\label{cor:genLasso}
Any optimization problem of the form 
\begin{equation}
\label{genLasso}
\begin{aligned}
\min_{X} & \quad f(X)\\
\text{s.t.} & \quad \|X\|_* \leq \tau \\
& \rank(X) \leq r
\end{aligned}
\end{equation}
\black{where $f$ is continuous} has an equivalent problem in the class of problems~\eqref{generalX} characterized by Theorem~\ref{thm:genFact}.
\end{corollary}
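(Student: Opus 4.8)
The plan is to lift the matrix variable $X$ to a positive semidefinite matrix via the SDP characterization~\eqref{eq:SDP} of the nuclear norm, and to check that the lifted problem matches the template~\eqref{generalX}. Let $Z \in \mathbb{R}^{(n+m)\times(n+m)}$ with $Z \succeq 0$, write $Z_{1,2}$ for the upper-right $n\times m$ block, and consider
\begin{equation}
\label{liftedGenLasso}
\begin{aligned}
\min_{Z \succeq 0} &\quad f(Z_{1,2}) \\
\text{s.t.} &\quad \frac{1}{2}\mathrm{Tr}(Z) \leq \tau .
\end{aligned}
\end{equation}
First I would verify that~\eqref{liftedGenLasso} is an instance of~\eqref{generalX}: the objective $Z \mapsto f(Z_{1,2})$ is continuous, being the composition of the linear block-extraction map with the continuous $f$; there is one continuous (indeed linear) inequality constraint $g_1(Z) = \frac{1}{2}\mathrm{Tr}(Z) - \tau$; there are no equality constraints, since symmetry of $Z$ is already encoded in $Z \succeq 0$; and the rank bound in~\eqref{generalX} is taken large enough to be vacuous, say $r = n+m$, so every $Z \succeq 0$ of size $n+m$ is admissible and factors as $Z = SS^T$ with $S \in \mathbb{R}^{(n+m)\times r}$.

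\textbf{Equivalence of the two problems.} Next I would match feasible points and objective values in both directions. If $X$ is feasible for~\eqref{genLasso}, take a reduced SVD $X = U\Sigma V^T$ and set $Z = \left[\begin{smallmatrix} U \\ V \end{smallmatrix}\right]\Sigma\left[\begin{smallmatrix} U^T & V^T \end{smallmatrix}\right] = \left[\begin{smallmatrix} U\Sigma U^T & X \\ X^T & V\Sigma V^T \end{smallmatrix}\right]$; this $Z$ is positive semidefinite, has $Z_{1,2} = X$ and $\frac{1}{2}\mathrm{Tr}(Z) = \mathrm{Tr}(\Sigma) = \|X\|_* \leq \tau$, and the same objective value $f(Z_{1,2}) = f(X)$. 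Conversely, if $Z$ is feasible for~\eqref{liftedGenLasso}, then by~\eqref{eq:SDP} the matrix $X := Z_{1,2}$ satisfies $\|X\|_* \leq \frac{1}{2}\mathrm{Tr}(Z) \leq \tau$, so it is feasible for~\eqref{genLasso} with the same objective value. Hence the two problems share their optimal value and their global minimizers correspond under the SVD-lift $X \mapsto Z$ and the projection $Z \mapsto Z_{1,2}$. One direction for local minimizers is then immediate: if $\bar X$ is a local minimizer of~\eqref{genLasso}, its SVD-lift $\bar Z$ is a local minimizer of~\eqref{liftedGenLasso}, since any feasible $Z$ near $\bar Z$ has $Z_{1,2}$ near $\bar X$ (block extraction is continuous) and feasible for~\eqref{genLasso}, whence $f(Z_{1,2}) \geq f(\bar X) = f(\bar Z_{1,2})$.

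\textbf{The main obstacle.} The delicate step, which a fully careful treatment must address if one wants the correspondence of local (not merely global) minimizers, is the converse: a local minimizer $\bar Z$ of~\eqref{liftedGenLasso} projects to a local minimizer $\bar X := \bar Z_{1,2}$ of~\eqref{genLasso}. One cannot simply pull a nearby feasible $X$ back to a nearby feasible $Z$, because the lift is far from unique; instead, for each feasible $X$ close to $\bar X$ one must build a positive semidefinite $Z'$ close to $\bar Z$ with $Z'_{1,2} = X$ and $\frac{1}{2}\mathrm{Tr}(Z') \leq \tau$, and then apply local optimality of $\bar Z$. I would split on whether the trace constraint is slack at $\bar Z$. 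If $\frac{1}{2}\mathrm{Tr}(\bar Z) < \tau$, then $Z' = \bar Z + \varepsilon I + \left[\begin{smallmatrix} 0 & X-\bar X \\ (X-\bar X)^T & 0 \end{smallmatrix}\right]$, with $\varepsilon$ a suitable multiple of $\|X - \bar X\|$, is positive semidefinite (the off-diagonal perturbation has operator norm $\|X-\bar X\|$, dominated by $\varepsilon$), still obeys the trace bound for $X$ near $\bar X$, and has $Z'_{1,2} = X$. When $\frac{1}{2}\mathrm{Tr}(\bar Z) = \tau$ this crude regularization overspends the trace budget, and I would instead exploit that the minimal-trace lift $Z^{\star}(X) = \left[\begin{smallmatrix} (XX^T)^{1/2} & X \\ X^T & (X^TX)^{1/2} \end{smallmatrix}\right]$ is a \emph{continuous} function of $X$ (the matrix square root is continuous on the positive semidefinite cone): if $\bar Z$ is itself a minimal-trace lift of $\bar X$ the argument closes immediately, and otherwise one uses the slack then available in the diagonal blocks of $\bar Z$ to construct $Z'$. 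With this case analysis done, combining the corollary with Theorem~\ref{thm:genFact} applied to~\eqref{liftedGenLasso} transfers the correspondence of local minima down to the SVD-free factorized formulation $\min_{L,R} f(LR^T)$ subject to $\frac{1}{2}(\|L\|_F^2 + \|R\|_F^2) \leq \tau$.
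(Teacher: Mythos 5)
Your proof is correct and rests on exactly the same device as the paper: lift $X$ to a positive semidefinite $Z$ via the SDP characterization~\eqref{eq:SDP} of the nuclear norm and read off an instance of~\eqref{generalX} with a single trace constraint and a vacuous rank bound. The paper's entire proof is that one-line rewriting (and its displayed constraint $\mathrm{Tr}(Z)\leq\tau$ even drops the factor $\tfrac{1}{2}$ that you correctly retain), so everything you add --- the two-way matching of feasible points through the SVD lift, and especially the question of whether a local minimizer of the lifted problem projects to a local minimizer in $X$ --- goes beyond what the paper records. That last point is not pedantry: the downstream claim that local minima of the factorized subproblem correspond to local minima of the (rank-constrained) LASSO in $X$ needs precisely that direction, and your construction settles it when the trace constraint is slack; only the tight-constraint, non-minimal-lift sub-case remains a sketch rather than a proof.
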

\begin{proof}
\black{Using~\eqref{eq:SDP}, write~\eqref{genLasso} as 
\begin{equation}
\label{genLassoRef}
\begin{aligned}
\min_{Z\geq 0} & \quad f(\mathcal{R}(Z))\\
\text{s.t.} &  \quad \mathrm{Tr}(Z) \leq \tau\\
&\rank(Z) \leq r,
\end{aligned}
\end{equation}
where $\mathcal{R}(Z)$ extracts the upper right $n\times m$ block of $Z$. It is clear that if $\rank(Z) \leq r$, then 
$\rank(X) \leq r$, so every solution feasible for the problem in $Z$ is feasible for the problem in $X$ by~\eqref{eq:SDP}.
On the other hand, we can use the SVD of any matrix $X$ of rank $r$ to write $X = LR^T$, with $\rank(L) = \rank(R) = r$, and then the matrix 
$Z$ in~\eqref{eq:Z}
has rank $r$, contains $X$ in its upper right hand corner, and has as its trace the nuclear norm of $X$. 
In particular, if $X = U\Sigma V^T$, we can use $L = U\sqrt{\Sigma}$, and $R = V\sqrt{\Sigma}$ to get this representation. 
Therefore, every feasible point for the $X$ problem has a corresponding $Z$. 
}
\end{proof}


\section{\black{LR-BPDN} Algorithm}
\label{sec:NewAlgorithm}

The factorized formulations in the previous section have been used 
to design several algorithms for large scale matrix completion and rank 
minimization \cite{MaxNorm:NIPS2010,RechtRe:2011}. However, all of these formulations take the form~\eqref{QP} or~\eqref{LASSO}. \black{The~\eqref{LASSO} formulation enjoys a natural relaxation interpretation, see e.g.~\cite{HerrmannFriedlanderYilmaz:2012}; on the other hand, a lot of work has focused on methods for $\lambda$-selection in~\eqref{QP} formulations, see e.g.~\cite{Giryes2010}.
However,  both formulations require some identification procedure of the parameters $\lambda$ and $\tau$.}
 
Instead, we propose to use the factorized formulations to solve the~\eqref{BPDN} problem by traversing the Pareto curve of the nuclear norm minimization problem. In particular, we integrate the factorization procedure into the SPG$\ell_1$ framework, which allows to find the minimum rank solution by solving a sequence of factorized~\eqref{LASSO} 
subproblems~\eqref{subproblem}. The cost of solving the factorized~\eqref{LASSO}  subproblems is relatively cheap and the resulting algorithm takes advantage of the inexact 
subproblem strategy in~\cite{BergFriedlander:2008}.

For the classic nuclear norm  minimization problem, we define 
\begin{equation}
\label{TrueValue}
v(\tau) = \min_X \|\mathcal{A}(X) - b\|_2^2 \quad \text{s.t. } \|X\|_* \leq \tau\;, 
\end{equation}
and find $v(\tau) = \eta$ using the iteration~\eqref{valueNewton}. 

However, rather than parameterizing our problem with $X$, which requires 
SVD for each projection, we use the factorization formulation, 
exploiting Theorem~\ref{thm:genFact} and Corollary~\ref{cor:genLasso}. 
\black{Specifically,} when evaluating the value function $v(\tau)$, \black{we solve the corresponding factorized formulation}
\begin{equation}\label{subproblem}
\min_{L,R}  \|\mathcal{A}(LR^T) - b\|_2^2 \quad \text{s.t. } \frac{1}{2}\left\|\begin{bmatrix}
L\\R
\end{bmatrix}
\right\|_F^2 \leq \tau\;
\end{equation}
using decision variables $L, R$ with a fixed number of $k$ columns each. 

By Theorem~\ref{thm:genFact} and Corollary~\ref{cor:genLasso}, any local solution to this problem
corresponds to a local solution of the true LASSO problem, \black{subject to a rank constraint $\mathrm{rank}(X)\leq k$. }
\black{We use solution $X = LR^T$ reconstructed from~\eqref{subproblem} to evaluate both $v(\tau)$ and 
its derivative $v'(\tau)$.}
\black{When the rank of $L, R$ is large enough, a local minimum of~\eqref{subproblem} corresponds to 
a local minimum of~\eqref{TrueValue}, and for any convex $\rho$, every local minimum of~\eqref{LASSO} is also a global minimum. 
When the rank of the factors $L$ and $R$ is smaller than the rank of the optimal LASSO solution, 
the algorithm looks for local minima of the rank-constrained LASSO problem. }
\black{Unfortunately, we cannot guarantee that the solutions we find are local minima for~\eqref{subproblem}, rather than
simply stationary points. Nonetheless, this approach works quickly and reliably in practice, as we show in our experiments.}


%

Problem~\eqref{subproblem} is optimized using the spectral projected gradient
algorithm. 
The gradient is easy to compute, and the projection requires rescaling 
all entries of $L,R$ by a single value, which is fast, simple, and parallelizable. 

\black{
To evaluate $v'(\tau)$, we use the formula~\eqref{value_fcn_der} for the Newton step 
corresponding to the original (convex) problem in $X$; this requires  
computing the spectral norm (largest singular value) of 
\[
\mathcal{A}^*(b - \mathcal{A}(\bar L \bar R^T))\;, 
\] 
where $\mathcal{A}^*$  is the adjoint of the linear operator $\mathcal{A}$, while $\bar L$ and $\bar R$ are the solutions to~\eqref{subproblem}.
The largest singular value of the above matrix can be computed  relatively quickly using the power method. 
Again, at every update requiring $v(\tau)$ and $v'(\tau)$, 
we are assuming here that our solution $\bar X = \bar L\bar R^T$ is close to a local minimum of the true
LASSO problem, but we do not have theoretical guarantees of this fact. 
}

%

\black{
\subsection{Initialization}
The factorized LASSO problem~\eqref{subproblem} has a stationary point at $L  = 0, R = 0$. 
This means that in particular, we cannot initialize from this point. Instead, we recommend initializing from
a small random starting point. Another possibility is trying to jump start the algorithm, for example using 
the initialization technique of~\cite[Algorithm 1]{Jain:2013}. 
One can compute the partial SVD of the adjoint of the linear operator $\Sc{A}$ on the observed data:
 \[
 USV^T = \Sc{A}^*b
 \]
 Then $L$ and $R$ are initialized as 
 \[
 L = U\sqrt{S}, \quad R = V\sqrt{S}.
 \]
 This initialization procedure can sometimes result in faster convergence over random initialization. Compared to random initialization, 
 this method has the potential to reduce the runtime of the algorithm by 30-40\% for smaller values of $\eta$, 
 see Table~\ref{table8}.
 The key feature of any initialization procedure is to ensure that the starting value of 
 \[
 \tau_0 = \frac{1}{2}\begin{bmatrix} L_0 \\ R_0 \end{bmatrix}
 \]
is {\it less} than the solution to the root finding problem for~\eqref{BPDN}, $v(\tau) = \eta$. 
}

\begin{table}[ht]
\caption{\black{Summary of the computational time (in seconds) for LR-BPDN, measuring the effect of random versus smart (\cite[Algorithm 1]{Jain:2013}) initialization of $L$ and $R$
 for factor rank $k$ and relative error level $\eta$ for~\eqref{BPDN}. Comparison performed on the 1M MovieLens Dataset. Type of initialization had almost no effect
 on quality of final reconstruction. }
}  \label{table8}
\begin{center}
\black{
\begin{tabular}{cc|c|c|c|c|l}
\cline{3-6}
& & \multicolumn{4}{ c| }{\bf Random initialization} \\ \cline{1-6}
\multicolumn{1}{ |c}{}                        &
\multicolumn{1}{  c|  }{$k$} & 10 & 20 & 30 & 50 \\ \cline{1-6}
\multicolumn{1}{ |c }{\multirow{1}{*}{$\eta$=0.5} } &
\multicolumn{1}{ c| }{} & {3.54} & {5.46} & {4.04} & {8.31} &     \\ \cline{1-6}
\multicolumn{1}{ |c  }{\multirow{1}{*}{$\eta$=0.3} } &
\multicolumn{1}{ c| }{} & 11.90 & 6.14 & 8.42&20.84 &\\ \cline{1-6}
\multicolumn{1}{ |c  }{\multirow{1}{*}{$\eta$=0.2} } &
\multicolumn{1}{ c| }{} & 86.53 & 107.88 & 148.12 & 166.92 \\ \cline{1-6}
\end{tabular}\begin{tabular}{cc|c|c|c|c|l}
\cline{3-6}
& & \multicolumn{4}{ c| }{\bf Smart initialization} \\ \cline{1-6}
\multicolumn{1}{ |c}{}                        &
\multicolumn{1}{  c|  }{$k$} & 10 & 20 & 30 & 50 \\ \cline{1-6}
\multicolumn{1}{ |c }{\multirow{1}{*}{$\eta$=0.5} } &
\multicolumn{1}{ c| }{} & {5.01} & {5.75} & {6.84} & {8.24} &     \\ \cline{1-6}
\multicolumn{1}{ |c  }{\multirow{1}{*}{$\eta$=0.3} } &
\multicolumn{1}{ c| }{} & 11.15 & 18.88 & 12.38&21.02 &\\ \cline{1-6}
\multicolumn{1}{ |c  }{\multirow{1}{*}{$\eta$=0.2} } &
\multicolumn{1}{ c| }{} & 58.78 & 84.29 & 95.07 & 114.21 \\ \cline{1-6}
\end{tabular}
}
\end{center}
\end{table}

\subsection{Increasing $k$ on the fly}

\textcolor{black}{
In factorized formulations, the user must specify a factor rank. 
From a computational perspective, it is better that the rank stay small;
however if it is too small, it may be impossible to solve~\eqref{BPDN}
to a specified error level $\eta$. For some classes of problems, where 
the true rank is known ahead of time (see e.g.~\cite{CPA:CPA21432}), 
one is guaranteed that a solution will exist for a given rank. 
However, if necessary, factor rank can be adjusted
on the fly within our framework. 
}
%

\black{Specifically,} adding columns to $L$ and $R$ can be done on the fly, since 
\[
\begin{bmatrix} L & l \end{bmatrix} \begin{bmatrix} R & r\end{bmatrix}^T = LR^T + lr^T\;.
\]
Moreover, the proposed framework for solving~\eqref{BPDN} is fully compatible with 
this strategy, since the underlying root finding is blind to the factorization representation. Changing $k$ only affects iteration~\eqref{valueNewton}
through $v(\tau)$ and $v'(\tau)$.

\subsection{Computational efficiency}
\label{sec:efficiency}
One way of assessing the cost of \black{LR-BPDN} is to compare the computational cost per iteration of the factorization constrained LASSO subproblems \eqref{subproblem} with that of the nuclear norm constrained LASSO subproblems \eqref{TrueValue}. We first consider the cost for computing the gradient direction. 
\textcolor{black}{A gradient direction for the factor $L$ in the factorized algorithm is given by
$$
	g_L = \mathcal{A}^*\left(\mathcal{A}(LR^T) - b\right)R, 
$$
with $g_R$ taking a similar form. Compare this to a gradient direction for $X$
$$
	g_X = \mathcal{A}^*\left(\mathcal{A}(X) - b\right). 
$$}
\black{First, we consider the cost incurred in working with the residual and decision variables.}
While both methods must compute the action of $\mathcal{A}^*$
on a vector, the factorized formulation must modify factors $L, R$ (at a cost of $O(k(n+m))$ and 
re-form the matrix $X = LR^T$ (at a cost of at most $O(knm)$, for every iteration
and line search evaluation. 
\black{Since $\mathcal{A}$ is a sampling matrix for the applications of interest, 
it is sufficient to form only the entries of $X$ that are sampled by $\mathcal{A}$, 
thus reducing the cost to $O(kp)$, where $p$ is the dimension of the measurement vector $b$. 
The sparser the sampling operator $\Sc{A}$, the greater the savings.  }
\black{Standard approaches update an explicit decision variable $X$, at a cost of $O(nm)$}, 
for every iteration and line search evaluation. If the fraction sampled is smaller than the chosen rank $k$, the 
factorized approach is actually cheaper than the standard method. \black{It is also important to note that 
standard approaches have a memory footprint of $O(mn)$, simply to store the decision variable. In 
contrast, the memory used by factorized approaches are dominated by the size of the observed data. }

We now consider the difference in cost involved in the projection. The main benefit for the factorized formulation 
is that projection is done using the Frobenius norm formulation~\eqref{subproblem}, 
and so the cost is $O(k(n+m))$ for every projection. 
\black{In contrast, state of the art implementations that compute
full or partial SVDs in order to accomplish the projection (see e.g.~\cite{Jain10guaranteedrank,TFOCS}) 
are dominated by the cost of this calculation, which is (in the case of partial k-SVD) 
$O(nmk)$, assuming without loss of generality that $k \leq \min(m,n)$. }

\black{
While the complexity of both standard and factorized iterations is dominated by the term $O(mnk)$, in practice 
forming $X = LR^T$ from two factors with $k$ columns each is still cheaper than computing a k-partial SVD of $X$. 
This essentially explains why factorized methods are faster. While it is possible to obtain further speed up for standard methods
using inexact SVD computations, the best reported improvement is a factor of two or three~\cite{LinWei2010}.}  
\black{To test our approach against a similar approach that uses Lanczos to compute partial SVDs, we modified the projection used by the SPGL1 code 
to use this acceleration. We compare against this accelerated code, as well as against TFOCS~\cite{TFOCS} in section~\ref{sec:NumericalResults} (see Table~\ref{table5}).}

 \textcolor{black}{
Finally, both standard and factorized versions of the algorithm require computing the maximum singular value in order to 
compute $v'(\tau)$.  
The analysis in section \ref{sec:BurMont} shows that if the chosen rank of the factors $L$ and $R$ is larger than or equal to the rank of the global minimizers of the nuclear norm LASSO subproblems, 
then any local  minimizer of the factorized LASSO subproblem corresponds to a global minimizer for the convex nuclear norm LASSO formulation.} 
Consequently, both formulations will have similar of Pareto curve updates, since the derivates are necessarily equal at any 
global minimum whenever $\rho$ is strictly convex\footnote{It is shown in~\cite{AravkinBurkeFriedlander:2013} that for any differentiable convex $\rho$, the dual problem for the residual $r = b - \Sc{A}x$ has a unique solution. Therefore, {\it any} global minimum for~\eqref{LASSO} guarantees a unique residual when $\rho$ is strictly convex, and the claim follows, since the derivative only depends on the residual.}. 

\section{Robust Formulations}
\label{sec:Robust}

Robust statistics~\cite{Hub,Mar} play a crucial role in many real-world applications, 
allowing good solutions to be obtained in spite of data contamination. 
In the linear and nonlinear regression setting, the least-squares problem 
\[
\min_X \|F(X) - b\|_2^2
\]
corresponds to the maximum likelihood estimate of $X$ for the statistical model
\begin{equation}
\label{statModel}
b = F(X) + \epsilon\;,
\end{equation}
where $\epsilon$ is a vector of i.i.d. Gaussian variables. Robust statistical approaches
relax the Gaussian assumption, allowing other (heavier tailed) distributions to be used. 
Maximum likelihood estimates of $X$ under these assumptions are more robust to 
data contamination. Heavy-tailed distributions, in particular the Student's t, yield formulations 
that are \black{more robust to outliers than convex formulations~\cite{Lange1989, AravkinFHV:2012}}.  
\black{This corresponds to the notion of a re-descending {\it influence function}~\cite{Mar}, 
which is simply the derivative of the negative log likelihood.}
 The relationship between densities, penalties, and influence functions is shown in
figure~\ref{GLT-KF}. Assuming that $\epsilon$ has the Student's t density leads 
to the maximum likelihood estimation problem 
\begin{equation}
\label{StudentForm}
\min_{X} \rho(F(x) - b) := \sum_{i} \log(\nu + (F(X)_i - b_i)^2), 
\end{equation}
where $\nu$ is the Student's t degree of freedom. 

A general version of~\eqref{BPDN} was proposed in~\cite{AravkinBurkeFriedlander:2013}, 
allowing different penalty funtionals $\rho$. 
The root-finding procedure of~\cite{BergFriedlander:2008} was extended in~\cite{AravkinBurkeFriedlander:2013} 
to this more general context, and used for root finding for both convex and noncovex $\rho$ (e.g. as in~\eqref{StudentForm}).

\textcolor{black}{The~\eqref{BPDN} formulation for any $\rho$  do not arise directly from a maximum likelihood estimator of~\eqref{statModel}, because
they appear in the constraint. }
However, we can think about penalties $\rho$ as {\it agents} 
who, given an error budget $\eta$, distribute it between elements of the residual. 
The strategy that each agent $\rho$ will use to accomplish this task can be deduced 
from tail features evident in Figure~\ref{GLT-KF}. 
Specifically, the cost of a large residual is prohibitively expensive for the least squares penalty, since its {\it cost} is commensurate with that of a very large number of small residuals. 
For example, $(10\alpha)^2 = 100 \alpha^2$; so a residual of size $10\alpha$
is worth as much as 100 residuals of size $\alpha$ to the least squares penalty. 
Therefore, a least squares penalty will never assign a single residual a relatively large value, since this would 
quickly use up the entire error budget. 
In contrast, $|10\alpha| = 10|\alpha|$, so a residual of size $10\alpha$
is worth only 10 residuals of size $\alpha$ when the $1$-norm penalty is used. 
This penalty is likely to grant a few relatively large errors to certain residuals, if this resulted in a better fit. 
For the penalty in~\eqref{StudentForm}, 
it is easy to see that the cost of a residual of size $10\alpha$ can be worth fewer than $10$ residuals of size $\alpha$, 
and specific computations depend on $\nu$ and actual size of $\alpha$. \black{A nonconvex penalty $\rho$, 
e.g. the one in~\eqref{StudentForm}, allows large residuals, as long as the majority 
of the remaining residuals are fit well. }

From the discussion in the previous paragraph, it is clear that robust penalties are useful as {\it constraints} in~\eqref{BPDN}, 
and can cleverly distribute the allotted error budget $\eta$,  using it for outliers while fitting good data. 
The \black{LR-BPDN} framework proposed in this paper captures the robust extension, allowing robust data interpolation
in situations when some of available data is heavily contaminated. 
To develop this extension, we follow~\cite{AravkinBurkeFriedlander:2013} to define the generalized value function
\begin{equation}
\label{genValue}
v_\rho(\tau) = \min_X \rho(\mathcal{A}(X) - b) \quad \text{s.t. } \|X\|_* \leq \tau\;, 
\end{equation}
and find $v_\rho(\tau) = \eta$ using the iteration~\eqref{valueNewton}. 
As discussed in section~\ref{sec:MathematicalFormulations},
for any {\it convex} smooth penalty $\rho$, 
\begin{equation}
\label{DerivForm}
v_\rho'(\tau) = -\|\mathcal{A}^*\nabla \rho(\bar r)\|_2\;,
\end{equation}
where $\|\cdot\|_2$ is the spectral norm, and $\bar r = \mathcal{A}(\bar X) - b$ 
for optimal solution $\bar X$ that achieves $v_\rho(\tau)$. 
For smooth non-convex $\rho$, e.g.~\eqref{StudentForm}, we still use~\eqref{DerivForm}
in iteration~\eqref{valueNewton}. 

As with standard least squares, we use the factorization formulation to 
avoid SVDs. Note that Theorem~\ref{thm:genFact} and Corollary~\ref{cor:genLasso}
hold for any choice of penalty $\rho$.
When evaluating the value function $v_\rho(\tau)$, we actually solve
\begin{equation}\label{RobustSubproblem}
\min_{L,R}  \rho(\mathcal{A}(LR^T) - b) \quad \text{s.t. } \frac{1}{2}\left\|\begin{bmatrix}
L\\R
\end{bmatrix}
\right\|_F^2 \leq \tau\;. 
\end{equation}
For any smooth penalty $\rho$, including~\eqref{StudentForm}, 
\black{a stationary point for this problem can be found using the projected 
gradient method.}

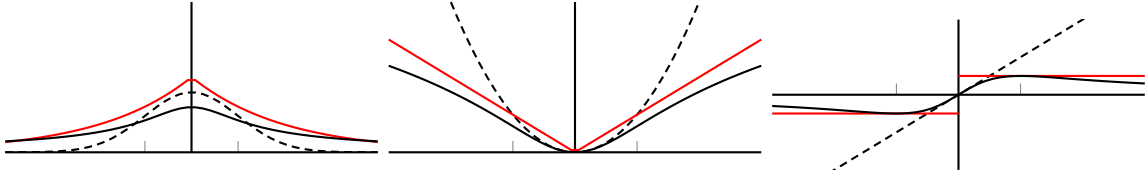
\begin{figure} \label{GLT-KF}
\centering
\begin{tikzpicture}
  \begin{axis}[
    thick,
    width=.3\textwidth, height=2cm,
    xmin=-4,xmax=4,ymin=0,ymax=1,
    no markers,
    samples=50,
    axis lines*=left, 
    axis lines*=middle, 
    scale only axis,
    xtick={-1,1},
    xticklabels={},
    ytick={0},
    ] 
\addplot[domain=-4:+4,densely dashed]{exp(-.5*x^2)/sqrt(2*pi)};
 \addplot[red, domain=-4:+4]{0.5*exp(-.5*abs(x))};
  \addplot[black, domain=-4:+4]{0.3*exp(-.5*ln(1 + x^2))};
  \end{axis}
\end{tikzpicture}
\begin{tikzpicture}
  \begin{axis}[
    thick,
    width=.3\textwidth, height=2cm,
    xmin=-3,xmax=3,ymin=0,ymax=2,
    no markers,
    samples=50,
    axis lines*=left, 
    axis lines*=middle, 
    scale only axis,
    xtick={-1,1},
    xticklabels={},
    ytick={0},
    ] 
\addplot[domain=-3:+3,densely dashed]{.5*x^2};
 \addplot[red, domain=-3:+3]{.5*abs(x)};
  \addplot[black, domain=-3:+3]{.5*ln(1 + x^2)};
  \end{axis}
\end{tikzpicture}
\begin{tikzpicture}
  \begin{axis}[
    thick,
    width=.3\textwidth, height=2cm,
    xmin=-3,xmax=3,ymin=-2,ymax=2,
    no markers,
    samples=50,
    axis lines*=left, 
    axis lines*=middle, 
    scale only axis,
    xtick={-1,1},
    xticklabels={},
    ytick={0},
    ] 
\addplot[domain=-3:3,densely dashed]{x};
 \addplot[red, domain=-3:0]{-.5};
  \addplot[red, domain=0:3]{.5};
\addplot[black, domain=-3:3]{x/(1 + x^2)};
  \end{axis}
\end{tikzpicture}
    \caption{\black{Gaussian (black dashed line), Laplace (red solid line), and Student's t (black solid line); Densities (left plot), Negative Log Likelihoods (center plot), and Influence Functions (right plot).}
   \black{Student's t-density has heavy tails, a non-convex log-likelihood, and re-descending influence function.}}
    
\end{figure}

\section{Reweighting}
\label{sec:reweighting}

Every rank-$k$ solution $\bar{X}$ of~\eqref{BPDN} lives in a lower dimensional subspace of $\mathbb{R}^{n\times m}$ spanned by the $n\times k$ row and $m \times k$ column basis vectors corresponding to the nonzero singular values of $\bar{X}$. In certain situations, it is possible to estimate the row and column subspaces of the matrix $X$ either from prior subspace information or by solving an initial \eqref{BPDN} problem.

\black{In the vector case, it was shown that prior information on the support (nonzero entries) can be incorporated in the $\ell_1$-recovery algorithm by solving the weighted-$\ell_1$ minimization problem.} In this case, the weights are applied such that solutions with large nonzero entries on the support estimate have a lower cost (weighted $\ell_1$ norm) than solutions with large nonzeros outside of the support estimate \cite{Mansour:TIT12}.

\black{In the matrix case, } the support estimate is replaced by estimates of the row and column subspace bases $U_0 \in \mathbb{R}^{n\times k}$ and $V_0 \in \mathbb{R}^{m\times k}$ of the largest $k$ singular values of $X$. Let the matrices $\widetilde{U} \in \mathbb{R}^{n\times k}$ and $\widetilde{V} \in \mathbb{R}^{m\times k}$ be estimates of $U_0$ and $V_0$, respectively. 

The weighted nuclear norm minimization problem can be formulated as follows:
\begin{equation}\tag{wBPDN$_\eta$}\label{wBPDN}
\min_{X} ||QXW||_{*}  \quad \text{s.t. } \rho(\mathcal{A} (X) - b) \leq \eta,
\end{equation}
where $Q = \omega\widetilde{U}\widetilde{U}^T + \widetilde{U}^{\perp}\widetilde{U}^{\perp T}$, $W  = \omega\widetilde{V}\widetilde{V}^T + \widetilde{V}^{\perp}\widetilde{V}^{\perp T}$, and $\omega$ is some constant between zero and one. 
Here, we use the notation $\widetilde{U}^{\perp} \in \mathbb{R}^{n\times n-k}$ to refer to the orthogonal complement of $\widetilde{U}$ in $\mathbb{R}^{n\times n}$, and similarly for $\widetilde{V}^{\perp}$ in $\mathbb{R}^{m\times m}.$
\black{The matrices $Q$ and $W$ are weighted projection matrices of the subspaces spanned by $\widetilde{U}$ and $\widetilde{V}$ and their orthogonal complements. Therefore, minimizing $||QXW||_{*}$ penalizes solutions that live in the orthogonal complement spaces more when $\omega < 1$.}

Note that matrices $Q$ and $W$ are invertible, and hence the reweighed LASSO problem still fits into the class of problems
characterized by Theorem~\ref{thm:genFact}. Specifically, we can write any objective $f(X)$ subject to a reweighted nuclear 
norm constraint as 
\begin{equation}
\label{weightedLasso}
\begin{aligned}
\min &\quad f(Q^{-1}\mathcal{R}(Z)W^{-1})\\
\text{s.t.} & \quad \mathrm{Tr}(Z) \leq \tau\;,
\end{aligned}
\end{equation}
where as in Corollary~\ref{cor:genLasso}, $\mathcal{R}(Z)$ extracts the upper $n\times m$ block of $Z$ (see~\eqref{eq:Z}). 
A factorization similar to~\eqref{subproblem} can then be formulated for the~\eqref{wBPDN} problem in order to optimize over the lower dimensional factors $L \in \mathbb{R}^{n\times k}$ and $R \in \mathbb{R}^{m \times k}$. 

In particular, we can solve a sequence of~\eqref{LASSO} problems
\begin{equation}\label{eq:weightedsubproblem}
\min_{L,R}  \|\mathcal{A}(LR^T) - b\|_2^2 \quad \text{s.t. } \frac{1}{2}
\left\|
\begin{bmatrix}
QL\\
WR
\end{bmatrix}
\right\|_F^2 \leq \tau\;, 
\end{equation}
where $Q$ and $W$ are as defined above. Problem~\eqref{eq:weightedsubproblem} can also be solved using the spectral projected gradient algorithm. However, unlike to the non-weighted formulation, the projection in this case is nontrivial. Fortunately, the structure of the problem allows us to find an efficient formulation for the projection operator.

\subsection{Projection onto the weighted Frobenius norm ball}

The projection of a point $(L, R)$ onto the weighted Frobenius norm ball $\frac{1}{2}\left(\|QL\|_F^2 + \|WR\|_F^2\right) \leq \tau$ is achieved by finding the point $(\widetilde{L}, \widetilde{R})$ that solves
\begin{equation*}\label{eq:weightedProj}
\begin{aligned}
\min\limits_{\hat{L},\hat{R}} \frac{1}{2}\left\|
	\begin{bmatrix}
	\hat{L} - L\\
	\hat{R} - R
	\end{bmatrix}
	\right\|_F^2
	 \quad \text{s.t.}\quad \frac{1}{2}\left\|
	 \begin{bmatrix}
	 Q\hat{L}\\
	 W\hat{R}
	 \end{bmatrix}
	 \right\|_F^2  \leq \tau.
\end{aligned}
\end{equation*}
The solution to the above problem is given by
\begin{equation*}\label{eq:ProjSoln}
\begin{array}{l}
	\widetilde{L} = \left((\mu\omega^2+1)^{-1}\widetilde{U}\widetilde{U}^T + (\mu+1)^{-1}\widetilde{U}^{\perp}\widetilde{U}^{\perp ^T}\right)L\\
	\widetilde{R} = \left((\mu\omega^2+1)^{-1}\widetilde{V}\widetilde{V}^T + (\mu+1)^{-1}\widetilde{V}^{\perp}\widetilde{V}^{\perp ^T}\right)R,
\end{array}
\end{equation*}
where $\mu$ is the Lagrange multiplier that solves $f(\mu) \leq \tau$ with
$f(\mu)$ given by 
\begin{equation}\label{eq:LagrangeMult2}
\begin{aligned}
	 f(\mu) = \frac{1}{2}\text{Tr}\Big[\Big(\frac{\omega^2}{(\mu\omega^2+1)^{2}}\widetilde{U}\widetilde{U}^T + \frac{1}{(\mu+1)^{2}}\widetilde{U}^{\perp}\widetilde{U}^{\perp ^T}\Big)LL^T\\
	 +
	\Big(\frac{\omega^2}{(\mu\omega^2+1)^{2}}\widetilde{V}\widetilde{V}^T + \frac{1}{(\mu+1)^{2}}\widetilde{V}^{\perp}\widetilde{V}^{\perp ^T}\Big)RR^T \Big].
\end{aligned}
\end{equation}
The optimal $\mu$ that solves equation \black{\eqref{eq:LagrangeMult2}} can be found using the Newton iteration 
\begin{equation*}\label{eq:NewtonLambda}
\mu^{(t)} = \mu^{(t-1)} - \frac{f(\mu^{(t-1)}) - \tau}{\nabla f(\mu^{(t-1)})},
\end{equation*}
where $\nabla f(\mu)$ is given by 
\[
\begin{aligned}
 \text{Tr}\Big[\Big(\frac{-2\omega^4}{(\mu\omega^2+1)^{2}}\widetilde{U}\widetilde{U}^T + \frac{-2}{(\mu+1)^{3}}\widetilde{U}^{\perp}\widetilde{U}^{\perp ^T}\Big)LL^T \\
	+
	\Big(\frac{-2\omega^4}{(\mu\omega^2+1)^{3}}\widetilde{V}\widetilde{V}^T + \frac{-2}{(\mu+1)^{3}}\widetilde{V}^{\perp}\widetilde{V}^{\perp ^T}\Big)RR^T \Big].
\end{aligned}
\]

\subsection{Traversing the Pareto curve}

The design of an effective optimization method that solves \eqref{wBPDN} requires 1) evaluating problem \eqref{eq:weightedsubproblem}, and 2) computing the dual of the weighted nuclear norm $\|QXW\|_{*}$. 

\textcolor{black}{We first define a gauge function $\kappa(x)$ as a convex, nonnegative, positively homogeneous function such that $\kappa(0) = 0$. This class of functions includes norms and therefore includes the formulations described in \eqref{wBPDN} and \eqref{eq:weightedsubproblem}. Recall from section \ref{sec:MathematicalFormulations} that taking a Newton step along the Pareto curve of \eqref{wBPDN} requires the computation of the derivative of $v(\tau)$ as in \eqref{value_fcn_der}. Therefore, we also define the polar (or dual) of $\kappa$ as \begin{equation}
	\kappa^{o}(x) = \sup\limits_{w} \{w^Tx \ | \ \kappa(w) \leq 1\}.
\end{equation}
Note that if $\kappa$ is a norm, the polar reduces to the dual norm.}

To compute the dual of the weighted nuclear norm, we follow Theorem 5.1 of \cite{BergFriedlander:2011} which defines the polar (or dual) representation of a weighted gauge function $\kappa(\Phi x)$ as $\kappa^{o}(\Phi^{-1}x)$, where $\Phi$ is an invertible linear operator. The weighted nuclear norm $\|QXW\|_{*}$ is in fact a gauge function with invertible linear weighting matrices $Q$ and $W$. Therefore, the dual norm is given by
\begin{equation*}\label{eq:dualWeightedNucNorm}
	(\|Q(\cdot)W\|_{*})_d(Z) := \|Q^{-1}ZW^{-1}\|_{\infty}.
\end{equation*}

\section{Numerical experiments}
\label{sec:NumericalResults}

We test the performance of \black{LR-BPDN} on two example applications. 
In section~\ref{sec:Netflix}, we consider the Netflix Prize problem, 
which is often solved using rank minimization~\cite{Netflix:2006,Gross:2011,RechtRe:2011}.
\black{Using MovieLens 1M, 10M, and Netflix 100M datasets, we compare and discuss advantages of 
different formulations, compare our solver against state of the art convex~\eqref{BPDN} solver SPG$\ell_1$,
and report timing results. We show that the proposed algorithm is orders of magnitude faster than 
the best convex~\eqref{BPDN} solver. }

In section~\ref{sec:Seismic}, 
we apply the proposed methods and extensions 
to seismic trace interpolation, a key application
in exploration geophysics~\cite{Sacci1998}, 
where rank regularization approaches have recently been 
used successfully~\cite{oropeza:V25}.
In section~\ref{sec:ClassicComparison}, \black{we include an additional comparison of 
LR-BPDN with classic SPG$\ell_1$ as well as with TFOCS~\cite{TFOCS} for small matrix completion 
and seismic data interpolation problems}. 
\black{Then, using real data collected from the Gulf of Suez, we show results for robust
completion in section~\ref{sec:RobustCompletion}, 
and present results for the weighted extension in section~\ref{sec:Reweighting}.} 

\subsection{Collaborative filtering} 
\label{sec:Netflix}
We tested the performance of our algorithm on \textcolor{black}{completing missing entries in
 the MovieLens~(1M),~(10M), and Netflix~(100M) datasets, 
  which contain anonymous ratings of movies made by MovieLens users. }
 The ratings are on an integer scale from 1 to 5. 
 The ratings matrix is not complete, and the goal is to infer the values 
 in the unseen test set. 
 In order to test our algorithm, we further subsampled the available ratings 
 by randomly removing 50$\%$ of the known entries. We then solved 
 the~\eqref{BPDN} formulation to complete the matrix, and compared
 the predicted (P) and actual (A) removed 
 entries in order to assess algorithm performance.
 We report the signal-to-noise ratio (SNR):
  \[ 
  \text{SNR} = 20 \log\left(\frac{\|A\|_F}{\|P-A\|_F}\right)
  \]
 for different values of $\eta$ in the~\eqref{BPDN} formulation.

Since our algorithm requires pre-defining the rank of the factors $L$ and $R$,
we perform the recovery with ranks $k \in \{10,20,30,50\}$. 
Table~\ref{table1} shows the reconstruction SNR for each of the ranks $k$ and for a relative error $\eta \in \{0.5, 0.3, 0.2\}$ 
(the data mismatch  is reduced to a fraction $\eta$ of the initial error).
The last row of table~\ref{table1} shows the recovery for an unconstrained 
low-rank formulation, \black{using the work and software of~\cite{Vand:2013}. 
\black{
This serves as an interesting baseline, since the rank $k$ of the Riemannian manifold in the 
unconstrained formulation functions as a regularizer. 
 It is clear that for small $k$, we get good results without additional functional regularization;
however, as $k$ increases, the quality of the rank $k$ solution decays without further constraints.}
In contrast, we get better results as the rank increases, because we consider a larger model space,
but solve the BPDN formulation each time. 
This observation demonstrates the importance of the nuclear norm regularization, 
especially when the underlying rank of the problem is unknown.
}

\black{
Table 8.2 shows the timing (in seconds) used by all methods to obtain solutions. There are several conclusions that can be readily drawn. 
First, for error-level constrained problems, a tighter error bound requires a higher computational investment by our algorithm, which is consistent
with the original behavior of SPG$\ell_1$~\cite{BergFriedlander:2008}. 
Second, the unconstrained problem is easier to solve (using the Riemmanian manifolds approach of~\cite{Vand:2013}) than a constrained problem 
of the same rank; however, it is interesting to note that as the rank of the representation increases, the unconstrained Riemmanian approach becomes
more expensive than the constrained problem for the levels $\eta$ considered, most likely due to 
second-order methods used by the particular implementation of~\cite{Vand:2013}.  
}

Table~\ref{table4} shows the value of $\|X\|_*$ 
of the reconstructed signal corresponding to the settings in Table~\ref{table1}. 
While the interpretation of the $\eta$ values are straightforward (they are fractions 
of the initial data error),  
it is much more difficult to predict ahead of time which value of $\tau$ one may want 
to use when solving~\eqref{LASSO}. 
This illustrates the {\it modeling} advantage of the~\eqref{BPDN} formulation: 
it requires only the simple parameter $\eta$, \black{which is an estimate of 
the (relative) noise floor}. 
Once $\eta$ is provided, the algorithm 
(not the user) will instantiate~\eqref{LASSO} formulations, and find the right
value $\tau$ that satisfies $v(\tau) = \eta$. 
\black{When no estimate of $\eta$ is available, our algorithm can still be applied to the problem, 
with $\eta = 0$ and a fixed computational budget (see Table~\ref{table6}. }

%


\begin{table}[ht]
\caption{\black{Summary of the recovery results on the MovieLens~(1M) data set for factor rank $k$ and relative error level $\eta$ for~\eqref{BPDN}. 
SNR in dB listed in the left table, and RMSE in the right table. 
The last row in each table gives recovery results for the non-regularized data fitting factorized
 formulation solved with Riemannian optimization (ROPT). Quality {\it degrades} with $k$ due to overfitting 
 for the non-regularized formulation, and improves with $k$ when regularization 
 is used. }}  
        \label{table1}
\begin{center}
\black{
\begin{tabular}{cccccc}
\toprule
& {\bf $k$} & 10 & 20 & 30 & 50\\
{\bf $\eta$} & &&&&\\
\toprule
0.5 && 5.93 & 5.93 & 5.93 & 5.93\\
0.3 && 10.27 & 10.27 & 10.26 & 10.27\\
0.2 && 12.50 & 12.54 & 12.56 & 12.56 \\
\bottomrule
{ROPT}  && 11.16 &8.38 & 6.01 &2.6 \\
\bottomrule
\end{tabular} \quad\quad
\begin{tabular}{cccccc}
\toprule
& {\bf $k$} & 10 & 20 & 30 & 50\\
{\bf $\eta$} & &&&&\\
\toprule
0.5 && 1.89 & 1.89  & 1.89  & 1.89 \\
0.3 && 1.14 &  1.14&  1.15 & 1.14\\
0.2 && 0.88 & 0.88 & 0.88 & 0.88 \\
\bottomrule
{ROPT}  && 1.03 &1.42 & 1.87 & 2.77 \\
\bottomrule
\end{tabular}
}
\end{center}
\end{table}

\begin{table}[ht]
\caption{\black{Summary of the computational timing (in seconds) on the MovieLens~(1M) data set for factor rank $k$ and relative error level $\eta$ for~\eqref{BPDN}. 
The last row gives computational timing for the non-regularized data fitting factorized formulation solved with Riemannian optimization.}}  
        \label{table2}
\begin{center}
\black{
\begin{tabular}{cccccc}
\toprule
& {\bf $k$} & 10 & 20 & 30 & 50\\
{\bf $\eta$} & &&&&\\
\toprule
0.5 && 5.0 & 5.7 & 6.8 & 8.2\\
0.3 && 11.1 & 18.8 & 12.3 & 21.0\\
0.2 && 58.7 & 84.2 & 95.0 & 114.2 \\
\bottomrule
{ROPT} && 14.9 &43.5 & 98.4 &327.3 \\
\bottomrule
\end{tabular}
}
\end{center}
\end{table}


\begin{table}[ht]
\caption{\black{Nuclear-norms of the solutions $X = LR^T$ for results in Table~\ref{table1}, 
corresponding to $\tau$ values in~\eqref{LASSO}.
These values are found automatically via root finding, but are difficult
to guess ahead of time.}}  
        \label{table4}
\begin{center}
\black{
\begin{tabular}{cccccc}
\toprule
& {\bf $k$} & 5 & 10 & 30 & 50\\
{\bf $\eta$} & &&&&\\
\toprule
0.5 && 5.19e3 & 5.2e3 & 5.2e3 & 5.2e3\\
0.3 && 9.75e3 & 9.73e3 & 9.76e3 & 9.74e3\\
0.2 &&  1.96e4& 1.96e4 & 1.93e4 & 1.93e4 \\
\bottomrule
\end{tabular}
}
\end{center}
\end{table}

\black{ Table~\ref{table6} shows a comparison between classic SPG$\ell_1$, accelerated with a Lanczos-based truncated SVD projector, 
against the new solver, on the MovieLens~(10M) dataset, for a fixed budget of 100 iterations. Where the classic solver takes over six hours, 
the proposed method finishes in less than a minute. For a problem of this size, explicit manipulation of $X$ as a full matrix of size 10K by 20K 
is computationally prohibitive. Table~\ref{table7} gives timing and reconstruction quality results for the Netflix~(100M) dataset, where
the full matrix is 18K by 500K when fully constructed. }

\begin{table}[ht]
\caption{ \black{Classic SPGL1 (using Lanczos based truncated SVD) versus LR factorization on the MovieLens~(10M) data set ( $10000 \times 20000$ matrix)
shows results for a fixed iteration budget (100 iterations) to recover 50\% missing entries. SNR, RMSE and
Computational time are shown for $k=5,10,20$. }}
\label{table6}
\begin{center}
\black{
\begin{tabular}{cc|c|c|c|l}
\cline{3-5}
& & \multicolumn{3}{ c| }{\bf MovieLens (10M)} \\ \cline{1-5}
\multicolumn{1}{ |c}{}                        &
\multicolumn{1}{  c|  }{$k$} & 5 & 10 & 20  \\ \cline{1-5}
\multicolumn{1}{ |c  }{\multirow{2}{*}{SPG$\ell_1$} } &
\multicolumn{1}{ |c| }{SNR (dB)} & 11.32 & 11.37 & 11.37&\\ \cline{2-5}
\multicolumn{1}{ |c  }{}                        &
\multicolumn{1}{ |c| }{RMSE} & {1.02} & {1.01} & {1.01} &  \\ \cline{2-5}
\multicolumn{1}{ |c  }{}                        &
\multicolumn{1}{ |c| }{time (sec)} & {\bf 22680} & {\bf 93744} & {\bf 121392} &  \\ \cline{1-5}
\multicolumn{1}{ |c  }{\multirow{2}{*}{LR} } &
\multicolumn{1}{ |c| }{SNR (dB)} & 11.87 & 11.77 & 11.72  \\ \cline{2-5}
\multicolumn{1}{ |c  }{}                        &
\multicolumn{1}{ |c| }{RMSE} & {0.95} & {0.94} & {0.94}&  \\ \cline{2-5}
\multicolumn{1}{ |c  }{}                        &
\multicolumn{1}{ |c| }{ time (sec)} &{\bf 54.3} & {\bf 48.2} & {\bf 47.5} \\ \cline{1-5}
\end{tabular}
}
\end{center}
\end{table}

 \begin{table}[ht]
\caption{ \black{LR method on the Netflix~(100M) data set ( $17770 \times 480189$ matrix)
shows results for 50\% missing entries. SNR, Computational time and RMSE
 are shown for factor rank $k$ and relative error level $\eta$ for~\eqref{BPDN}.}}
\label{table7}
\begin{center}
\black{
\begin{tabular}{cc|c|c|c|l}
\cline{3-5}
& & \multicolumn{3}{ c| }{\bf Netflix (100M) } \\ \cline{1-5}
\multicolumn{1}{ |c}{}                        &
\multicolumn{1}{  c|  }{$k$} & 2 & 4 & 6  \\ \cline{1-5}
\multicolumn{1}{ |c  }{\multirow{2}{*}{$\eta$=0.5} } &
\multicolumn{1}{ |c| }{SNR (dB)} & 7.37 & 7.03 & 7.0&\\ \cline{2-5}
\multicolumn{1}{ |c  }{}                        &
\multicolumn{1}{ |c| }{RMSE} & {1.60} & {1.67} & {1.68} &  \\ \cline{2-5}
\multicolumn{1}{ |c  }{}                        &
\multicolumn{1}{ |c| }{time (sec)} & {\bf 236.5} & {\bf 333.0} & {\bf 335.0} &  \\ \cline{1-5}
\multicolumn{1}{ |c  }{\multirow{2}{*}{$\eta$=0.4} } &
\multicolumn{1}{ |c| }{SNR (dB)} & 8.02 & 7.96 & 7.93  \\ \cline{2-5}
\multicolumn{1}{ |c  }{}                        &
\multicolumn{1}{ |c| }{RMSE} & {1.49} & {1.50} & {1.50} &  \\ \cline{2-5}
\multicolumn{1}{ |c  }{}                        &
\multicolumn{1}{ |c| }{ time (sec)} &{\bf 315.2} & {\bf 388.6} & {\bf 425.0} \\ \cline{1-5}
\multicolumn{1}{ |c  }{\multirow{2}{*}{$\eta$=0.3} } &
\multicolumn{1}{ |c| }{SNR (dB)} & 10.36 & 10.32 & 10.35  \\ \cline{2-5}
\multicolumn{1}{ |c  }{}                        &
\multicolumn{1}{ |c| }{RMSE} & {1.14} & {1.14} & {1.14} &  \\ \cline{2-5}
\multicolumn{1}{ |c  }{}                        &
\multicolumn{1}{ |c| }{ time (sec)} &{\bf 1093.2} & {\bf 853.7} & {\bf 699.7} \\ \cline{1-5}
\end{tabular}
}
\end{center}
\end{table}

\subsection{Seismic missing-trace interpolation}
\label{sec:Seismic}

In exploration seismology, large-scale data sets 
\black{(approaching the order of petabytes for the latest land and wide-azimuth marine acquisitions)} 
must be acquired and processed 
in order to determine the structure of the subsurface. 
In many situations, only a subset of the complete data is acquired due to physical and/or budgetary constraints. 
Recent insights from the field of compressed sensing allow for deliberate subsampling of 
seismic wavefields in order to improve reconstruction quality and reduce acquisition costs \cite{Herrmann2008}. 
The acquired subset of the complete data is often chosen by randomly subsampling a dense regular periodic source or receiver grid. 
Interpolation algorithms are then used to reconstruct the dense regular grid 
in order to perform additional processing on the data such as removal of artifacts, 
improvement of spatial resolution, and key  analysis, such as imaging.

In this section, we apply the new rank-minimization approach, along with 
weighted and robust extensions, 
to the trace-interpolation problem for two different seismic acquisition examples. 
We first describe the structure of the datasets, and then present the transform we use to 
cast the interpolation as a rank-minimization problem. 

The first example is a real data example from the Gulf of Suez.
Seismic data are organized into  {\it seismic lines},
where $N_r$ receivers and $N_s$ sources are collocated in a straight line. 
Sources are deployed sequentially, and receivers record each shot record\footnote{\black{
Data collection performed for several sources taken with increasing or decreasing distance between sources and receivers}. 
} for a period of $N_t$ time samples.
The Gulf of Suez data contains $N_s = 354$ sources, $N_r = 354$ receivers, and $N_t = 1024$ with a sampling interval of 0.004s, leading to a shot duration of 4s and a maximum temporal frequency of 125 Hz. 
Most of the energy of the seismic line is preserved when we restrict the spectrum to the 12-60Hz frequency band. 
Figs.~\ref{fig:1}(a) and (b) illustrate the 12Hz and 60Hz frequency slices in the source-receiver domain, respectively. 
\black{Columns in these frequency slices represent the monochromatic response of the earth to a fixed source and as a function of the receiver coordinate.}
In order to simulate missing traces, we apply a subsampling mask that randomly removes 50\% of the sources, 
resulting in the subsampled frequency slices illustrated in Figs.~\ref{fig:1} (c) and (d). 

\begin{figure*} [ht]
  \begin{center}
    \subfigure[]{\includegraphics[scale=0.25]{./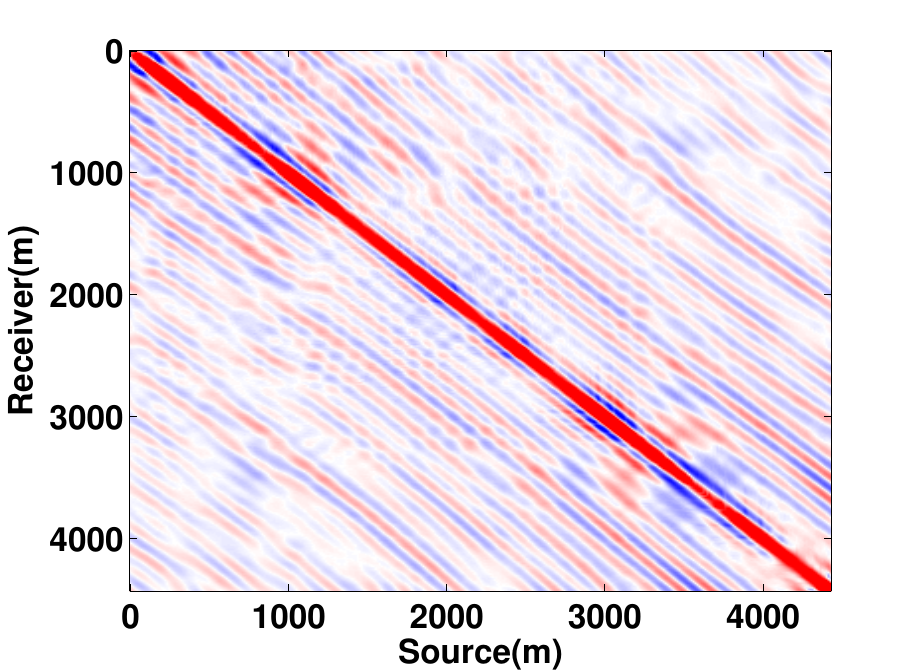}}
    \subfigure[]{\includegraphics[scale=0.25]{./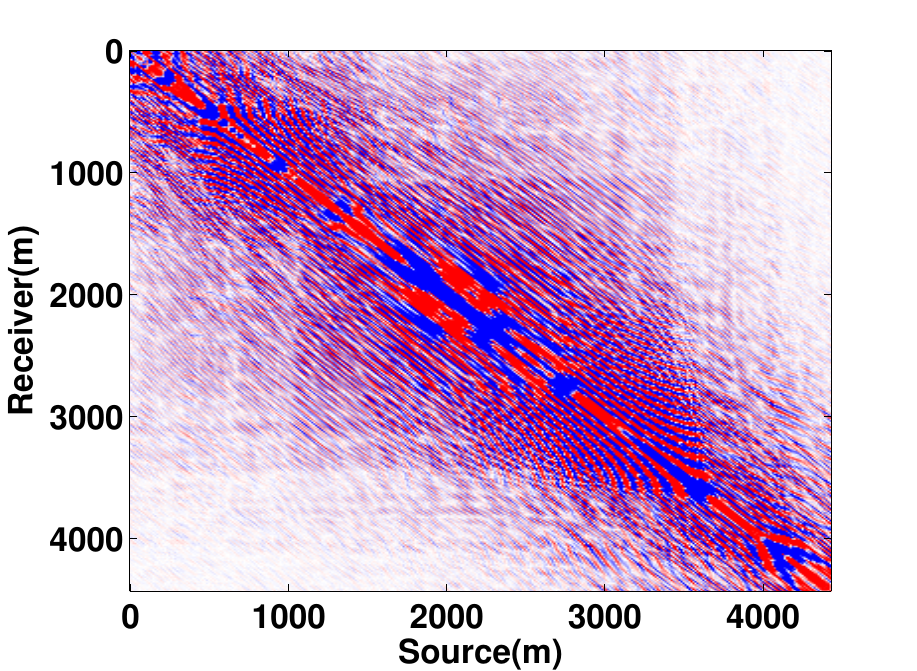}}
    \subfigure[]{\includegraphics[scale=0.25]{./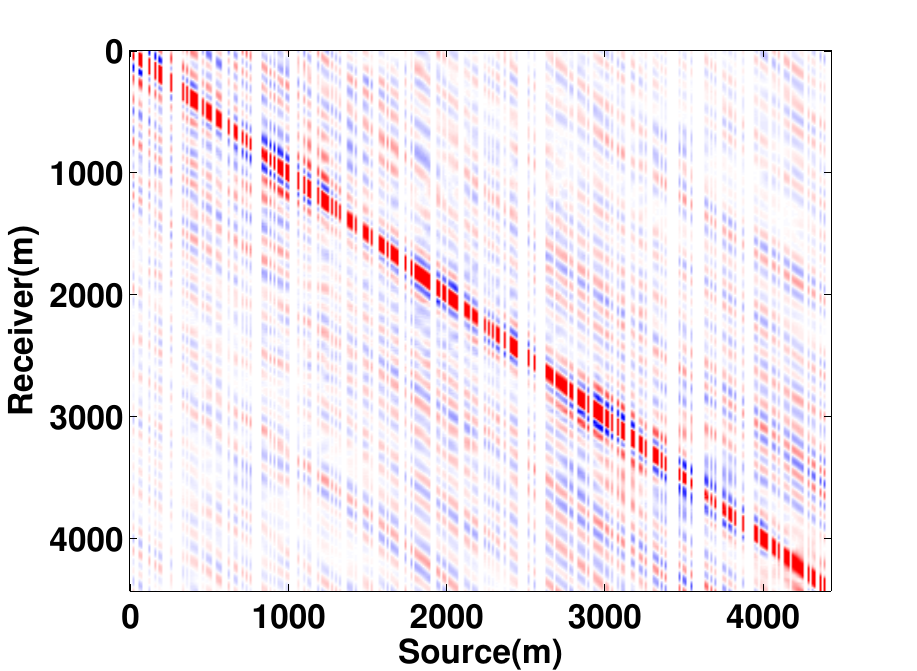}}
    \subfigure[]{\includegraphics[scale=0.25]{./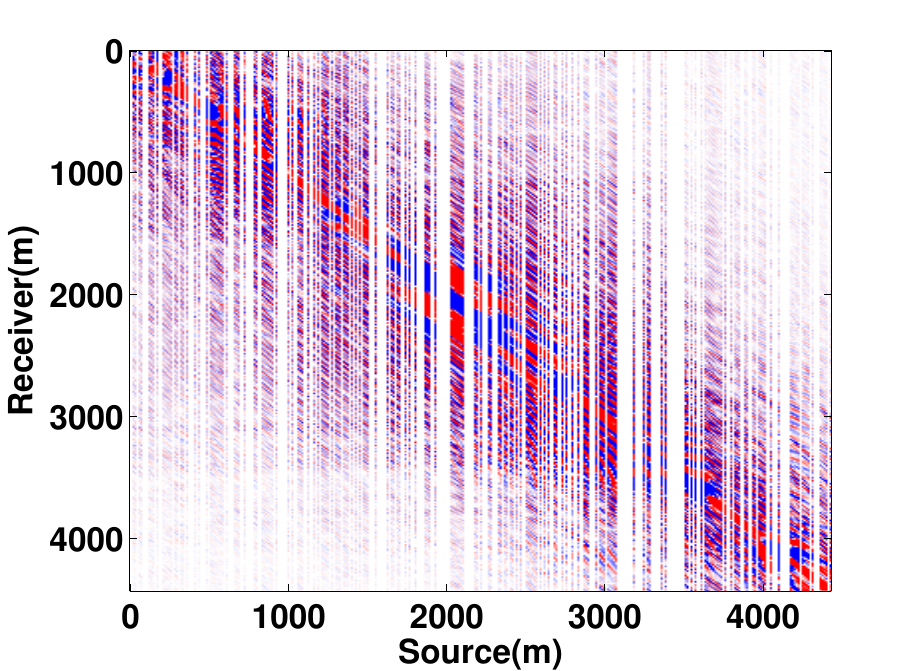}}
     \subfigure[]{\includegraphics[scale=0.25]{./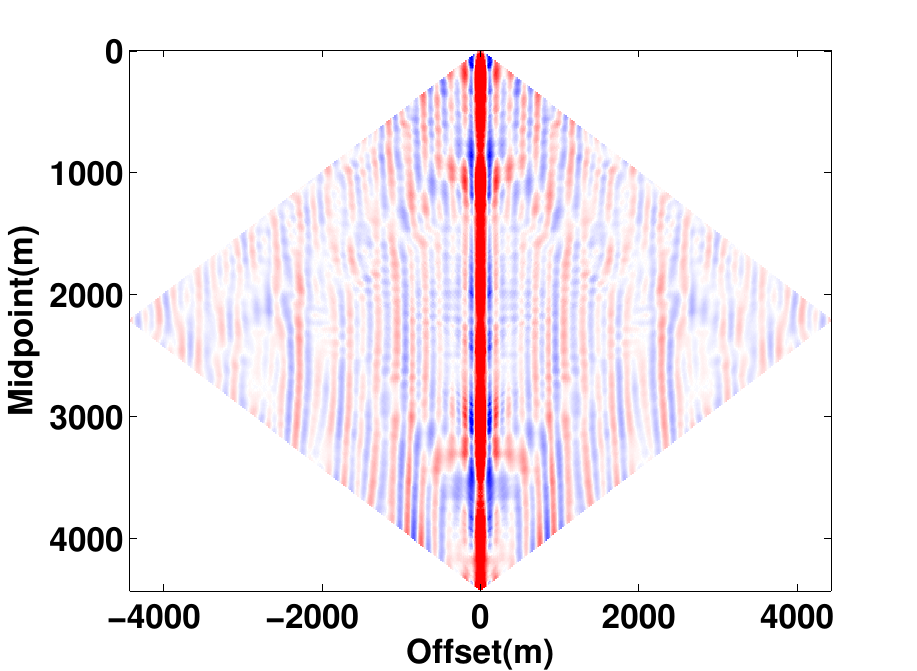}}
    \subfigure[]{\includegraphics[scale=0.25]{./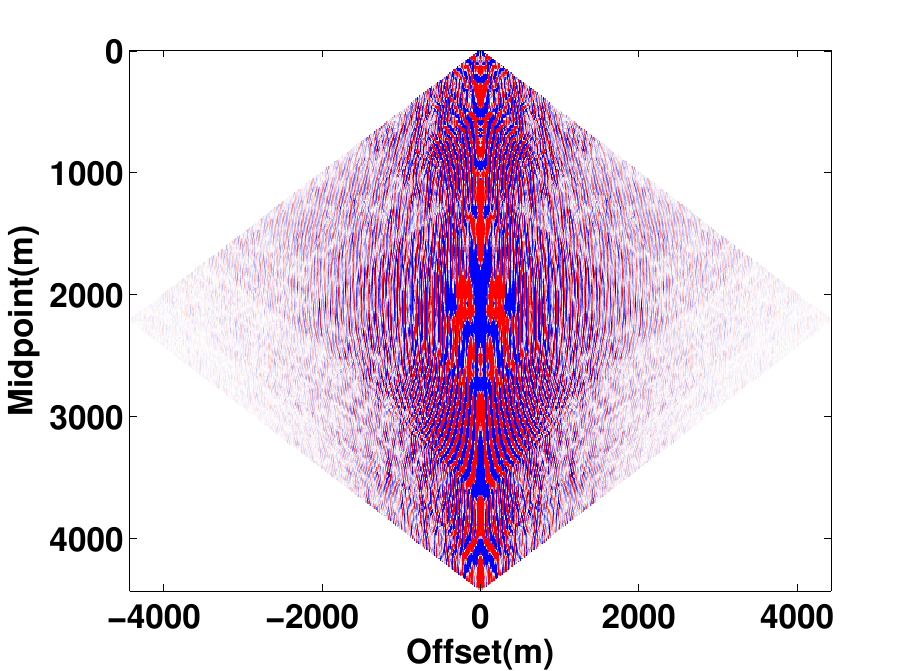}}
    \subfigure[]{\includegraphics[scale=0.25]{./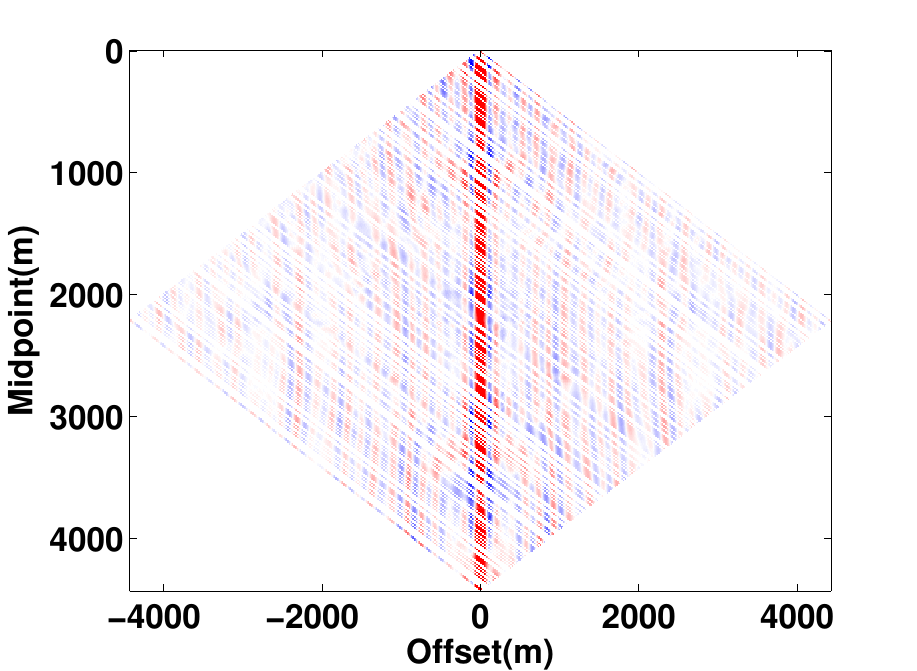}}
    \subfigure[]{\includegraphics[scale=0.25]{./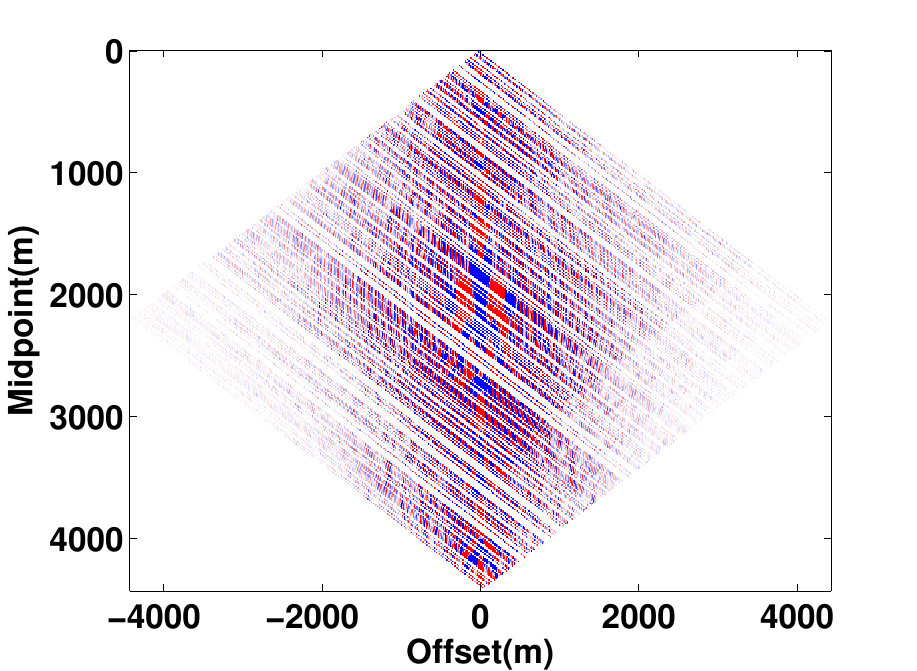}}
  \end{center}
  \caption{Frequency slices of a seismic line from Gulf of Suez with 354 shots, 354 receivers. 
  Full data for
  (a) low frequency at 12 Hz and 
  (b) high frequency at 60 Hz in s-r domain. 
  50\% Subsampled data for
  (c) low frequency at 12 Hz and 
  (d) high frequency at 60 Hz in s-r domain. 
  Full data for  
  (e) low frequency at 12 hz and 
  (f) high frequency at 60 Hz in m-h domain. 
  50\% subsampled data  for 
  (g) low frequency at 12 Hz and 
  (h) high frequency at 60 Hz in m-h domain.}
  \label{fig:1}
\end{figure*}

State of the art trace-interpolation schemes  
transform the data into sparsifying domains, 
for example using the Fourier~\cite{Sacci1998}
and curvelet~\cite{Herrmann2008} transforms. 
The underlying {\it sparse structure} of the data is 
then exploited to recover the missing traces. 
The approach proposed in this paper 
allows us to instead exploit the low-rank {\it matrix structure}
of seismic data, and to design formulations  
that can achieve trace-interpolation using matrix-completion strategies. 

The main challenge in applying rank-minimization for seismic trace-interpolation 
is to find a {\it transform domain} that satisfies the following two properties:
\begin{enumerate}
\item Fully sampled seismic lines have low-rank structure (quickly decaying singular values)
\item  Subsampled seismic 
lines have high rank (slowly decaying singular values).  
\end{enumerate}
When these two properties hold, rank-penalization formulations allow 
the recovery of missing traces. 
%
To achieve these aims, we use the transformation from the 
source-receiver (s-r) domain to the midpoint-offset (m-h). 
The conversion from (s-r) domain to (m-h) domain is a coordinate transformation, 
with the midpoint is defined by m = $\frac{1}{2}$(s+r) 
and the half-offset is defined by h = $\frac{1}{2}$(s-r).\footnote{
\black{In mathematical terms, the transformation from (s-r) domain to (m-h) domain represents a tight frame.}}
This transformation is illustrated by transforming the 12Hz and 60Hz source-receiver domain frequency slices in Figs.~\ref{fig:1}(a) and (b) to the midpoint-offset domain frequency slices in Figs.~\ref{fig:1}(e) and (f). The corresponding subsampled frequency slices in the midpoint-offset domain are shown in Figs.~\ref{fig:1}(g) and (h). 

To show that the midpoint-offset transformation achieves aims 1 and 2 above, 
we plot the decay of the singular values of both the 12Hz and 60Hz frequency slices 
in the source-receiver domain and in the midpoint-offset domain in Figs.~\ref{fig:2} (a) and (c). Notice that the singular values of both frequency slices decay faster in the midpoint-offset domain, and that the singular value decay is slower 
for subsampled data in Figs.~\ref{fig:2} (b) and (d).

\begin{figure}
  \begin{center}
    \subfigure[]{\includegraphics[scale=0.35]{./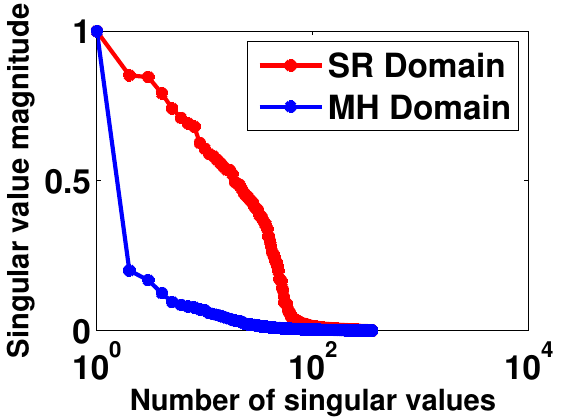}}
    \subfigure[]{\includegraphics[scale=0.35]{./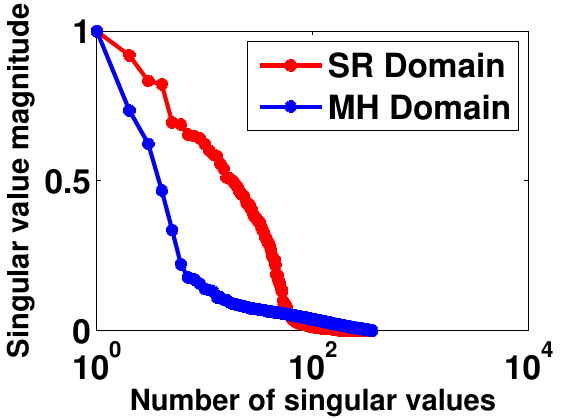}}
    \subfigure[]{\includegraphics[scale=0.35]{./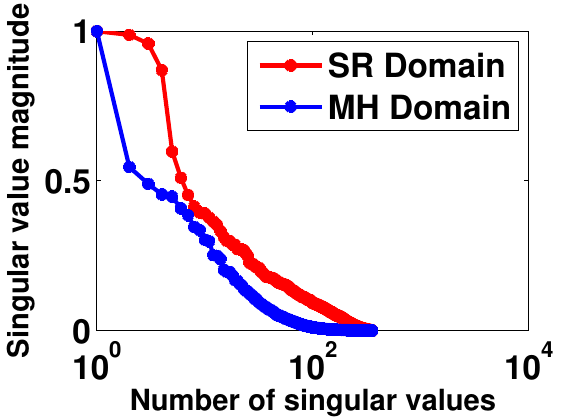}}
    \subfigure[]{\includegraphics[scale=0.35]{./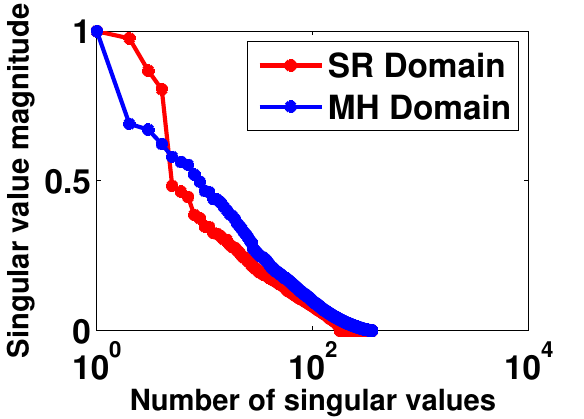}}
  \end{center}
  \caption{Singular value decay of fully sampled 
  (a) low frequency slice at 12 Hz and 
  (c) high frequency slice at 60 Hz in (s-r) and (m-h) domains. 
  Singular value decay of $50\%$ subsampled 
  (b)  low frequency slice at 12 Hz and 
  (d) high frequency data at 60 Hz in (s-r) and (m-h) domains. 
  Notice that for both high and low frequencies, decay of singular values is faster in the fully sampled (m-h) domain 
  than in the fully sampled (s-r) domain,  
  and that subsampling does not significantly change the decay of singular value in (s-r) domain,
  while it destroys fast decay of singular values in (m-h) domain.} 
  \label{fig:2}
\end{figure}

Let $X$ denote the data matrix in the midpoint-offset domain and let $R$ be the subsampling operator that maps Figs.~\ref{fig:1} (e) and (f) to Figs.~\ref{fig:1}(g) and (h). Denote by $\mathcal{S}$ the transformation operator from the source-receiver domain to the midpoint-offset domain. The resulting measurement operator in the midpoint-offset domain is then given by
\(
{{ \mathcal{A} }= R\mathcal{S}^H}\;.
\)

We formulate and solve the matrix completion problem~\eqref{BPDN} 
 to recover a seismic line from Gulf of Suez in the
(m-h) domain. \black{We first performed the interpolation for the frequency slices at 12Hz and 60Hz to get a good approximation of the lower and higher limit of the rank value. Then, we work with all the monochromatic frequency slices
and adjust the rank within the limit while going from low- to high-frequency slices.} 
We use 300 iterations of SPG$\ell_1$ for
all frequency slices. 
Figures~\ref{fig:3}(a) and (b) show the recovery and error plot for the low frequency slice at 12 Hz, respectively. Figures~\ref{fig:3}(c) and (d) show the recovery and error plot for the high frequency slice at 60 Hz, respectively.  
Figures~\ref{fig:4} shows a common-shot gather section after missing-trace interpolation from Gulf of Suez data set. 
\black{We can clearly see that we are able to recapture most of the missing traces in the data (Figures~\ref{fig:4}c), also evident from residual plot (Figures~\ref{fig:4}d).}

In the second acquisition example, we implement the proposed formulation on the 5D synthetic seismic data (2 source dimension, 2 receiver dimension, 1 temporal dimension) provided by BG Group. \black{We extract a frequency slice at 12.3Hz to perform the missing-trace interpolation, where the size of the to-be recovered matrix is $400\times400$ receivers spaced by 25m and $68\times68$ sources spaced by 150m.
Due to the low spatial frequency content of the data at 12.3 Hz, we further subsample the data in receiver coordinates by a factor of two to speed up the computation.} We apply sub-sampling masks that randomly remove 75\% and $50\%$ of the shots. In case of 4D, 
we have two choices of matricization ~\cite{dasilva2013htuck,Demanet2006td}, as shown in Figures~\ref{fig:5}(a,b), where we can either place the (Receiver x, Receiver y) dimensions in the rows and (Source x, Source y) dimensions in the columns, or (Receiver y, Source y) dimensions in the rows and (Receiver x, Source x) dimensions in the columns. We observed the faster decay of singular value decay as shown in Figure \ref{fig:13}(a,b), 
for each of these strategies.  
We therefore selected the transform domain to be the permutation of source and receivers coordinates, where matricization of each 4D monochromatic frequency slices is done using (Source x, Receiver x) and (Source y, Receiver y) coordinates. We use rank 200 for the interpolation, and run the solver for a maximum of 1000 iterations. 
\black{The results after interpolation are shown in Figures \ref{fig:6} and~\ref{fig:7} for 75\% and 50\% missing data, respectively. 
We can see that when 75\% of data is missing, we start losing coherent energy (Figures \ref{fig:6}c). 
With  50\% missing data, we capture most of the coherent energy (Figures \ref{fig:7}c). 
We also have higher SNR values for recovery in case of 50\% compared to 75\% missing data}.

To illustrate the importance of the nuclear-norm regularization, we solved the interpolation problem using a simple least-squares formulation on the same seismic data set from Gulf of Suez. The least squares problem was solved using the $L$, $R$ factorization structure, thereby implicitly enforcing a rank on the recovered estimate (i.e, formulation~\eqref{subproblem} was optimized without the $\tau$-constraint). The problem was then solved with the factors $L$ and $R$ having a rank $k\in \{5, 10, 20, 30, 40, 50, 80, 100\}$. The reconstruction SNRs comparing the recovery for the regularized and non-regularized formulations are shown in Fig.~\ref{fig:8}.  The figure shows that the performance
of the non-regularized approach decays with rank, due to overfitting.  
The regularized approach, in contrast, 
obtains better recovery as the factor rank increases.

\subsubsection{Comparison with classical nuclear-norm formulation}
\label{sec:ClassicComparison}
To illustrate the advantage of proposed matrix-factorization formulation (which we refer to as LR below)
over classical nuclear-norm formulation, we compare the reconstruction error and computation time with the existing techniques. 
The most natural baseline is the SPG$\ell_1$ algorithm~\cite{BergFriedlander:2011} applied to the classic nuclear norm~\eqref{BPDN}
formulation, where the decision variable is $X$, the penalty function $\rho$ is the 2-norm, 
and the projection is done using the SVD.  
\black{This example tests the classic~\eqref{BPDN} formulation against the LR extension proposed in this paper. 
The second comparison is with the TFOCS\cite{TFOCS}, which is a library of first-order methods for a variety of 
problems with explicit examples written by the authors for the~\eqref{BPDN} formulation. The TFOCS approach to~\eqref{BPDN} 
relies on a proximal function for the nuclear norm, which, similar to projection, requires computing SVDs or partial SVDs. }

The comparisons are done using three different data sets. In the first example, we interpolated missing traces of a monochromatic slice (of size $354 \times 354)$,
extracted from Gulf of Suez data set. We subsampled the frequency slice by randomly removing the $50\%$ of shots and performed the missing-trace interpolation in the midpoint-offset (m-h) domain. We compares the SNR, computation time and iterations for a fixed set of $\eta$. 
The rank of the factors was set to $28$. 
The seismic example in table \ref{table5} shows the results.  Both the classic SPG$\ell_1$ algorithm and LR are faster 
than TFOCS. In the quality of recovery, both SPG$\ell_1$ and LR have better SNR than TFOCS. \black{In this case LR is faster than SPG$\ell_1$ by a factor of 15
(see Table~\ref{table5}). }


In the second example, we generated a rank 10 matrix of size $100 \times 100$. 
We subsampled the matrix by randomly removing $50\%$ of the data entries. The synthetic low-rank example in table \ref{table5} shows the comparison of SNR and computational time. \black{The rank of the factors was set to be the true rank of the original data matrix for this experiment.}
\black{
The LR formulation proposed in this paper is faster than classic SPG$\ell_1$, and both 
are faster than TFOCS. As the error threshold tightens, TFOCS requires a large number of iterations to converge.}
\black{For a small problem size, LR and the classic SPG$\ell_1$ perform comparably}.
\black{ When operating on decision variables with the correct rank LR gave uniformly better SNR results than classic SPG$\ell_1$ and TFOCS,
 and the improvement was significant for lower error thresholds.}\footnote{We tested this hypothesis
by re-running the experiment with higher factor rank. For example, selecting factor rank to be 40 gives SNRs of 
16.5, 36.7, 42.4, 75.3 for the corresponding $\eta$ values for the synthetic low-rank experiment in \ref{table5}.} 
\black{ In reality, we do not know the rank value in advance. To make a fair comparison, we used MovieLens (10M) dataset, where we subsampled the available ratings by randomly removing 50\% of the known entries. In this example, we fixed the number of iterations to 100 and compared the SNR and computational time (Table \ref{table6}) for multiple ranks, $k=5,10,20$. It is evident that the we get better SNR in case of LR, also the computational speed of LR is significantly faster then the classic SPG$\ell_1$.}

\begin{table}[ht]
\caption{TFOCS versus classic SPG$\ell_1$ (using direct SVDs) versus LR factorization. {\bf Synthetic low rank} example
shows results for completing a rank 10, $100 \times 100$ matrix, 
with 50\% missing entries. 
SNR, Computational time  and iterations are shown for $\eta=0.1, 0.01, 0.005, 0.0001$.
Rank of the factors is taken to be 10. 
{\bf Seismic} example shows results for matrix completion a low-frequency slice at 10 Hz, extracted from
the Gulf of Suez data set, with 50\% missing entries. SNR, Computational time  and iterations are shown for $\eta=0.2, 0.1, 0.09, 0.08$. Rank of factors was taken to be 28.
}  
\label{table5}
\begin{center}
\black{
\begin{tabular}{cc|c|c|c|c|l}
\cline{3-6}
& & \multicolumn{4}{ c| }{\bf Synthetic low rank} \\ \cline{1-6}
\multicolumn{1}{ |c}{}                        &
\multicolumn{1}{  c|  }{$\eta$} & 0.1 & 0.01 & 0.005 & 0.0001 \\ \cline{1-6}
\multicolumn{1}{ |c| }{\multirow{3}{*}{TFOCS} } &
\multicolumn{1}{ |c| }{SNR (dB)} & 17.2 & 36.3 & 56.2 & 76.2 &     \\ \cline{2-6}
\multicolumn{1}{ |c  }{}                        &
\multicolumn{1}{ |c| }{time (s)} & {24.5} & {179.4} & {963.3} & {2499.9} &     \\ \cline{2-6}
\multicolumn{1}{ |c  }{}                        &
\multicolumn{1}{ |c| }{ iteration} & {1151 } & {8751} & { 46701} & {121901} &     \\ \cline{1-6}
\multicolumn{1}{ |c  }{\multirow{3}{*}{SPG$\ell_1$} } &
\multicolumn{1}{ |c| }{SNR (dB)} & 14.5 & 36.4 & 39.2&76.2 &\\ \cline{2-6}
\multicolumn{1}{ |c  }{}                        &
\multicolumn{1}{ |c| }{ time (s)} & {4.9} & { 17.0} & { 17.2} & {61.1} &  \\ \cline{2-6}
\multicolumn{1}{ |c  }{}                        &
\multicolumn{1}{ |c| }{ iteration} & {12} & {46 } & {47} & { 152} &     \\ \cline{1-6}
\multicolumn{1}{ |c  }{\multirow{3}{*}{LR} } &
\multicolumn{1}{ |c| }{SNR (dB)} & 16.5 & 36.7 & 42.7 & 76.2 \\ \cline{2-6}
\multicolumn{1}{ |c  }{}                        &
\multicolumn{1}{ |c| }{time (s)} &{ 0.6} & {0.5} & { 0.58} &{0.9} \\ \cline{2-6}
\multicolumn{1}{ |c  }{}                        &
\multicolumn{1}{ |c| }{ iteration} & {27} & { 64} & {73} & { 119} &     \\ \cline{1-6}
\end{tabular}\begin{tabular}{cc|c|c|c|c|l}
\cline{3-6}
& & \multicolumn{4}{ c| }{\bf Seismic} \\ \cline{1-6}
\multicolumn{1}{ |c}{}                        &
\multicolumn{1}{  c|  }{$\eta$}  & 0.2 & 0.1 & 0.09 & 0.08 \\ \cline{1-6}
\multicolumn{1}{ |c| }{\multirow{3}{*}{TFOCS} } &
\multicolumn{1}{ |c| }{SNR (dB)} & 13.05 & 17.4 & 17.9 & 18.5 &     \\ \cline{2-6}
\multicolumn{1}{ |c  }{}                        &
\multicolumn{1}{ |c| }{ time (s)} &{ 593.3 }& {3232.3 }& { 4295.1} &{6140.2} &     \\ \cline{2-6}
\multicolumn{1}{ |c  }{}                        &
\multicolumn{1}{ |c| }{iteration} & {1201 } & {3395} & {3901} & {4451} &     \\ \cline{1-6}
\multicolumn{1}{ |c  }{\multirow{3}{*}{SPG$\ell_1$} } &
\multicolumn{1}{ |c| }{SNR (dB)} &12.8  & 17.0 &17.4 &17.9& \\ \cline{2-6}
\multicolumn{1}{ |c  }{}                        &
\multicolumn{1}{ |c| }{ time (s)} &{ 30.4 }&{ 42.8}  &{32.9}  &{ 58.8}&  \\ \cline{2-6}
\multicolumn{1}{ |c  }{}                        &
\multicolumn{1}{ |c| }{ iteration} & {37 } & { 52} & {40} & {73} &     \\ \cline{1-6}
\multicolumn{1}{ |c  }{\multirow{3}{*}{LR} } &
\multicolumn{1}{ |c| }{SNR (dB)} &13.1 & 17.1 & 17.4 & 18.0 \\ \cline{2-6}
\multicolumn{1}{ |c  }{}                        &
\multicolumn{1}{ |c| }{ time (s)} & {1.6} & {2.9} & {3.2} & {4.0} \\ \cline{2-6}
\multicolumn{1}{ |c  }{}                        &
\multicolumn{1}{ |c| }{ iteration} & {38} & {80} & {87} & {113 } &     \\ \cline{1-6}
\end{tabular}
}
\end{center}
\end{table}

\subsubsection{Simultaneous missing-trace interpolation and denoising}
\label{sec:RobustCompletion}
To illustrate the utility of robust cost functions, we consider a situation where observed data are heavily contaminated. 
The goal here is to simultaneously denoise interpolate the data. 
We work with same seismic line from Gulf of Suez. To obtain the observed data, we apply a sub-sampling mask that randomly removes 50\% of the shots, and to simulate contamination, we replace another 10\% of the shots with large random errors, whose amplitudes are three times the maximum amplitude present in the data. 
\black{In reality, we know the sub-sampling mask but we do not know the behaviour and amplitude of noise.}
In this example, we formulate and solve the robust matrix completion problem~\eqref{BPDN}, where 
the cost $\rho$ is taken to be the penalty~\eqref{StudentForm}; see section~\ref{sec:Robust} for the explanation 
and motivation.  
As in the previous examples, the recovery is done in the
(m-h) domain. We implement the formulation in the frequency domain, where we work with monochromatic frequency slices, and adjust the rank and $\nu$ parameter while going from low to high frequency slices. Figure \ref{fig:9} compares the recovery results with and without using a robust penalty function. The error budget plays a significant role in this example, and we standardized the problems 
by setting the {\it relative error} to be 20\% of the initial error, so that the formulations are comparable.  

We can clearly see that the standard least squares formulation is unable to recover the true solution. 
The intuitive reason is that the least squares penalty is simply unable to budget large errors to what should be the outlying residuals. 
The Student's t penalty, in contrast, achieves a good recovery in this extreme situation, with an SNR of 17.9 DB. 
In this example, we used 300 iterations of SPG$\ell_1$ for all frequency slices.

\subsubsection{Re-Weighting}
\label{sec:Reweighting}
Re-weighting for seismic trace interpolation was recently used in \cite{mansour12iwr} to improve the interpolation of subsampled seismic traces in the context 
of sparsity promotion in the curvelet domain. The weighted $\ell_1$ formulation takes advantage of curvelet support overlap across adjacent frequency slices.

Analogously, in the matrix setting, we use the weighted rank-minimization formulation~\eqref{wBPDN} to take advantage of correlated row and column subspaces for adjacent frequency slices. We first demonstrate the effectiveness of solving the \eqref{wBPDN} problem when we have accurate subspace information. For this purpose, we compute the row and column subspace bases of the fully sampled low frequency (11Hz) seismic slice and pass this information to ~\eqref{wBPDN}  using matrices $Q$ and $W$. Figures~\ref{fig:10}(a) and (b) show the residual of the frequency slice with and without weighting. The reconstruction using the~\eqref{wBPDN} problem achieves a 1.5dB improvement in SNR over the non-weighted~\eqref{BPDN} formulation. 

Next, we apply the \eqref{wBPDN} formulation in a practical setting where we do not know 
subspace bases ahead of time, but learn them as we proceed from low to high 
frequencies. We use the row and column subspace vectors recovered 
using~\eqref{BPDN} for 10.75 Hz and 15.75 Hz frequency slices as subspace estimates for the adjacent higher frequency slices at 11 Hz and 16 Hz. 
Using the~\eqref{wBPDN} formulation in this way yields SNR improvements
of 0.6dB and 1dB, respectively, over~\eqref{BPDN} alone.
Figures~\ref{fig:11}(a) and (b) show the residual for the next higher frequency without using the support and Figures~\ref{fig:11}(c) and (d) shows the residual for next higher frequency with support from previous frequency. Figure~\ref{fig:12} shows the 
recovery SNR versus frequency for weighted and non-weighted cases for 
a range of frequencies from 9 Hz to 17 Hz.

\begin{figure}
  \begin{center}
    \subfigure[]{\includegraphics[scale=0.35]{./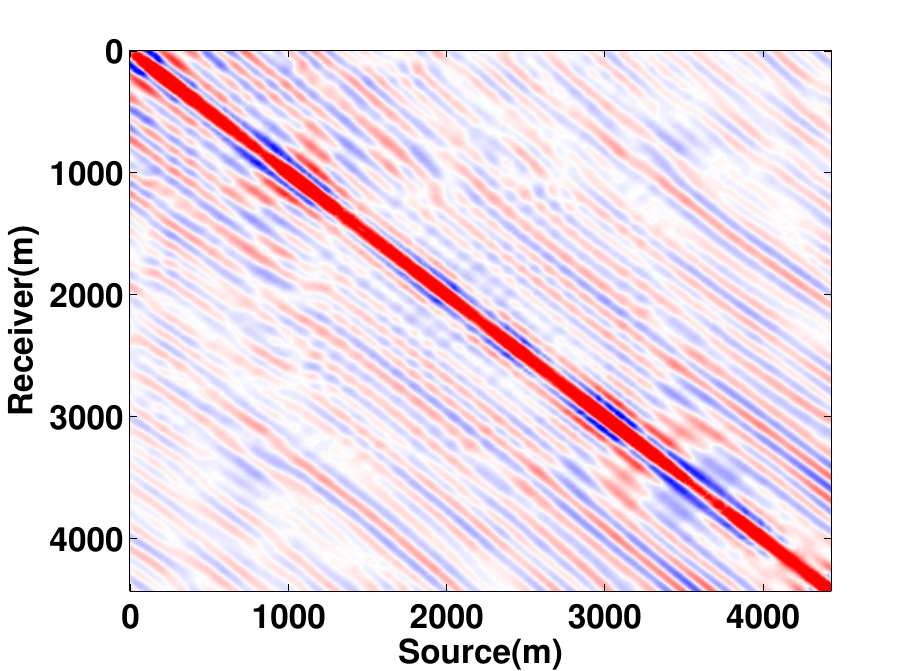}}
    \subfigure[]{\includegraphics[scale=0.35]{./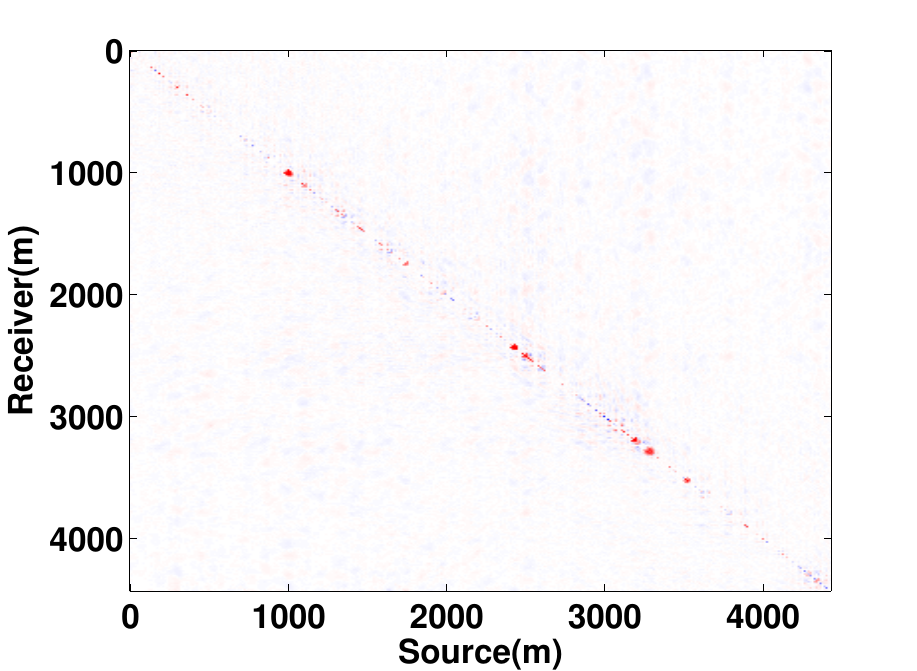}}
     \subfigure[]{\includegraphics[scale=0.35]{./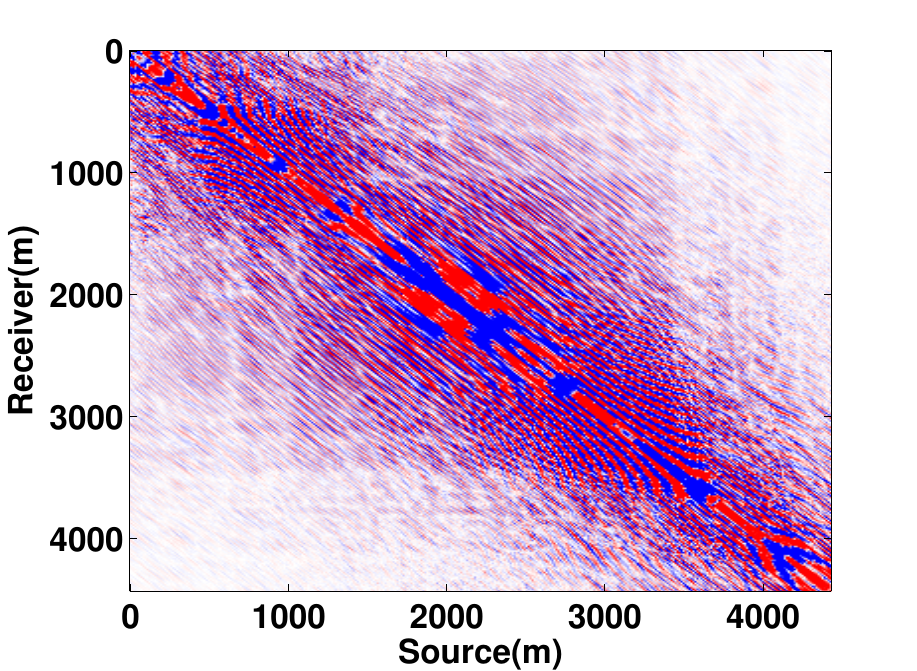}}
    \subfigure[]{\includegraphics[scale=0.35]{./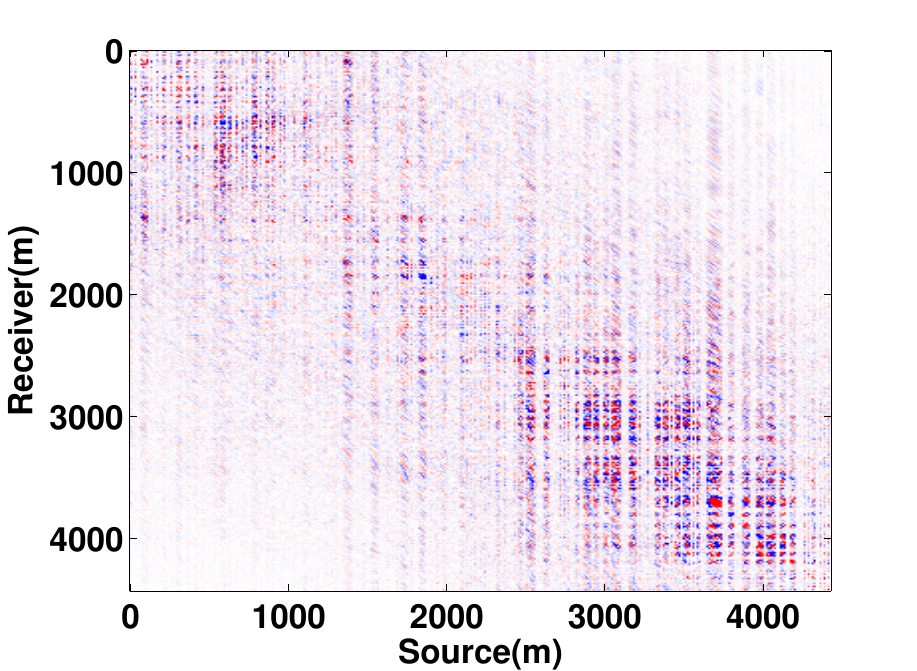}}
  \end{center}
  \caption{Recovery results for 50\% subsampled 2D frequency slices using the nuclear norm formulation. 
  (a) Interpolation and (b) residual of low frequency slice at 12 Hz with SNR = 19.1 dB. (c) Interpolation and (d) residual of high frequency slice at 60 Hz with SNR = 15.2 dB.}
  \label{fig:3}
\end{figure}

\begin{figure}
  \begin{center}
    \subfigure[]{\includegraphics[scale=0.35]{./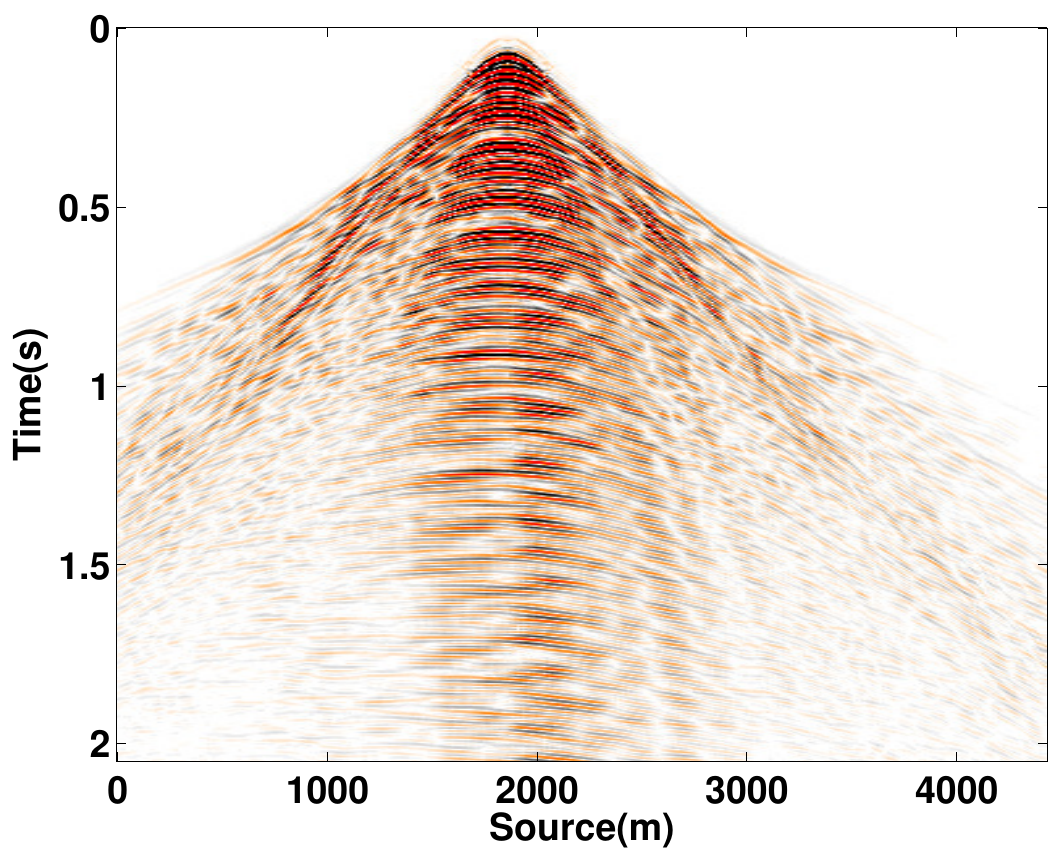}}
     \subfigure[]{\includegraphics[scale=0.35]{./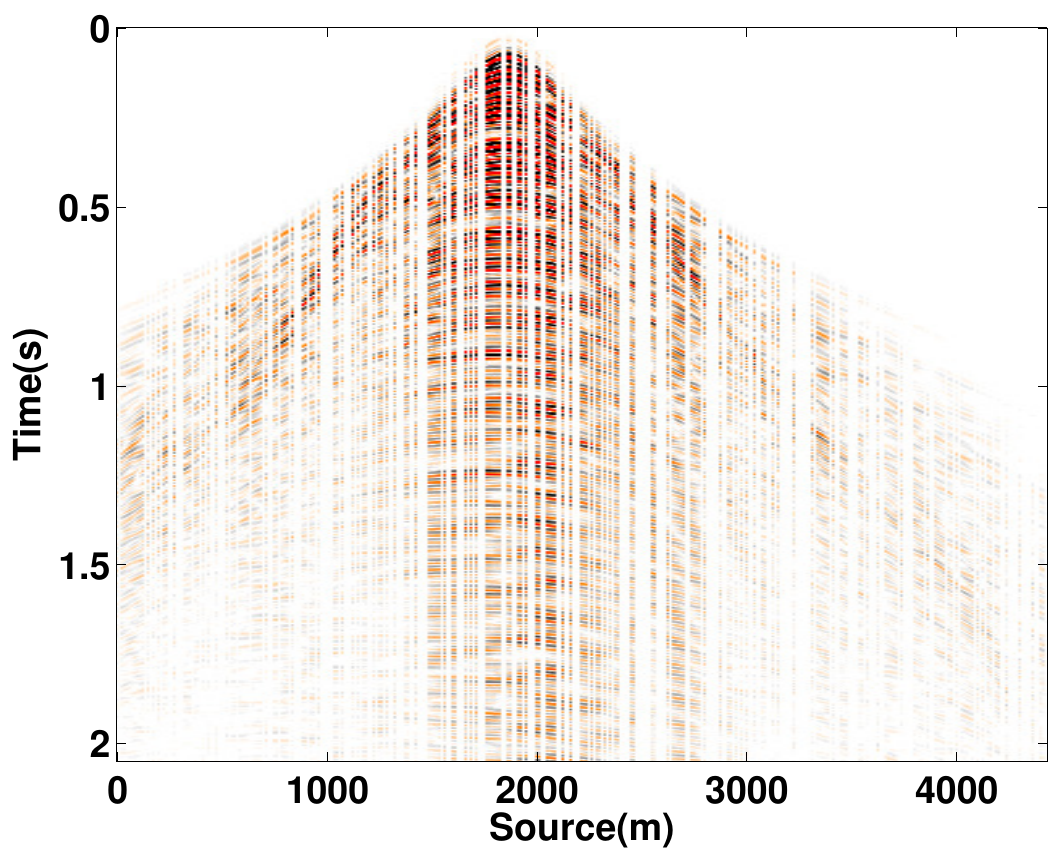}}
    \subfigure[]{\includegraphics[scale=0.35]{./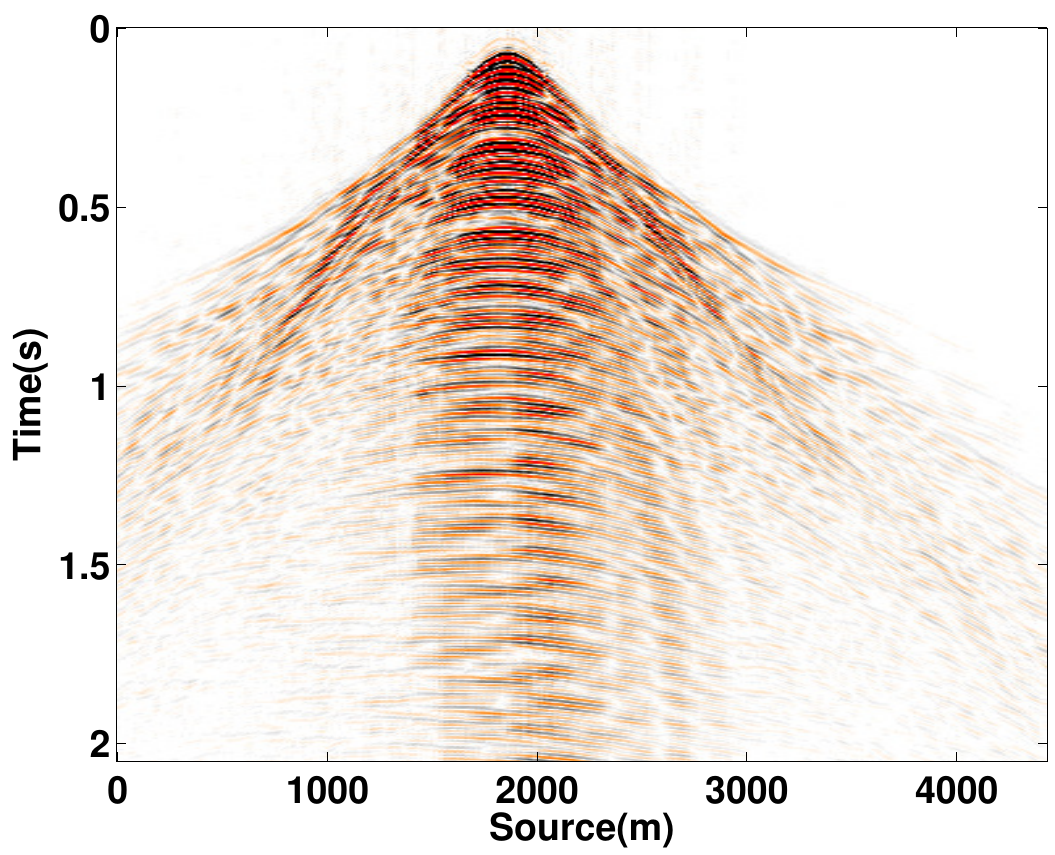}}
    \subfigure[]{\includegraphics[scale=0.35]{./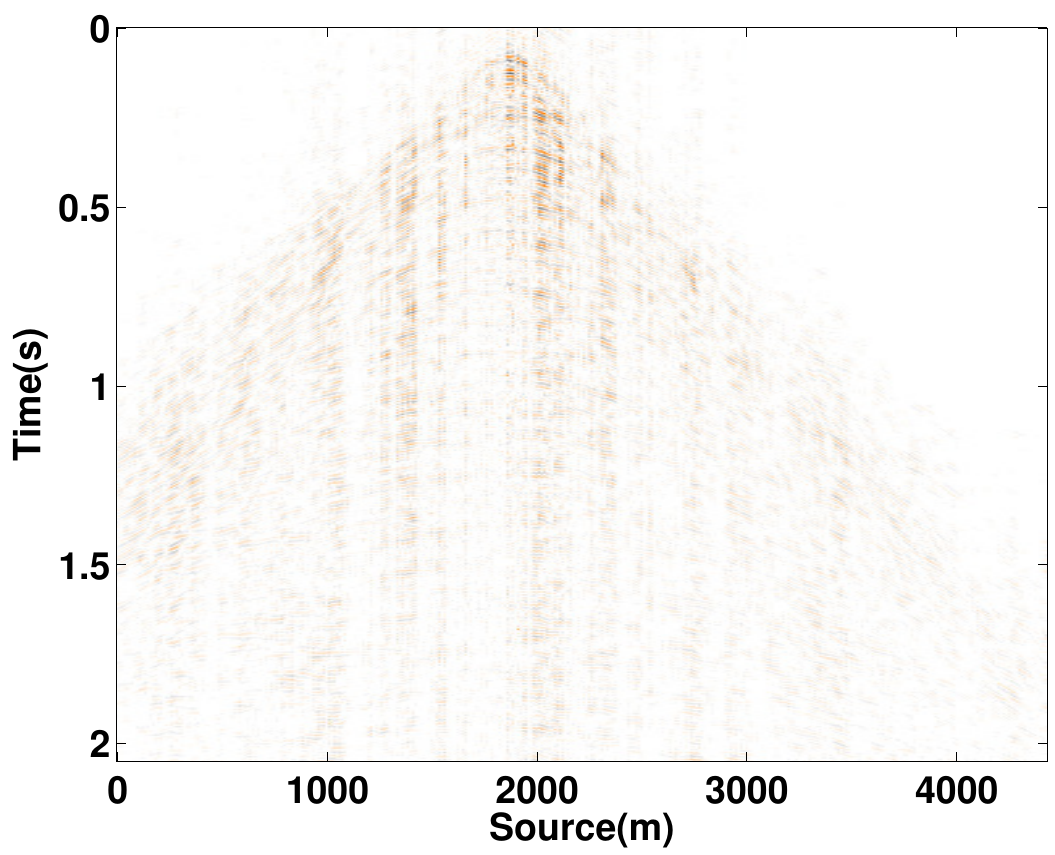}}
  \end{center}
  \caption{Missing trace interpolation of a seismic line from Gulf of Suez.
  (a) Ground truth. (b) $50\%$ subsampled common shot gather.
  (c) Recovery result with a SNR of 18.5 dB.
  (d) Residual.}  
  \label{fig:4}
\end{figure}

\begin{figure}
  \begin{center}
      \subfigure[]{\includegraphics[scale=0.32]{./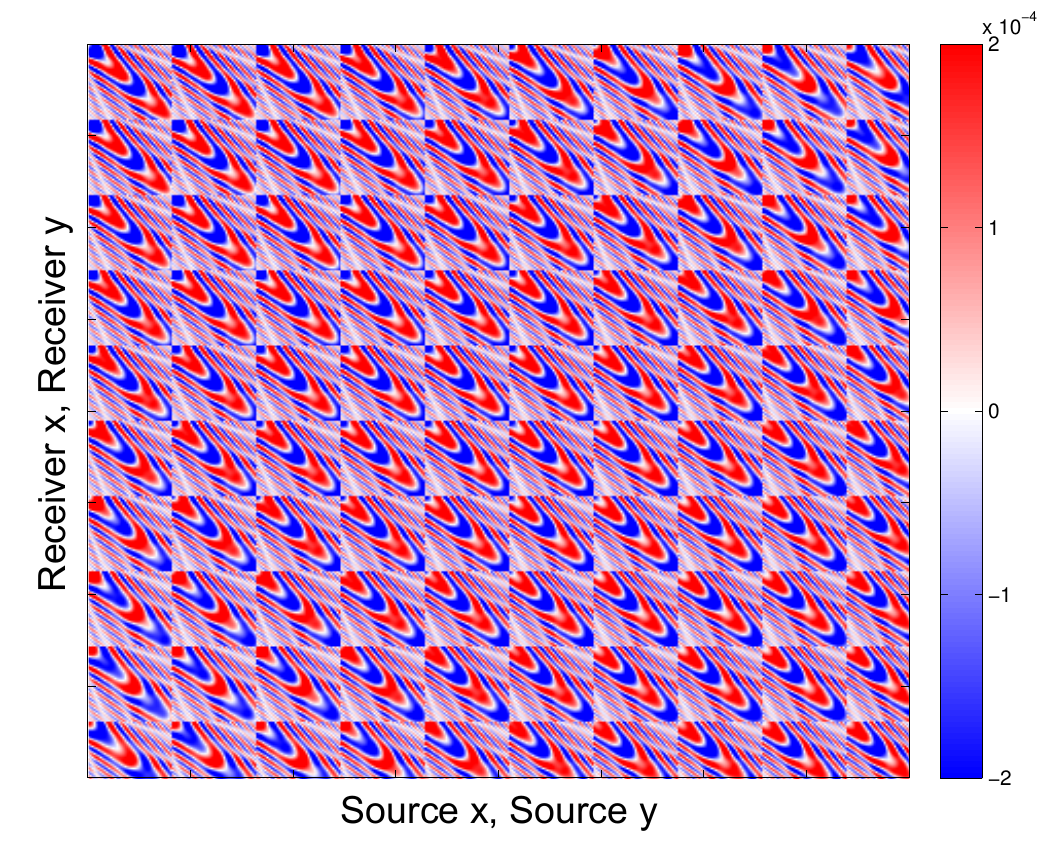}}
      \subfigure[]{\includegraphics[scale=0.32]{./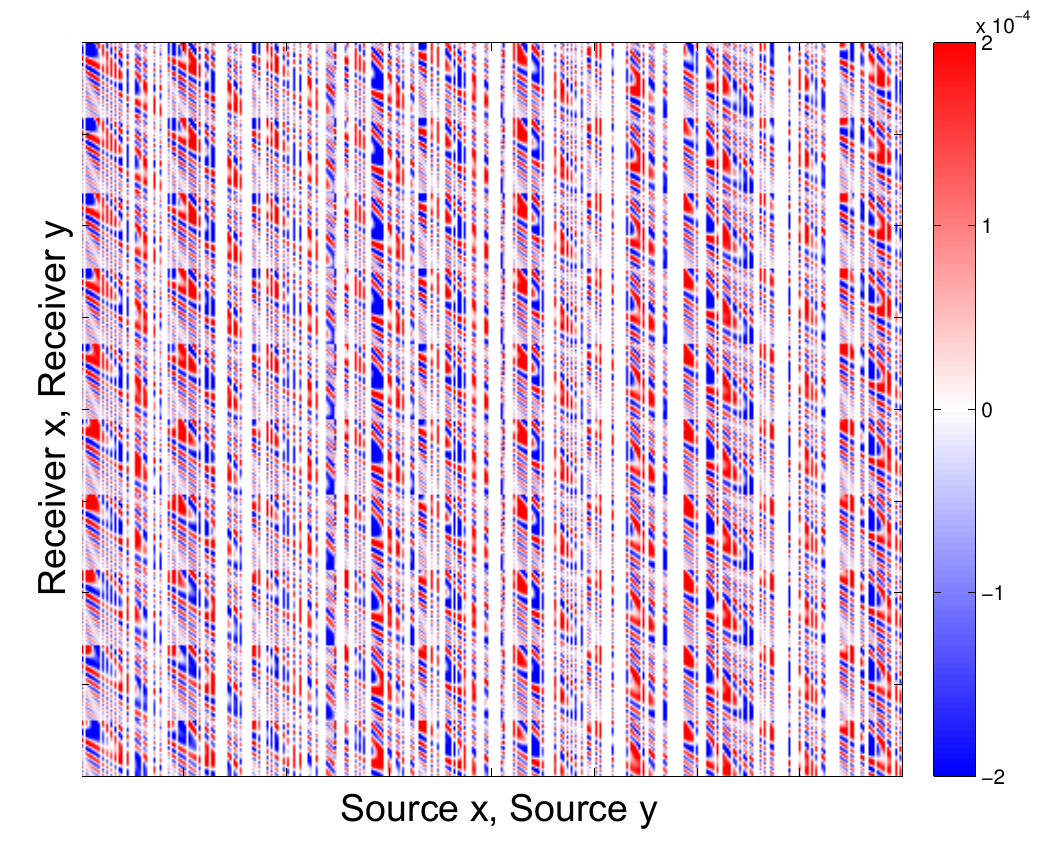}}
       \subfigure[]{\includegraphics[scale=0.32]{./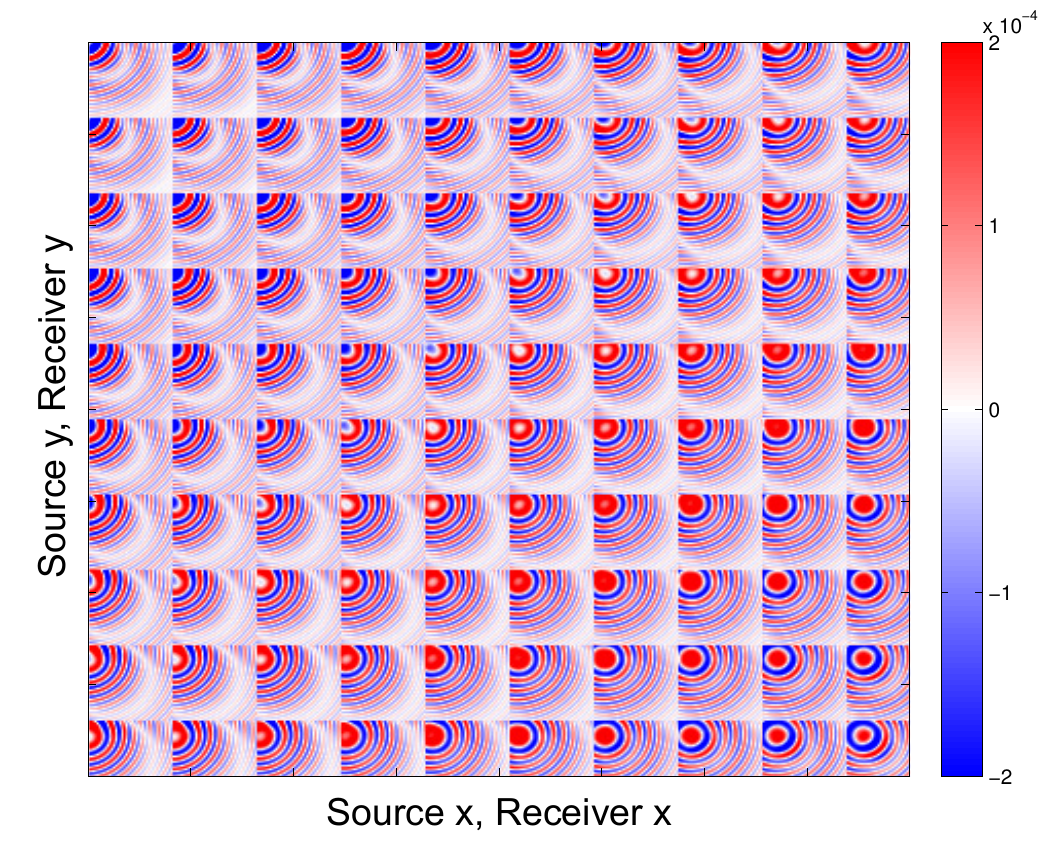}}
      \subfigure[]{\includegraphics[scale=0.32]{./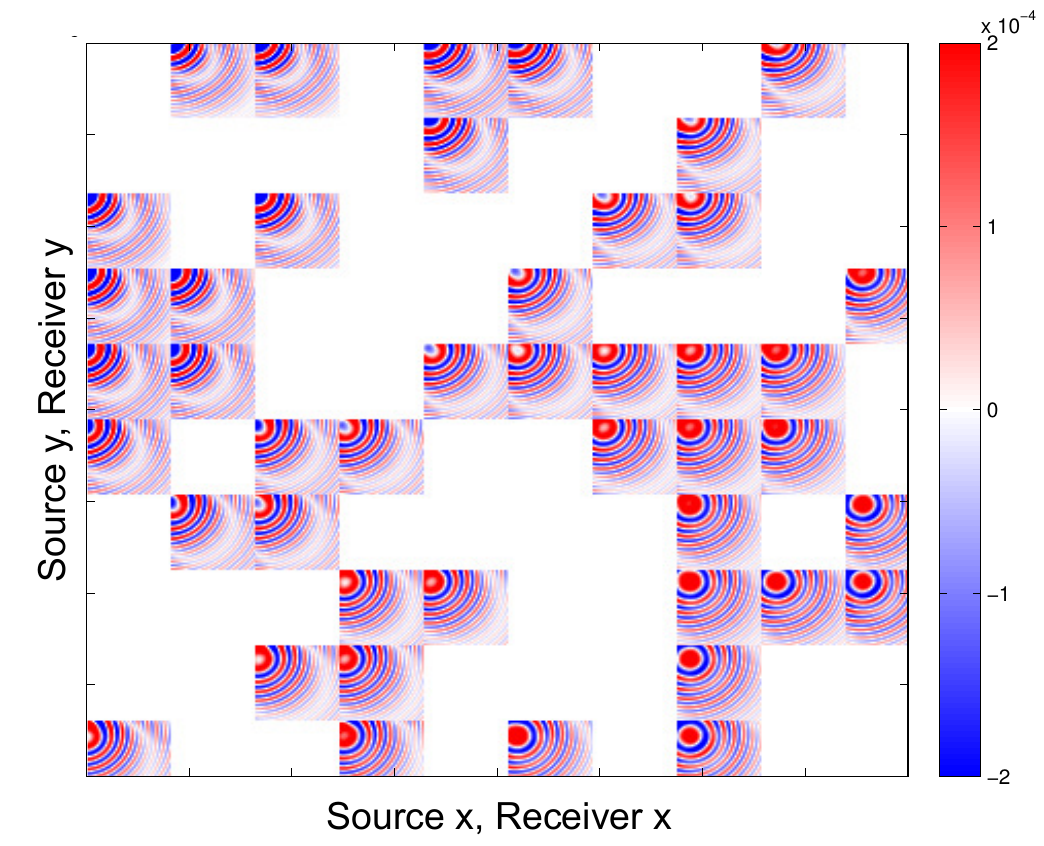}}
  \end{center}
  \caption{Matricization of 4D monochromatic frequency slice. Top: (Source x, Source y) matricization. Bottom: (Source x, Receiver x) matricization.
                  Left: Fully sampled data; Right: Subsampled data.}
  \label{fig:5}
\end{figure}

\begin{figure}
  \begin{center}
        \subfigure[]{\includegraphics[scale=0.32]{./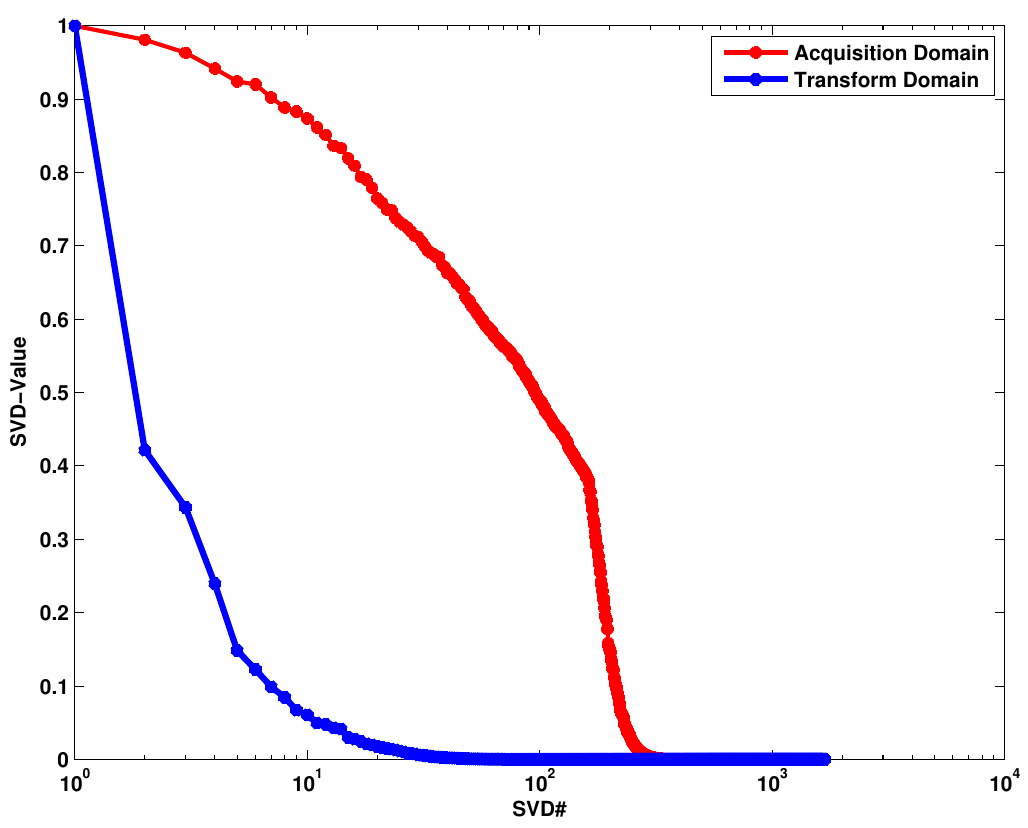}}
      \subfigure[]{\includegraphics[scale=0.32]{./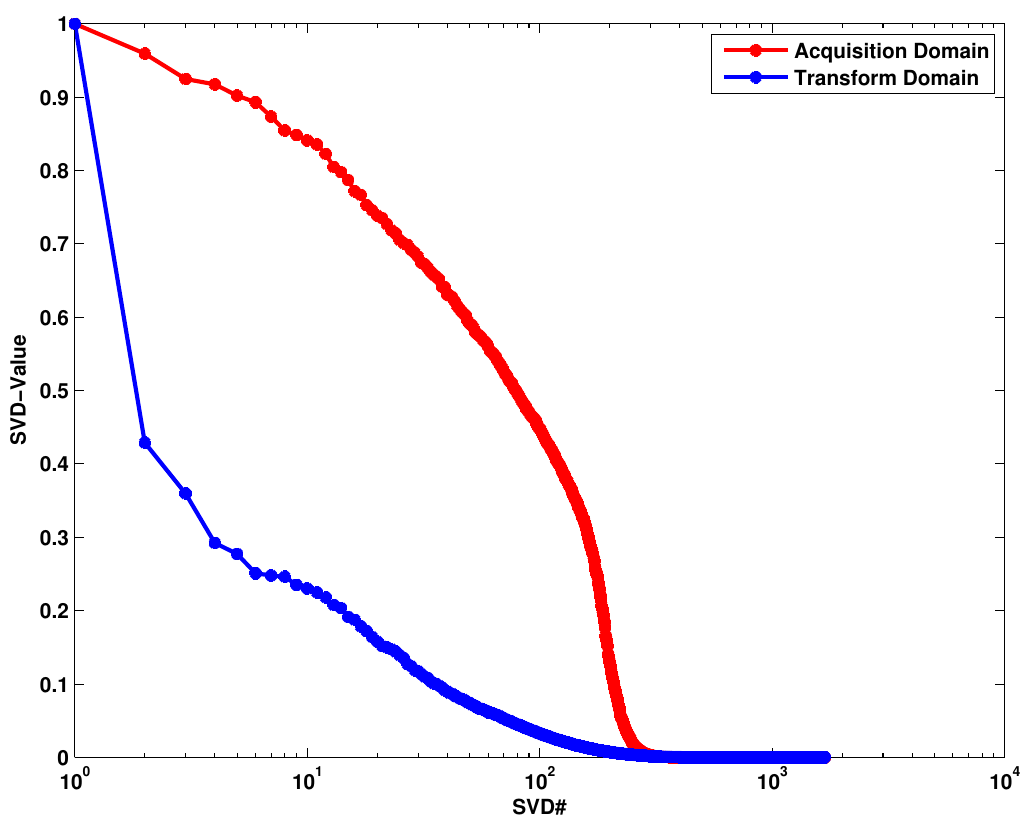}}
  \end{center}
  \caption{Singular value decay in case of different matricization of 4D monochromatic frequency slice.
                  Left: Fully sampled data; Right: Subsampled data.}
  \label{fig:13}
\end{figure}

\begin{figure}
  \begin{center}
      \subfigure[]{\includegraphics[scale=0.3]{./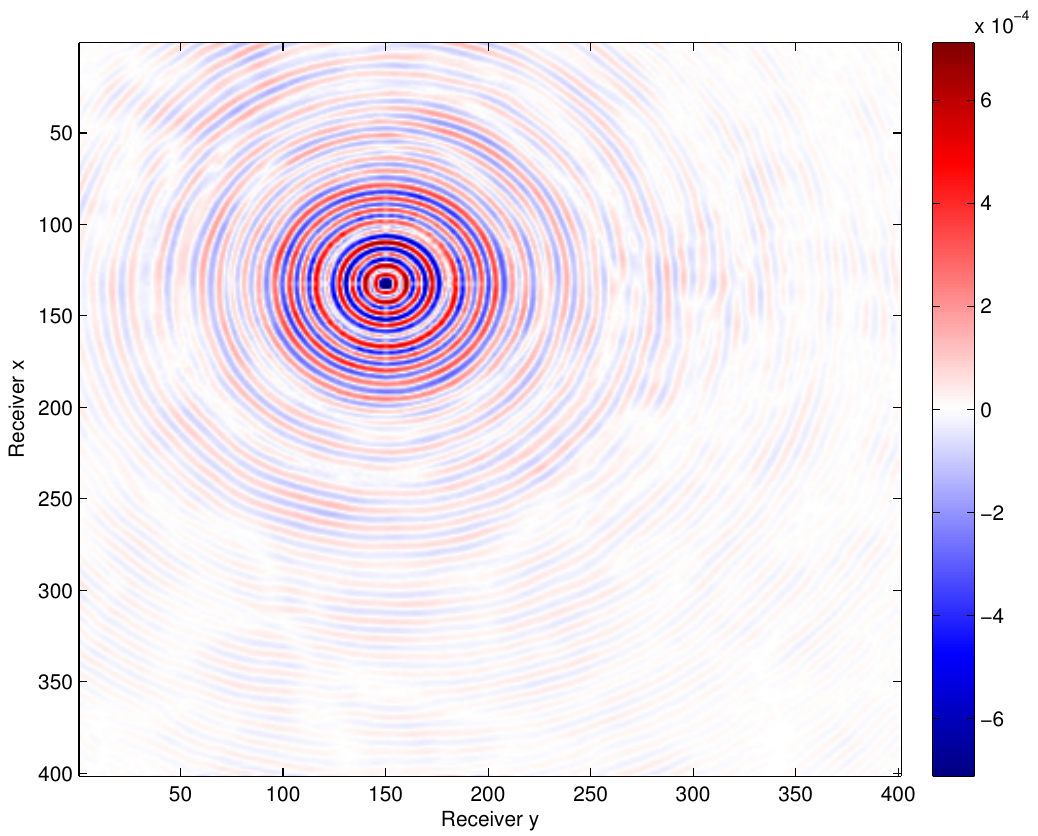}}
      \subfigure[]{\includegraphics[scale=0.3]{./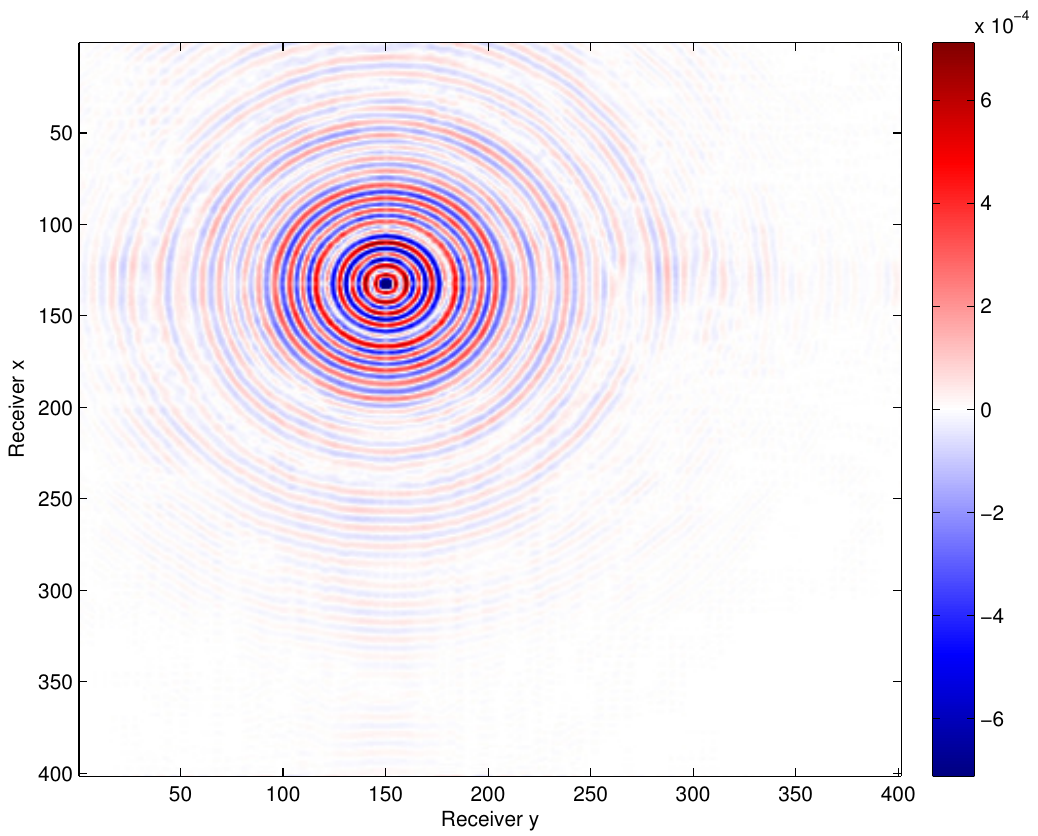}}
      \subfigure[]{\includegraphics[scale=0.3]{./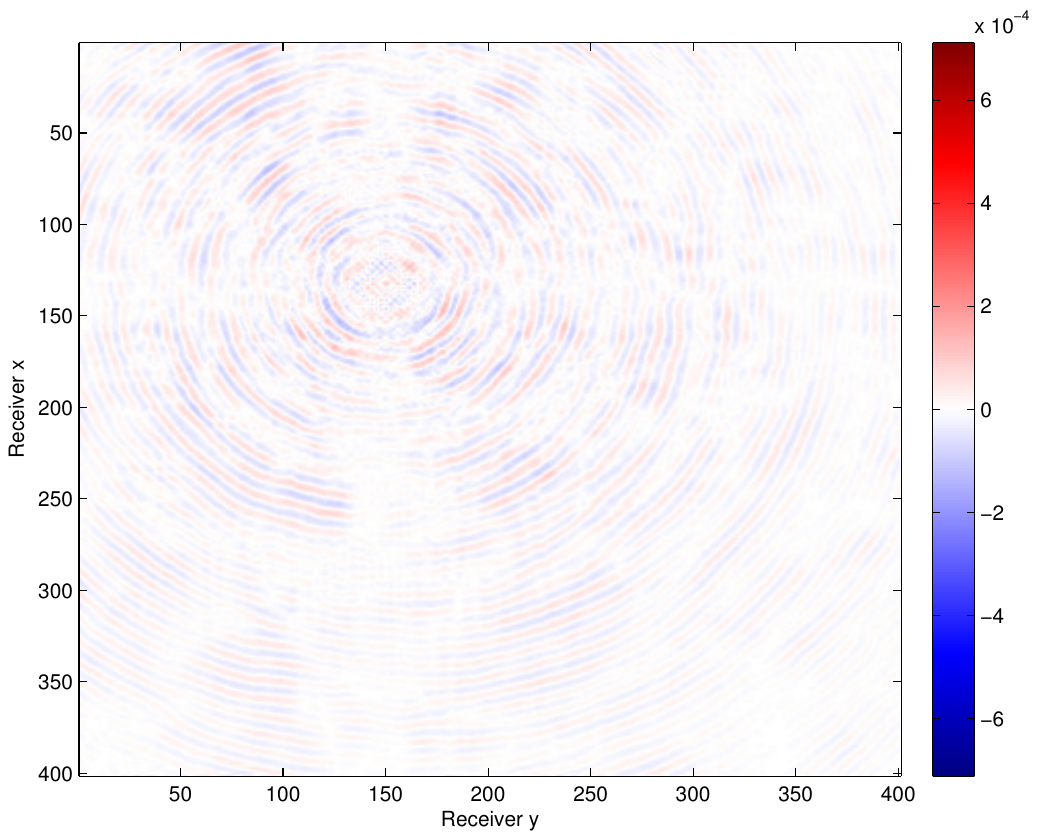}}
      \subfigure[]{\includegraphics[scale=0.3]{./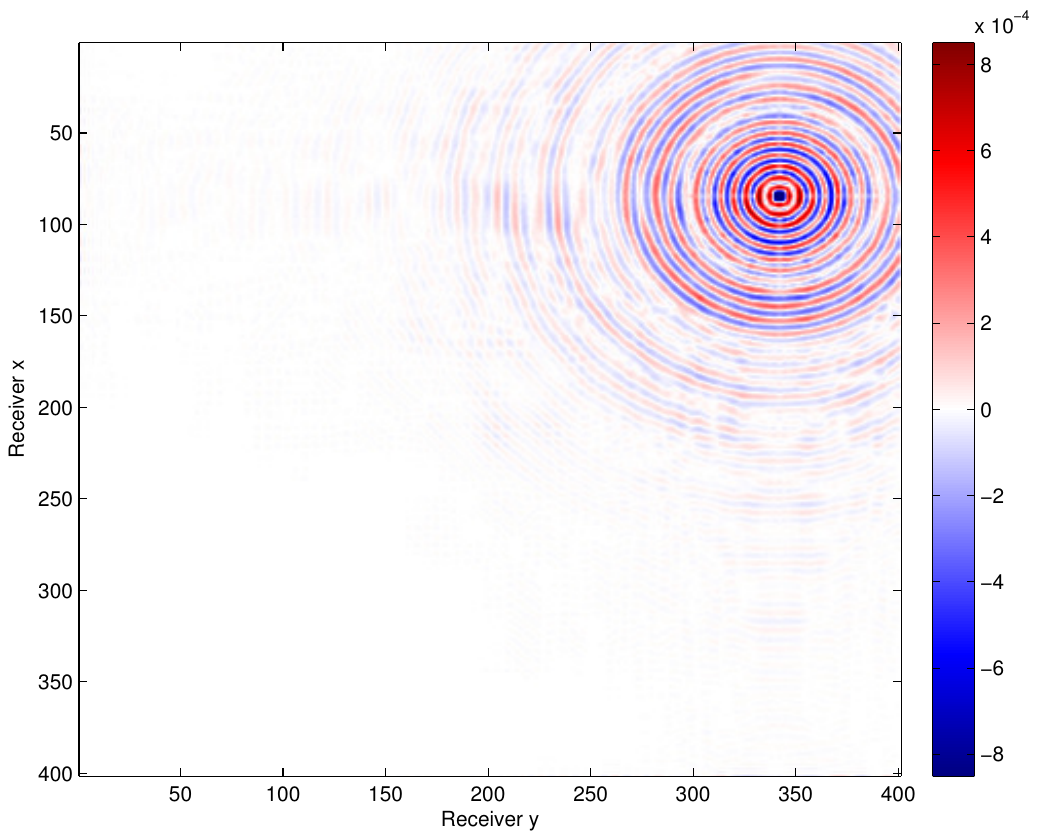}}
      \subfigure[]{\includegraphics[scale=0.3]{./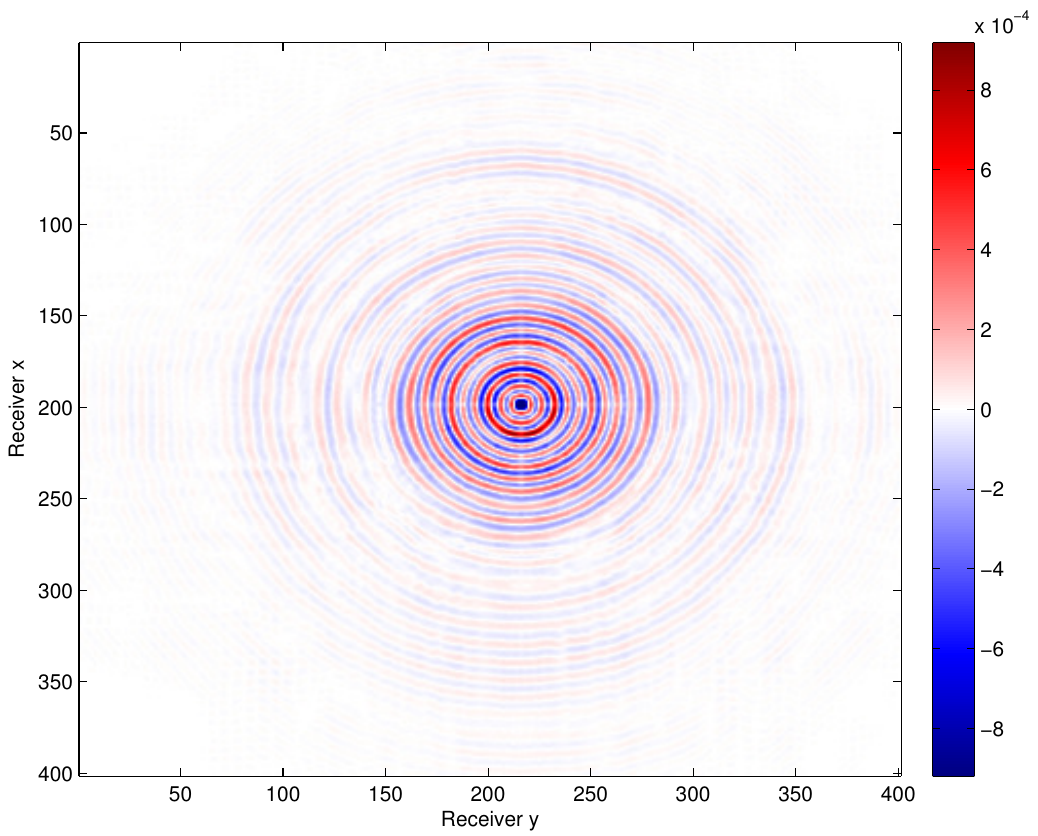}}
      \subfigure[]{\includegraphics[scale=0.3]{./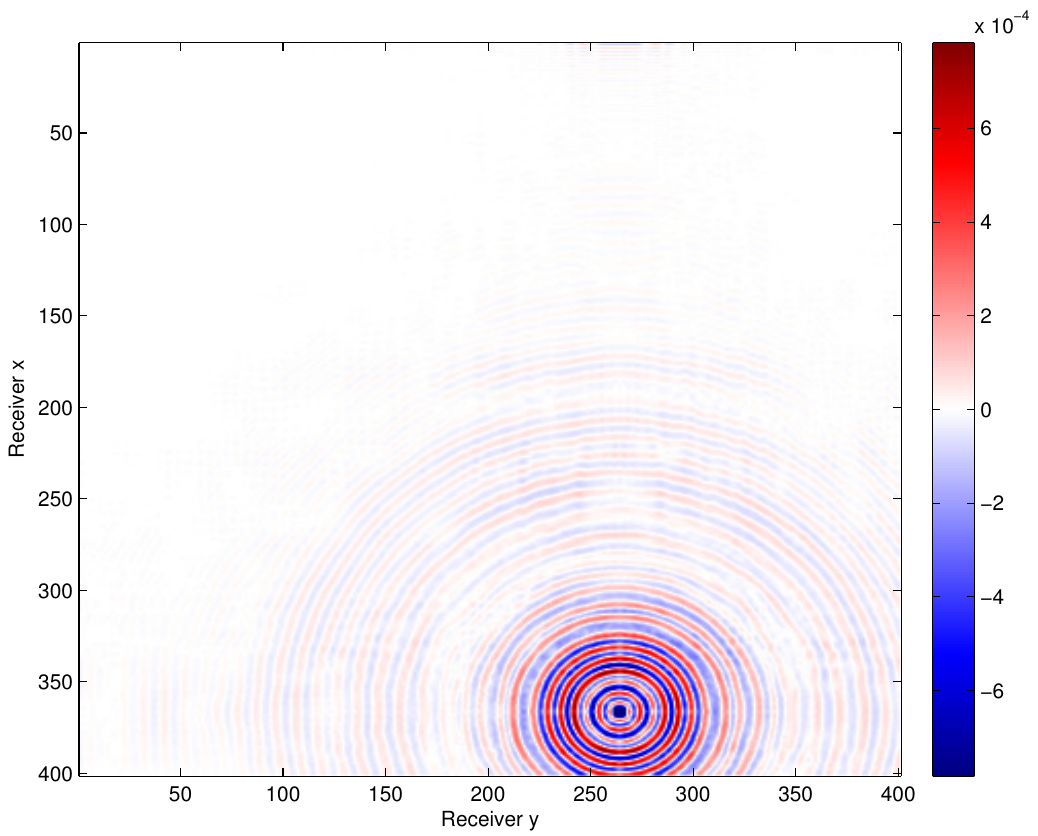}}
  \end{center}
  \caption{Missing-trace interpolation of a frequency slice at 12.3Hz extracted from 5D data set, \black{$75\%$ missing data}. (a,b,c) Original, recovery and residual of a common shot gather with a SNR of 11.4 dB at the location
where shot is recorded. (d,e,f) Interpolation of common shot gathers at the location where no reference
shot is present.}
  \label{fig:6}
\end{figure}

\begin{figure}
  \begin{center}
      \subfigure[]{\includegraphics[scale=0.3]{./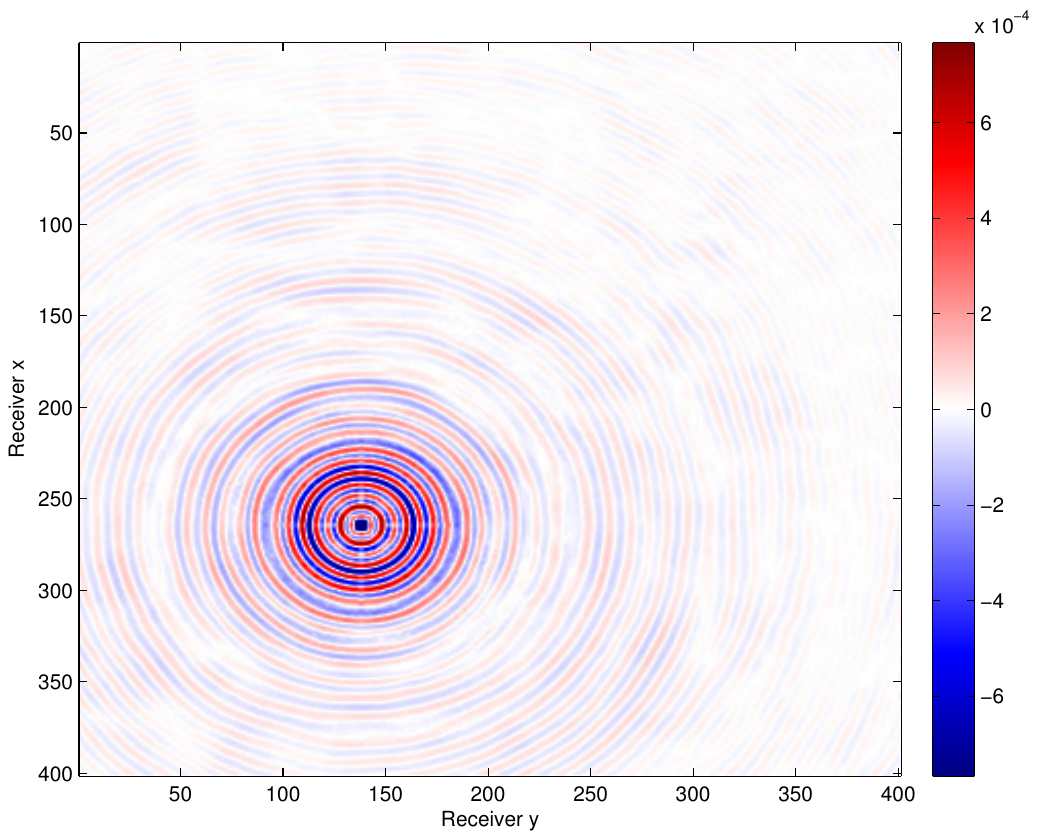}}
      \subfigure[]{\includegraphics[scale=0.3]{./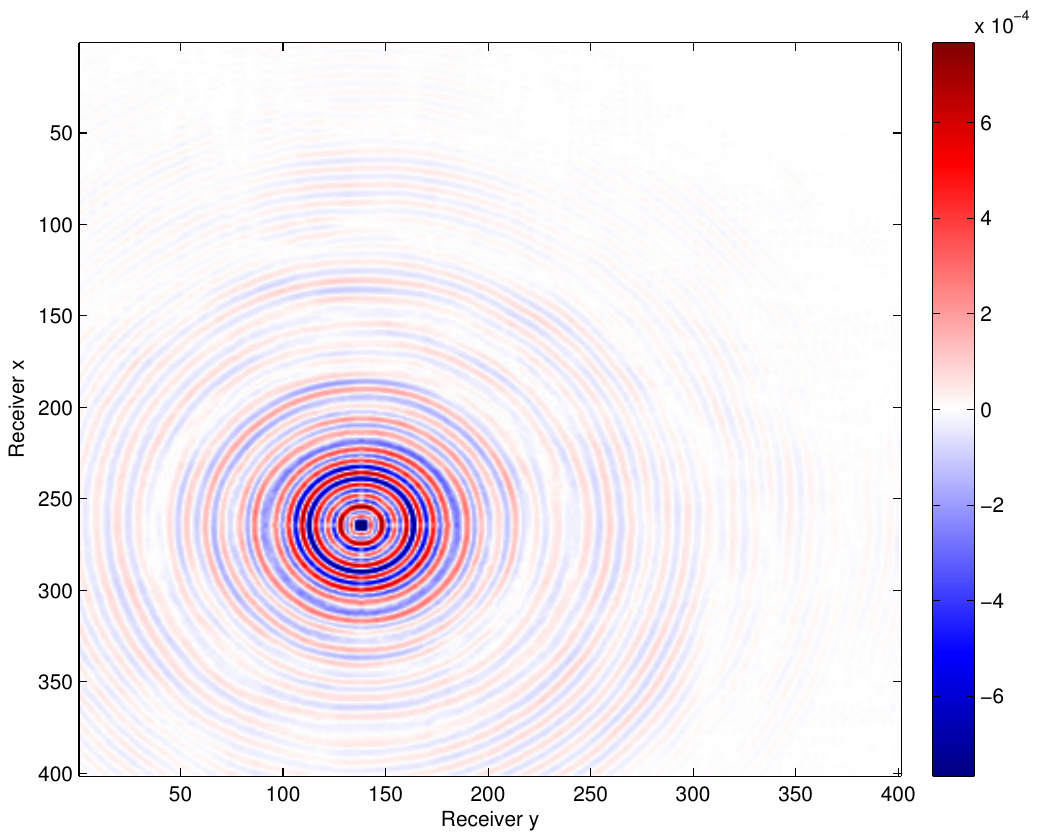}}
      \subfigure[]{\includegraphics[scale=0.3]{./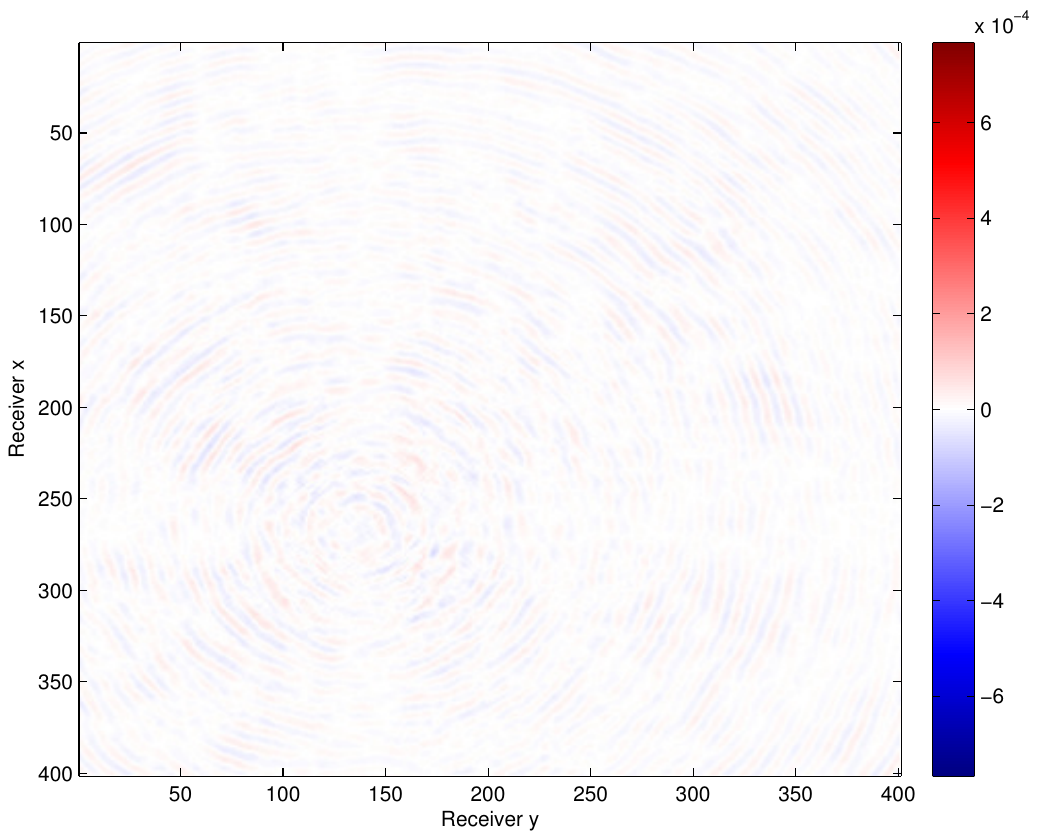}}
      \subfigure[]{\includegraphics[scale=0.3]{./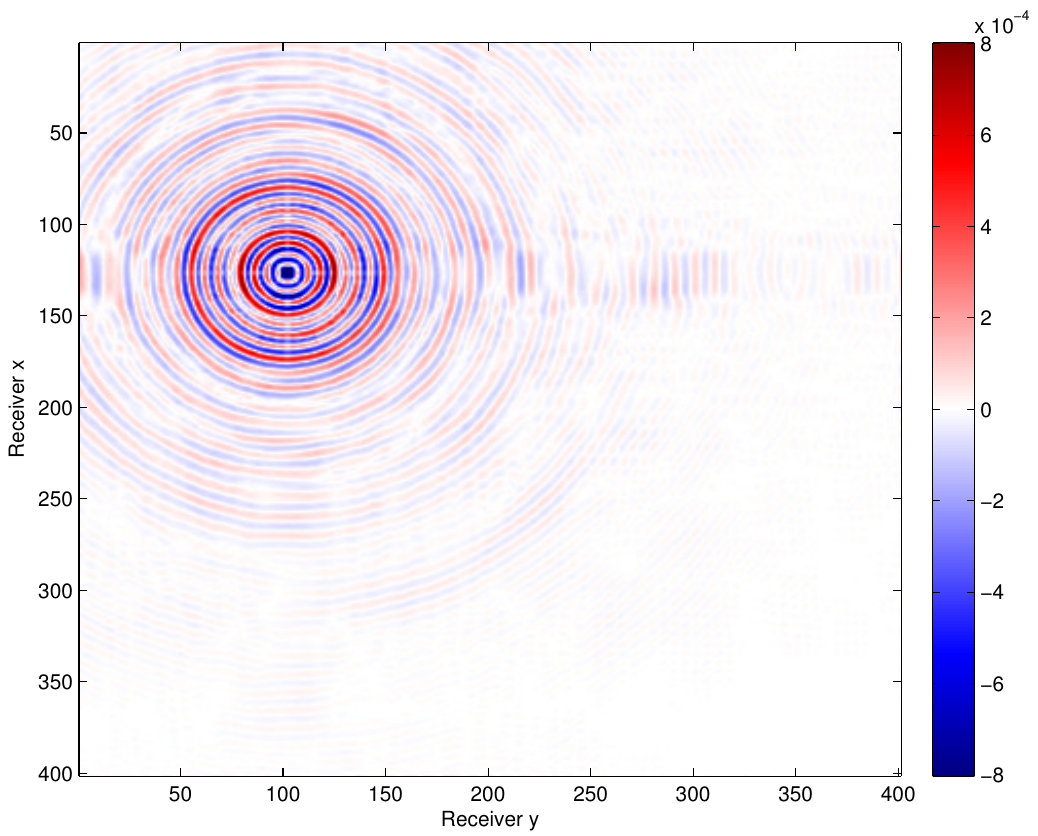}}
      \subfigure[]{\includegraphics[scale=0.3]{./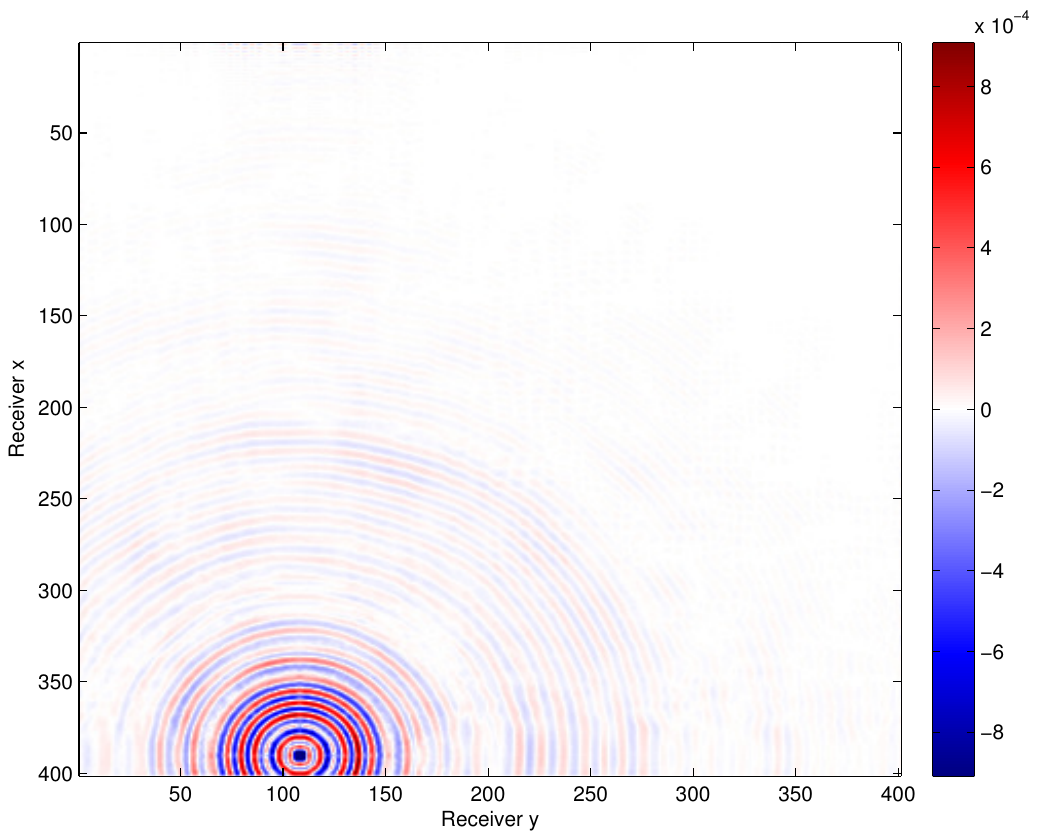}}
      \subfigure[]{\includegraphics[scale=0.3]{./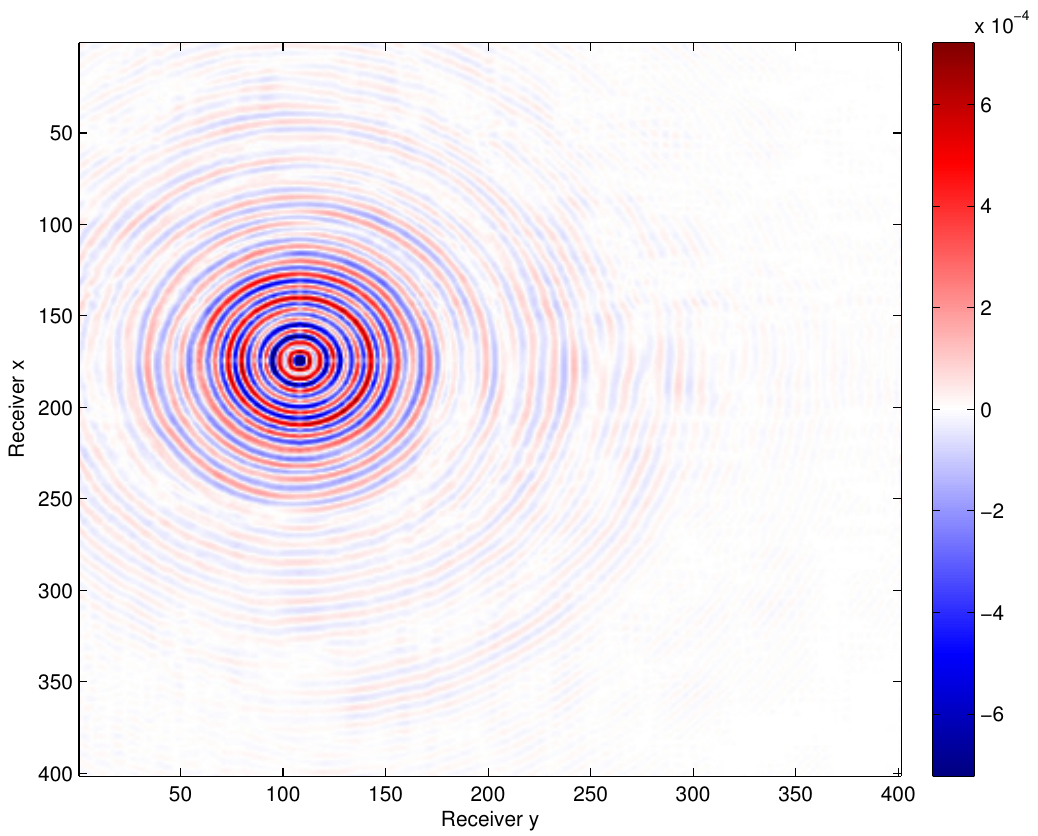}}
  \end{center}
  \caption{Missing-trace interpolation of a frequency slice at 12.3Hz extracted from 5D data set, \black{$50\%$ missing data}. (a,b,c) Original, recovery and residual of a common shot gather with a SNR of 16.6 dB at the location
where shot is recorded. (d,e,f) Interpolation of common shot gathers at the location where no reference
shot is present.}
  \label{fig:7}
\end{figure}

\begin{figure}
  \begin{center}
    \subfigure[]{\includegraphics[scale=0.4]{./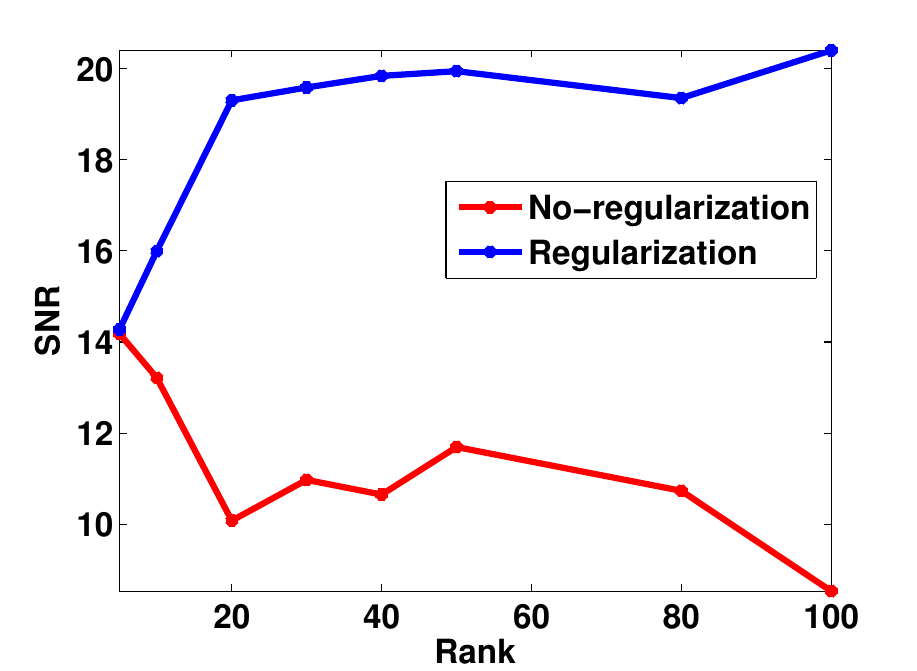}}
    \subfigure[]{\includegraphics[scale=0.4]{./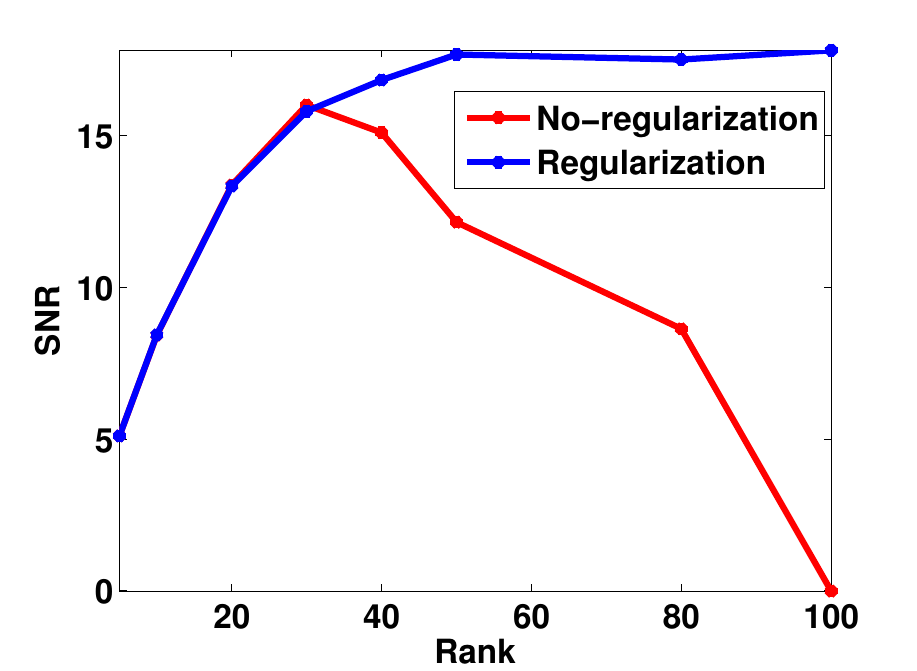}}
  \end{center}
  \caption{Comparison of regularized and non-regularized formulations. SNR of 
  (a) low frequency slice at 12 Hz and 
  (b) high frequency slice at 60 Hz over a range of factor ranks. 
  Without regularization, recovery quality decays with factor rank 
  due to  over-fiting; the regularized formulation improves with higher factor rank. }
  \label{fig:8}
\end{figure}

\begin{figure}
  \begin{center}
    \subfigure[]{\includegraphics[scale=0.35]{./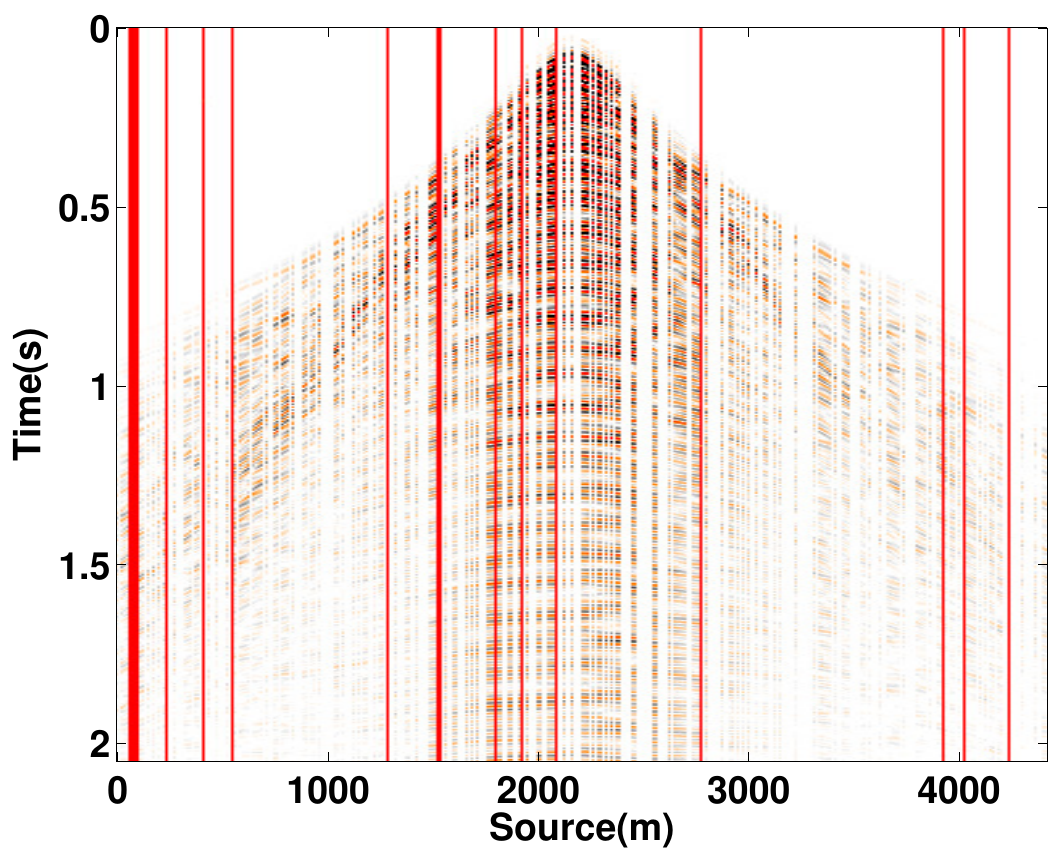}}
     \subfigure[]{\includegraphics[scale=0.35]{./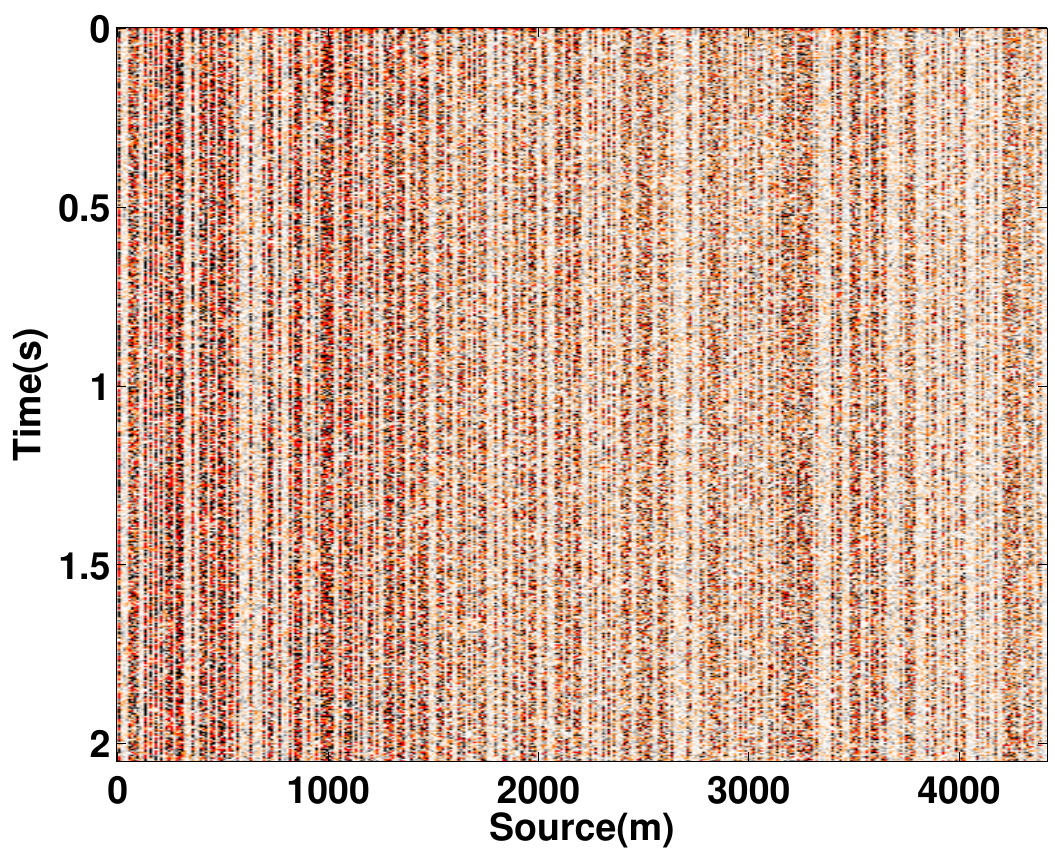}}
    \subfigure[]{\includegraphics[scale=0.35]{./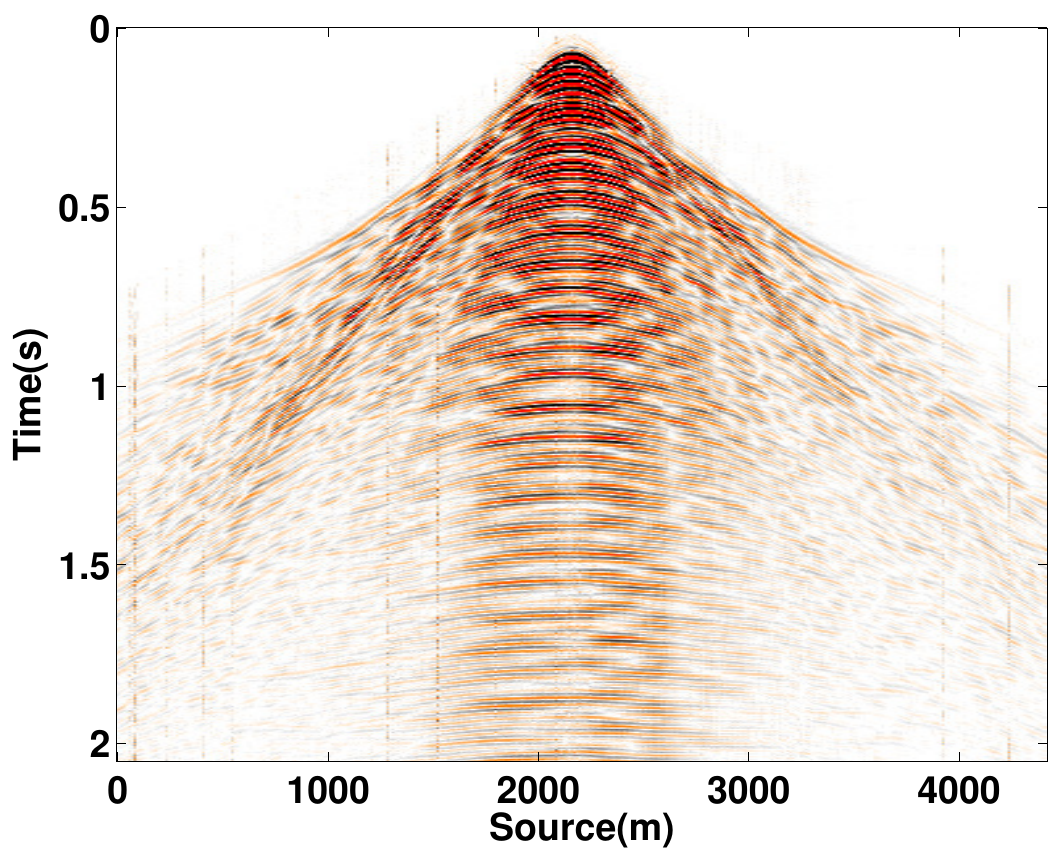}}
    \subfigure[]{\includegraphics[scale=0.35]{./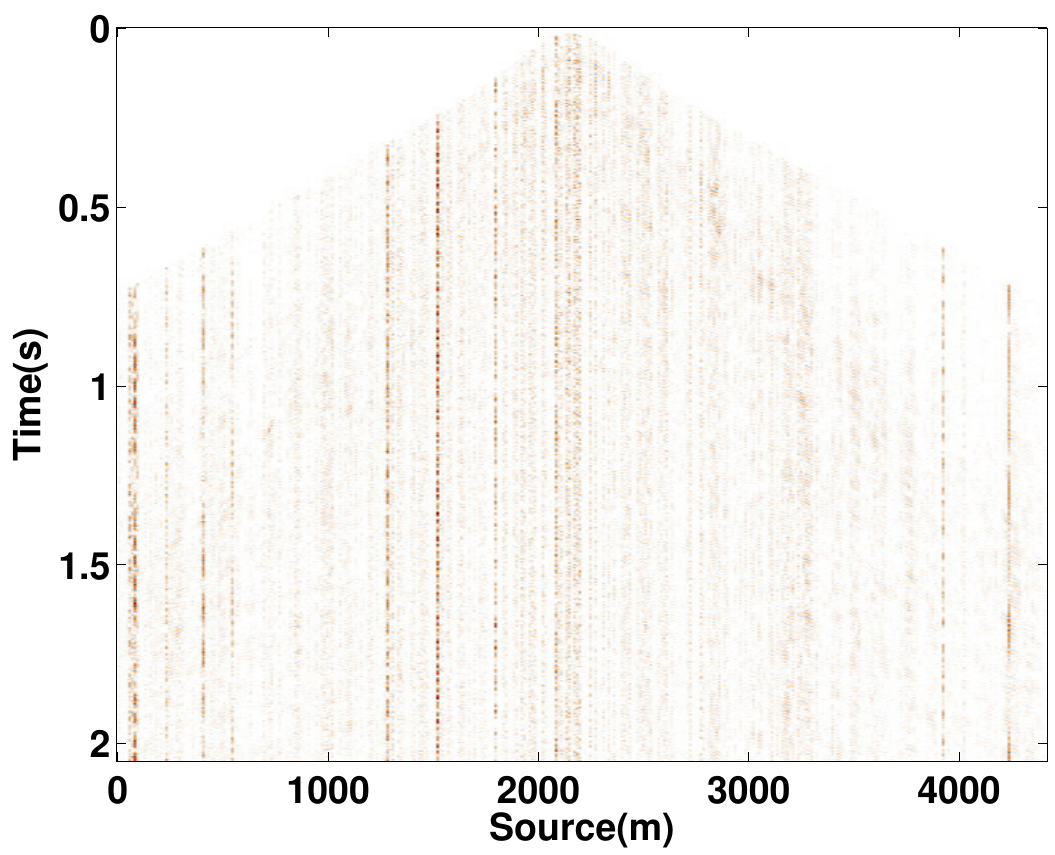}}
  \end{center}
  \caption{Comparison of interpolation and denoising results for the Student's t and least-squares misfit function.
  (a) $50\%$ subsampled common receiver gather with another 10 ${\%}$ of the shots replaced 
  by large errors.
  (b) Recovery result using the least-squares misfit function.
  (c,d) Recovery and residual results using the student's t misfit function with a SNR of 17.2 dB. }  
  \label{fig:9}
\end{figure}

\begin{figure}
  \begin{center}
    \subfigure[]{\includegraphics[scale=0.6]{./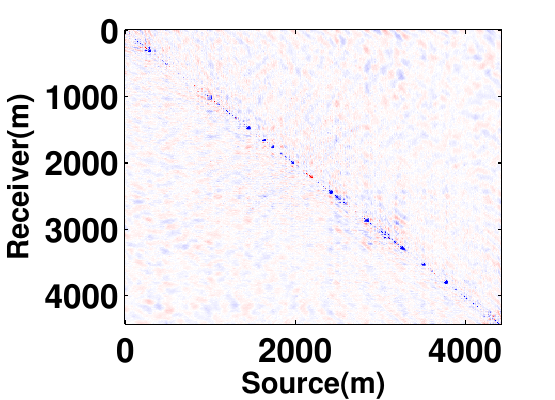}}
    \subfigure[]{\includegraphics[scale=0.6]{./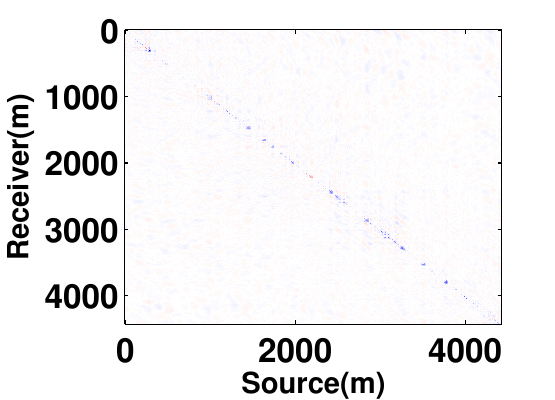}}
   \end{center}
  \caption{
 Residual error for recovery of 11 Hz slice
 (a)  without weighting and 
 (b) with weighting using true support. SNR in this case is improved by 1.5 dB. }
  \label{fig:10}
\end{figure}

\begin{figure}
  \begin{center}
    \subfigure[]{\includegraphics[scale=0.6]{./Fig1/orig11hznosup.pdf}}
    \subfigure[]{\includegraphics[scale=0.6]{./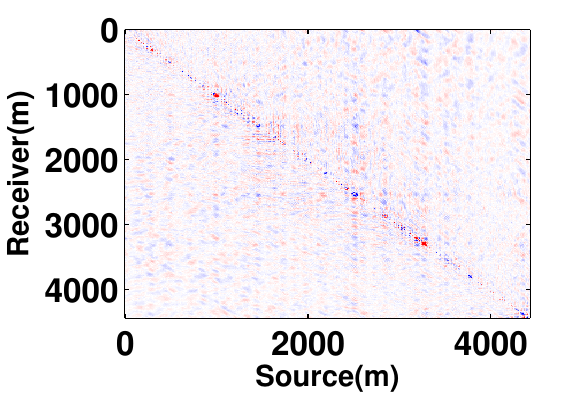}}\\
     \subfigure[]{\includegraphics[scale=0.6]{./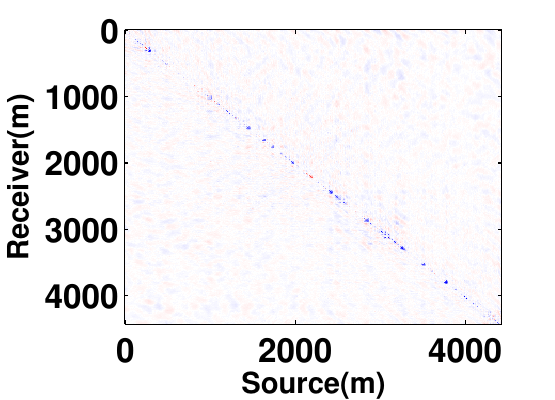}}
     \subfigure[]{\includegraphics[scale=0.6]{./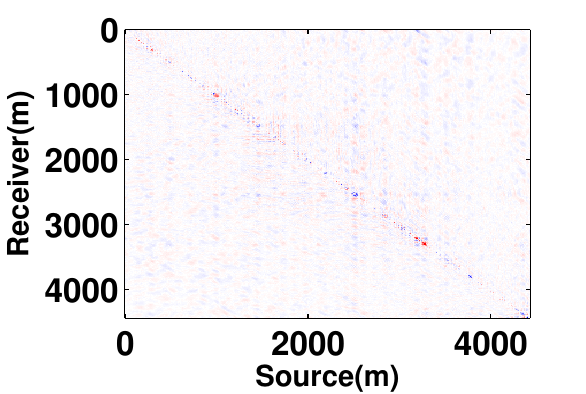}}
  \end{center}
  \caption{
 Residual of low frequency slice at 11 Hz (a) without weighing 
 (c) with support from 10.75 Hz frequency slice.
  SNR is improved by 0.6 dB. 
  Residual of low frequency slice at 16 Hz 
  (b) without weighing 
  (d)  with support from 15.75 Hz frequency slice. 
  SNR is improved by 1dB. Weighting using learned support
   is able to improve on the unweighted interpolation results.}
  \label{fig:11}
\end{figure}

\begin{figure}
  \begin{center}
    {\includegraphics[scale=0.8]{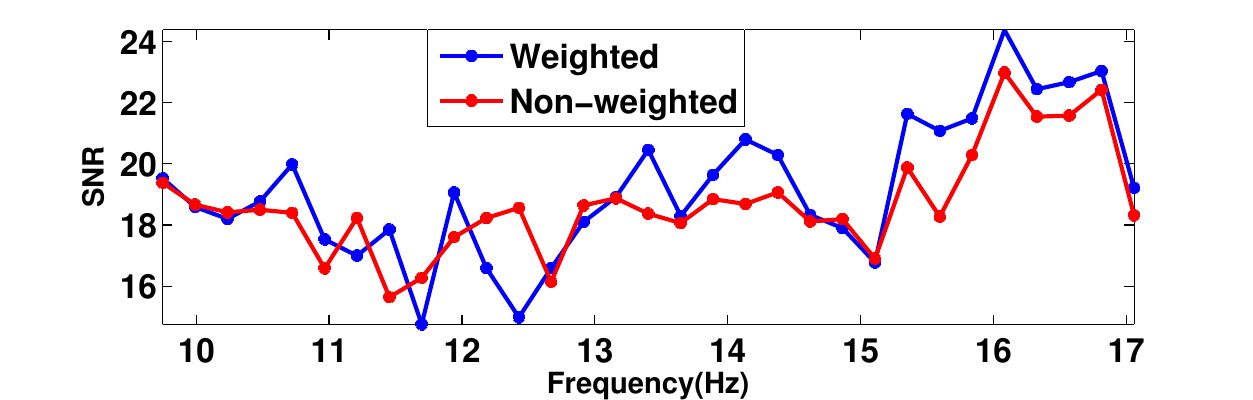}}
  \caption{Recovery results of practical scenario in case of weighted factorized formulation over a frequency range of 9-17 Hz. The weighted formulation outperforms the
  non-weighted for higher frequencies.
For some frequency slices, the performance of the non-weighted algorithm is better,
because the weighted algorithm can be negatively affected 
when the subspaces are less correlated. }  \label{fig:12}
  \end{center}
\end{figure}


%

\section{Conclusions}
We have presented a new method for matrix completion. 
Our method combines the Pareto curve approach 
for optimizing~\eqref{BPDN} formulations 
with SVD-free matrix factorization methods. 

We demonstrated the modeling advantages of the~\eqref{BPDN} formulation
on the Netflix Prize problem, and obtained high-quality reconstruction
results for the seismic trace interpolation problem. 
Comparison with state of the art methods for the~\eqref{BPDN}
formulation showed that the factorized formulation is faster 
than both TFOCS and classic SPG$\ell_1$ formulations that rely 
on the SVD. 
\black{The presented factorized approach also has 
a small memory imprint and does not rely on SVDs, 
which makes this method applicable to truly large-scale problems.}

We also proposed two extensions. First, using robust penalties $\rho$
in~\eqref{BPDN}, we showed that simultaneous interpolation and 
denoising can be achieved in the extreme data contamination case, 
where 10\% of the data was replaced by large outliers. 
Second, we proposed a weighted extension~\eqref{wBPDN},
and used it to incorporate subspace information we learned
on the fly to improve interpolation in adjacent frequencies. 

\section{Appendix}
\paragraph{Proof of Theorem~\ref{thm:genFact}}
Recall~\cite[Lemma 2.1]{Burer03localminima}: if $SS^T = KK^T$, then $S = KQ$ for some orthogonal matrix $Q \in \mathbb{R}^{r\times r}$. 
Next, note that the objective and constraints of~\eqref{generalXfact} are given in terms of $SS^T$, and for any orthogonal $Q \in \mathbb{R}^{r\times r}$, we have $SQQ^TS^T = SS^T$, so $\bar S$ is a local minimum of~\eqref{generalXfact} if and only if $\bar S Q$ is a local minimum for all orthogonal $Q \in \mathbb{R}^{r \times r}$.  

If $\bar Z $ is a local minimum of~\eqref{generalX}, then any factor $\bar S$ with $\bar Z = \bar S \bar S^T$ 
is a local minimum of~\eqref{generalXfact}. Otherwise, we can find a better solution $\tilde S$ in the neighborhood of $\bar S$, and 
then $\tilde Z := \tilde S \tilde S^T$ will be a feasible solution for~\eqref{generalX} in the neighborhood of $\bar Z$ (by continuity of the map $S \rightarrow SS^T$).
 
We prove the other direction by contrapositive. If $\bar Z$ is not a local minimum for~\eqref{generalX}, then you can find a sequence of feasible solutions $Z_k$ with $f(Z^k) < f(\bar Z)$ and $Z_k \rightarrow \bar Z$.  For each $k$, write $Z_k = S_k S_k^T$. 
Since $Z_k$ are all feasible for~\eqref{generalX}, so $S_k$ are feasible for~\eqref{generalXfact}. By assumption
$\{Z_k\}$ is bounded, and so is $S_k$; we can therefore find a subsequence of $S_j \rightarrow \tilde S$ with $\tilde S \tilde S^T  = \bar Z$,  
and $f(S_j S_j^T) < f(\tilde S \tilde S^T)$. In particular, we have $\bar Z = \bar S \bar S^T = \tilde S \tilde S^T$, and $\tilde S$ is not a local
minimum for~\eqref{generalXfact}, and therefore (by previous results) $\bar S$ cannot be either.

\clearpage
\bibliography{siam}
\bibliographystyle{siam}

\end{document}